\documentclass[12pt,reqno]{amsart}
\usepackage{amsaddr}

\usepackage{amsmath,amssymb,amsthm}
\usepackage{bm}
\usepackage{natbib}
\usepackage[dvipdfmx]{graphicx}
\usepackage{here,comment}
\usepackage{algorithm,algorithmic}
\usepackage{xcolor}
\usepackage{booktabs}
\usepackage{amsmath}
\usepackage{mathtools}
\usepackage{enumerate}
\usepackage[colorlinks=true, allcolors=blue]{hyperref}
\usepackage{fullpage}

\usepackage{tikz}
\usepackage{tcolorbox}
\tcbuselibrary{breakable, skins}

\newtheorem{theorem}{Theorem}
\newtheorem{corollary}[theorem]{Corollary}
\newtheorem{lemma}[theorem]{Lemma}
\newtheorem{proposition}[theorem]{Proposition}
\newtheorem{assumption}{Assumption}
\theoremstyle{definition}
\newtheorem{definition}{Definition}

\newtheorem*{ansatz}{Ansatz}

\newcommand{\R}{\mathbb{R}}
\newcommand{\N}{\mathbb{N}}

\newcommand{\mL}{\mathcal{L}}

\newcommand{\mN}{\mathcal{N}}

\newcommand{\Ep}{\mathbb{E}}

\renewcommand{\hat}{\widehat}
\renewcommand{\tilde}{\widetilde}

\DeclareMathOperator*{\plim}{p-lim}

\usepackage{color}

\usepackage{newtxtext,newtxmath}

\usepackage{mathtools}
\mathtoolsset{showonlyrefs=true}

\allowdisplaybreaks

\author{Shota Imai$^1$, Sota Nishiyama$^{1,2}$,  Masaaki Imaizumi$^{1,2}$}

\address{$^1$The University of Tokyo, Bunkyo, Tokyo, Japan.\\
$^2$RIKEN Center for Advanced Intelligence Project, Chuo, Tokyo, Japan}

\date{ \today}

  \title{Dichotomy of Feature Learning and Unlearning:\\ Fast-Slow Analysis on Neural Networks with Stochastic Gradient Descent}
  
\begin{document}

\maketitle

\begin{abstract}
 
The dynamics of gradient-based training in neural networks often exhibit nontrivial structures; hence, understanding them remains a central challenge in theoretical machine learning. In particular, a concept of \emph{feature \textbf{un}learning}, in which a neural network progressively loses previously learned features over long training, has gained attention. In this study, we consider the infinite-width limit of a two-layer neural network updated with a large-batch stochastic gradient, then derive differential equations with different time scales, revealing the mechanism and conditions for feature unlearning to occur. Specifically, we utilize the \textit{fast-slow dynamics}: while an alignment of first-layer weights develops rapidly, the second-layer weights develop slowly. The direction of a flow on a critical manifold, determined by the slow dynamics, decides whether feature unlearning occurs. We give numerical validation of the result, and derive theoretical grounding and scaling laws of the feature unlearning. Our results yield the following insights: (i) the strength of the primary nonlinear term in data induces the feature unlearning, and (ii) an initial scale of the second-layer weights mitigates the feature unlearning. Technically, our analysis utilizes Tensor Programs and the singular perturbation theory.

\end{abstract}

\section{Introduction}

\textbf{Background.}
Understanding the dynamics of gradient-based training in neural networks is a
central problem in modern machine learning.
Beyond static characterizations such as loss landscapes or stationary points,
it has become increasingly clear that many learning phenomena are inherently
\emph{dynamical}.
Especially, in high-dimensional regimes, self-averaging often enables a drastic simplification: 
the learning dynamics can be described by a small number of macroscopic order parameters.
Several theoretical frameworks make this reduction precise, i.e., the dynamical mean-field theory
\citep{bordelon2022self,celentano2021high}, the Tensor Programs
\citep{yang2019scaling,yang2019tensor1,yang2020tensor2,yang2021tensor2b}, and the generalized first-order method \citep{celentano2020estimation}.
Research on the learning dynamics of high-dimensional neural networks is rapidly advancing.

Key discoveries from analyzing dynamics include \textit{feature learning}, which refers to the process where shallow layers of neural networks learn the feature structures of data-generating models, explaining why multi-layer structures achieve better accuracy. These were analyzed in \citet{ba2022high, damian2022neural, moniri2024theory, yang2021tensor4}, demonstrating that neural networks trained with appropriate design can avoid the so-called lazy regime and achieve feature learning. 

In contrast, \textit{feature \textbf{un}learning} has been proposed as an important notion related to feature learning. 
Feature unlearning refers to the phenomenon where shallow layers of neural networks forget feature structures they have previously learned, and could serve as one theory explaining the mechanism of deep learning.
Particularly, \citet{montanari2025dynamical} studies a neural network updated by a gradient flow and identifies a pronounced separation of time scales in two-layer networks, together with regimes in which previously
learned features are progressively forgotten.
These results suggest that feature
unlearning is not a pathological effect, but rather can be understood as a generic consequence of multiple time scales in high-dimensional training regimes. 
However, research on these important concepts is still in its infancy, since the analytical framework is currently limited to updates via gradient flow, hence its underlying \emph{mechanism} remains incompletely understood.

\begin{figure*}[t]
    \centering
    \includegraphics[width=0.9\linewidth]{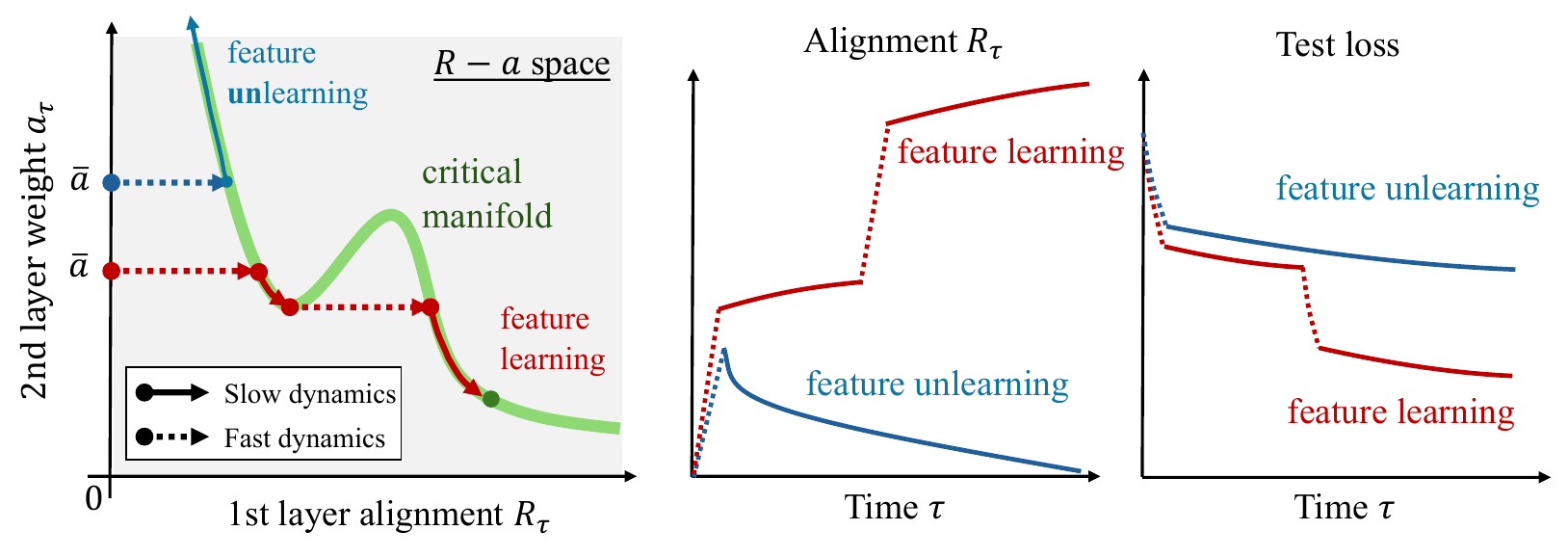}
    \caption{Fast-slow dynamics of first-layer alignment ($R_\tau$) and second-layer weights ($a_\tau$) in time $\tau$, explaining the evolution of alignment and test loss. In the space of $R_\tau$ and $a_\tau$, the $R-a$ space in the left panel, we find a \textit{critical manifold} (green curve). Each trajectory starts from its initial point $(0,\bar{a})$ and, after reaching the manifold, slowly evolves along it. On all panels, the red trajectory represents feature learning, while the blue trajectory represents feature unlearning. The solid line represents slow dynamics, and the dashed line represents fast dynamics. In the feature learning case, test loss decreases like a staircase when $|R_\tau|$ increases via the fast dynamic. In the feature unlearning case, $|R_\tau|$ initially increases, then converges to zero due to the slow dynamics on the manifold.
    }
    \label{fig:feature_outline}
\end{figure*}

\textbf{Motivation.}
The purpose of this study is to investigate whether feature unlearning occurs in a more general neural network setting, namely, updates via stochastic gradient descent (SGD) in discrete time, and to clarify whether it also occurs there. Furthermore, it aims to reveal more rigorously the underlying principles of time scales by which feature unlearning occurs.

\textbf{Approach.}
To this aim, we consider a neural network updated by one-pass SGD and derive a critical equation representing its dynamics.
Starting from the setup with data generated from a single-index teacher model, 
we use the Tensor Programs and infinite-width limit, then derive a deterministic continuous-time differential equation to describe macroscopic variables of a neural network.

We, then, introduce an ansatz that reveals a separation of time-scales of variables of the derived system, casting the dynamics into a singularly perturbed problem.
Although the ansatz is not assumed a priori at the level of SGD, numerical experiments show that it provides an accurate description of the observed dynamics.
This reformulation allows us to derive a system to represent the long-time behavior of the macroscopic variables of the neural network and to isolate the reduced dynamics.

\textbf{Results and implications.}
Our analysis shows that feature unlearning arises as a direct consequence of
the reduced slow dynamics along a \textit{critical manifold} induced from the derived system in the space of macroscopic variables.
Specifically, in the infinite-width limit, trajectories of the macroscopic variables rapidly collapse onto the critical manifold and subsequently drift along it over much longer time scales, leading to a decay of feature alignment under explicit and verifiable conditions.
We identify the structure of the reduced slow dynamics responsible for the onset of feature unlearning and derive the associated asymptotic scaling laws, thereby providing a concrete and quantitative dynamical mechanism for the phenomenon.
Figure \ref{fig:feature_outline} outlines the mechanism.

\textbf{Key novelty and approach.}
The main contributions of this work are summarized as follows:
\begin{itemize}
\item \emph{From discrete SGD to macroscopic dynamics}:
Using the Tensor Programs framework, we derive a closed low-dimensional
representation for online SGD and obtain, in the large-width limit, a deterministic
ordinary differential equation (ODE) as a natural limit of the discrete algorithm.

\item \emph{Emergent fast-slow structure}:
Numerical simulations of the limiting ODE reveal a clear separation of time
scales, with fast convergence to a low-dimensional attracting set followed by
slow evolution, referred to the fast-slow decomposition, justifying a singular-perturbation description.
Based on the observation, we develop a new system for the fast-slow and its theoretical analysis.

\item \emph{Feature unlearning as slow dynamics on the manifold}:
Under the fast-slow structure, we show that feature unlearning arises from the slow dynamics along the attracting critical manifold.
Additionally, the staircase dynamics of test loss is described.
Using the singular perturbation theory, we characterize the conditions for
unlearning and derive its asymptotic scaling law.
\end{itemize}

\subsection{Notation}
Throughout the paper, $\|\cdot\|_2$ denotes the Euclidean norm.
For a differentiable function $f:\mathbb{R}\to\mathbb{R}$, we write $f'$ and
$f''$ for its first and second derivatives, respectively.
For a multivariate function, $\nabla$ denotes the gradient.
A function $f:\mathbb{R}\to\mathbb{R}$ is said to be \emph{polynomially bounded}
if there exist constants $C>0$ and $k\ge0$ such that
$
|f(x)| \le C(1+|x|^k), \text{for all } x\in\mathbb{R}.
$
We use the standard Landau notations $O(\cdot)$, $o(\cdot)$, and $\Theta(\cdot)$ with their usual meanings.
All random variables are defined on a common probability space, and
convergence in probability is denoted by $\plim$.

\section{Setup}

\subsection{Online supervised learning}
We study the supervised learning problem with an online learning setup.
Suppose that there exists a random variable $(y, \bm{x}) \in \R \times \R^d$ with a dimension $d \in \N$, which is characterized by the following model, referred to as a teacher model, with a function $f_\star:\R^d \to \R$ as
\begin{align} \label{eq:model}
    \bm{x} \sim \mN(0,I_d), \mbox{~and~}y = f_\star(\bm{x}) + \varepsilon,
\end{align}
where $\varepsilon \sim \mN(0,\sigma_\varepsilon^2)$ is an independent noise variable with its variance $\sigma_\varepsilon^2 > 0$.
Here, $\bm{x}$ is a feature vector and $y$ is a response variable.
The specific form of $f_\star$ will be formulated later. We assume that we observe data generated from this model \eqref{eq:model} under the online learning setup described below.

We consider a two-layer neural network model as a model to be trained, referred to as a student model.
Let $m \in \N$ be a width of the neural network and \( \bm{W} = (\bm{w}_1,\dots, \bm{w}_m) \in \mathbb{R}^{d \times m} \) and  \( \bm{a} = (a_1,\dots, a_m)^\top \in \mathbb{R}^m \) be the first and second layer weights, respectively.
Then, we study the neural network $f(\cdot\ ; \bm{a}, \bm{W}):\mathbb{R}^d \to \mathbb{R}$ with an input $\bm{x} \in \mathbb{R}^d$ as
\begin{align} \label{eq:neural_network}
    f(\bm{x}; \bm{a}, \bm{W}) = \frac{1}{m}\sum_{i=1}^m a_i \sigma(\langle \bm{w}_i, \bm{x} \rangle / \sqrt{d}),
\end{align}

We train the neural network model by the online learning.
In this setup, for each time $t \in \N$,  we observe a set of $n$ pairs of responses and feature vectors $\{ (y_i^t, \bm{x}_i^t) \}_{i=1}^n$, which is independent copy of $(y,\bm{x})$ from the model \eqref{eq:model}. We call the set as a batch and $n$ as a batch size. 
With the batch at time $t \in \N$, we define an empirical quadratic loss as 
\begin{align}
    \mL_t(\bm{a},\bm{W})= \frac{1}{2 n}\sum_{i=1}^n \big( y_i^t - f(\bm{x}_i^t; \bm{a}, \bm{W}) \big)^2.
\end{align}
Then, we conduct one-pass stochastic gradient descent (SGD) to update the parameters of the neural network model. Specifically, with an initialization $\bm{a}^0$ and $\bm{W}^0$, the one-pass SGD generates sequences $\bm{a}^1,\bm{a}^2,...$ and $\bm{W}^1,\bm{W}^2,...$ by the following recursive form:
\begin{align} \label{def:sgd_w}
    \bm{w}_i^{t + 1} &= \frac{\sqrt{d}}{\| \tilde{\bm{w}}_i^{t+1} \|_2}\tilde{\bm{w}}_i^{t+1} ,\quad \tilde{\bm{w}}_i^{t+1} = \bm{w}_i^t - \gamma d \nabla_{\bm{w}_i} \mathcal{L}_t(\bm{a}^t, \bm{W}^t), 
\end{align}
for $i=1,\dots, m$ and 
\begin{align} \label{def:sgd_a}
    \bm{a}^{t+1} &= \bm{a}^t - \gamma  \nabla_{\bm{a}} \mathcal{L}_t(\bm{a}^t, \bm{W}^t),
\end{align}
where $\gamma>0$ is a fixed learning rate.
Here, we added a normalization step on the first layer updates to guarantee that each $\bm{w}_i^t$ satisfies $\|\bm{w}_i^t\|_2 = \sqrt{d}\ (i=1,\dots,m)$ throughout training. Following \citet{montanari2025dynamical}, for theoretical convenience, here we introduced normalizing steps on the first layer updates \eqref{def:sgd_w}.

\subsection{Conditions}

Our theoretical analysis relies on several conditions. 
First, we provide the asymptotic settings for the batch size and the data dimension. This regime is common in theoretical analysis of neural networks, e.g., \citet{celentano2020estimation, celentano2021high, dandi2024two}.
\begin{assumption}[Proportionally high-dimension regime]\label{ass:high_dimension}
Both the batch size \( n \) and the feature dimension \( d \) diverge to infinity while preserving ${n}/{d} \to \delta$ with some $\delta \in (0,\infty)$.
\end{assumption}

\subsubsection{Teacher model}

We assume that the teacher model has the form of the single-index model. In particular, we introduce a specific form of $f_\star$ in \eqref{eq:model}:
\begin{assumption}[Single-index teacher]\label{ass:sgm}
$f_\star: \R^d \to \R$ in \eqref{eq:model} has the following form:
\begin{align}
    f_\star(\bm{x}) = \sigma_\star\left(\langle \bm{w}_\star, \bm{x} \rangle / \sqrt{d}\right),
\end{align}
with an unknown link function $\sigma_\star: \R \to \R$  and a teacher vector
$\bm{w}_\star \sim \mathcal{N}(0, \bm{I}_d)$.
\end{assumption}
The assumption of a single index as the teacher model is common in feature learning (\citet{ba2022high, moniri2024theory}). The division by $\sqrt{d}$ in the input is necessary to maintain the variance of $\langle \bm{w}_\star, \bm{x} \rangle$ at a constant order even when $d$ diverges.

Next, we consider a property for the link function $\sigma_\star$ in Assumption \ref{ass:sgm}. 
In preparation, we define the Hermite polynomial on $\R$. For $k \geq 1$, we define the $k$-th order Hermite polynomial as $H_0=1, H_1(x) = x, H_2(x) = x^2-1$, and generally
\begin{align} \label{def:Hermite}
    H_k(x) = (-1)^k  \exp(x^2/2) \frac{d^k}{dx^k} \exp(-x^2/2), ~~ x \in \R,
\end{align}
which forms an orthogonal basis in the $L^2$-space.
Then, we introduce the following condition:
\begin{assumption}[Degree of link function]\label{ass:link}
A derivative $\sigma_\star'$ of $\sigma_\star$ exists and both $ \sigma_\star $ $ \sigma_\star' $
are all polynomially bounded. 
Also, we let \( \sigma_\star \) have the following Hermite expansion in $L^2$ with $z\sim\mathcal{N}(0, 1)$:
\begin{align}
&\sigma_\star(\cdot) = \sum_{k=1}^\infty c_{\star,k} H_k(\cdot), \quad
c_{\star, k} = \frac{1}{k!}\mathbb{E}[\sigma_\star(z)H_k(z)].
\end{align}
\end{assumption}
This assumption is commonly used in feature learning for neural networks employing single indices, e.g., \citet{bietti2022learning,damian2022neural,ba2022high,cui2024asymptotics,dandi2024two}.
Given $\sigma_\star (\cdot)$, we define a simple vector of the coefficients $c_\star := (c_{\star,1},c_{\star,2},...,c_{\star,\bar{k}_\star})$ with $\bar{k}_\star := \max\{k : c_{\star,k} \neq 0\} $.

\subsubsection{Student model and training process}

We introduce conditions for the neural network $f(\cdot,; \bm{a},\bm{W})$ and their algorithms that are the subjects of training. The first concerns the activation function $\sigma: \R \to \R$:
\begin{assumption}[Degree of activation]\label{ass:activation}
Derivatives $\sigma'$ and $\sigma''$ of $\sigma$ exist, and 
\( \sigma,  \sigma',  \sigma'' \) are all polynomially bounded. 
Also, we let \( \sigma \) has the following Hermite expansion in $L^2$ with $z\sim\mathcal{N}(0, 1)$:
\begin{align}
&\sigma(\cdot) = \sum_{k=1}^\infty c_k H_k(\cdot), \quad c_{k} = \frac{1}{k!}\mathbb{E}[\sigma(z)H_k(z)].
\end{align}
We further assume that $c_{\star, 1} c_1 > 0$ holds, and there exists some $k \ge 2$, such that $c_{\star, k} c_k \ne 0$.
\end{assumption}
Given $\sigma (\cdot)$, we define a simple vector of the coefficients $c := (c_{1},c_{2},...,c_{\bar{k}})$ with $\bar{k} := \max\{k : c_{k} \neq 0\} $.

This condition is analogous to the condition introduced for $\sigma_\star$ in Assumption \ref{ass:link}. Such characterizations of link functions are also common in recent neural network theory (\citet{ba2022high, moniri2024theory}).
Regarding condition $c_{\star, 1} c_1 > 0$, analysis is similarly possible when $c_{\star, 1} c_1$ is negative. However, since symmetry yields only similar results, we avoid unnecessary redundancy in the analysis by focusing on this case.
The condition $c_{\star, k} c_k \ne 0$ for some $k \ge 2$ is formal and necessary for the analysis to properly handle the nonlinearity of the teacher and student models.

We introduce conditions for the initial values $\bm{a}^0$ and $\bm{W}^0$ for the SGD for online learning. Here, we utilize the symmetric initialization:
\begin{assumption}[Symmetric initialization] \label{ass:symmetric_init}
    The initialization $\bm{a}^0$ and $\bm{W}^0$ are set as follows:
\begin{align}
a_i^0 &= \bar{a} > 0, \quad
\bm{w}_i^0 \overset{\text{i.i.d.}}{\sim} \mathcal{N}(0,\bm{I}_d),
\quad (i=1,\dots,m).
\end{align}
\end{assumption}
This initialization is utilized by the seminal work \citet{montanari2025dynamical} for the feature unlearning, where the second layer weights are initialized as the same constant.
This scheme reduces the number of substantial order parameters and helps to obtain effective low-dimensional expressions. 
It is also possible to analyze a case with $\bar{a} < 0$; we focus on the positive $\bar{a}$ to simplify our analysis.

\section{Preparation: Reduction to fast-slow dynamics}

We derive equations described by low-dimensional variables that illustrate how neural networks update through online learning. Specifically, our goal is to derive \textit{singularly perturbed equations} capable of describing \textit{fast-slow dynamics}, which reveals a
separation of time scales in the resulting ODE system.

\subsection{ODE of macroscopic variables}

\subsubsection{Macroscopic variables of neural networks}
We introduce \textit{macroscopic variables} to obtain a tractable and low-dimensional description of neural networks by SGD. 
For $t \in \N$ and $i, j = 1,\dots, m$ with $i \ne j$, we define 
\begin{align} \label{def:macro_nn}
    R_i^m(t) &:= \plim_{n, d \to \infty}\frac{1}{d}\bm{w}_\star^\top \bm{w}_i^t,\mbox{~~and~~} 
    a_i^m(t) := \plim_{n, d\to\infty} a_i^t.
\end{align}
$R_i^m(t)$ measures the teacher-student alignment between the $i$-th weight vector and the teacher vector $\bm{w}_\star^\top$, and $a_i^m(t)$ is a scale of the $i$-th element of the second layer weight.

\subsubsection{ODE with infinite-width limit}

We next convert the difference equation to an ODE with continuous time, by considering the infinite-width limit $m \to \infty$ while the learning rate $\gamma$ is fixed. In this regime, the discrete dynamics becomes the following ODE system with continuous time $\tau = \gamma t / m \in \mathbb R_+$.

We introduce macroscopic variables $R_\tau$ and $a_\tau$ for $\tau \in \R_+$, which are continuous-time analogy of $R_i^m(t)$ and $a_i^m(t)$, respectively. 
Further, we define coefficient functions $S,T: \R \to \R$ as
\begin{align}
    S(z) = \sum_{k=1}^\infty k! c_{\star, k}c_kz^k,\quad 
    T(z) = \sum_{k=1}^\infty k! c_k^2 z^{2k}.
\end{align}
Then, we define the following ODE:
\begin{tcolorbox}
[boxrule=0mm]
    \textbf{ODE of macroscopic variables}: We define an ODE with $\{R_\tau, a_\tau\}_{\tau \in \R_+}$, as
\begin{equation}\label{eq:original}
    \begin{aligned}
        \frac{dR_\tau}{d\tau}&= \underbrace{ \frac{1}{2} a_\tau (1 - R_\tau^2) \{ 2 S'(R_\tau) - a_\tau T'(R_\tau) \}}_{ =: f(R_\tau, a_\tau) }, 
        \\ 
        \frac{da_\tau}{d\tau}&= \underbrace{ S(R_\tau) - a_\tau T(R_\tau) }_{=: g(R_\tau, a_\tau)},
    \end{aligned}
\end{equation}
    with initialization $R_0 = 0$ and $a_0 = \bar{a} > 0$.
\end{tcolorbox}
This equation allows the dynamics of neural networks with online SGD to be described by a two-variate ODE. There are already several works on representing discrete online SGD via ODE (\citet{goldt2019dynamics, collins2024hitting}).
However, while previous studies derive ODE descriptions by taking a
high-dimensional limit $d\to\infty$, we obtain the ODE by
first considering the joint limit $n,d\to\infty$ and subsequently taking the
infinite-width limit $m\to\infty$.

\subsubsection{Validation of ODE}

We prove the equivalence between the macroscopic variables of neural networks in \eqref{def:macro_nn}  and the ODE \eqref{eq:original} as follows.

\begin{proposition}\label{prop:ode_validation}
    Let $R_{\tau,i}^m := R_i^m(\lfloor m \tau / \gamma \rfloor),   a_{\tau,i}^m := a_i^m(\lfloor m \tau / \gamma\rfloor)$. Then, for any finite $\tau \ge 0$ and $i = 1,...,m$, $R_\tau, a_\tau$ satisfying the ODE \eqref{eq:original} satisfies the following asymptotic equalities
    \begin{align}
        \lim_{m\to\infty} R_{\tau,i}^m = R_\tau, \mbox{~~and~~} \lim_{m\to\infty} a_{\tau,i}^m = a_\tau.
    \end{align}
\end{proposition}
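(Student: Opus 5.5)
The plan has two stages. First, for fixed width $m$, take $n,d\to\infty$ with $n/d\to\delta$ and obtain an exact deterministic recursion for the overlaps $R_i^m(t)$, $a_i^m(t)$ and the student--student overlaps $Q_{ij}^m(t):=\plim \frac1d \bm w_i^{t\top}\bm w_j^t$. Second, let $m\to\infty$ with $\tau=\gamma t/m$ and show that this recursion converges to the ODE \eqref{eq:original}.

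\textbf{Step 1 (Tensor Programs recursion).} The update \eqref{def:sgd_w}--\eqref{def:sgd_a} will be written as a Tensor Program: the fresh batch $\bm X^t\in\R^{n\times d}$ is the random matrix, and the preactivations $\bm X^t\bm w_i^t/\sqrt d$, $\bm X^t\bm w_\star/\sqrt d$ are the vectors. Since each batch is fresh, the Master Theorem (or, equivalently, Gaussian conditioning on the past) says the preactivations at step $t$ are jointly Gaussian with covariance given by $(1,R_i,Q_{ij})$. The gradient $d\nabla_{\bm w_i}\mathcal L_t=-\frac{\sqrt d}{nm}\sum_{\ell}a_i r_\ell\sigma'(u_{\ell i})\bm x_\ell$, where $r_\ell$ is the residual, is then projected onto $\bm w_\star$ and $\bm w_j$. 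By Stein's lemma this projection reduces to Gaussian expectations, which the Hermite expansions (Assumptions~\ref{ass:link}, \ref{ass:activation}) evaluate as $S'(R_i)$ and $\sum_j a_j\,\partial_{Q}$-type terms of $T$. The projection also leaves a fluctuation part whose squared norm contributes $O(\gamma^2/(m^2\delta))$ to $\|\tilde{\bm w}_i\|^2/d$. The normalization step expands as $1-\frac12(\text{increment of norm})$, and this produces the factor $(1-R^2)$ together with the curvature corrections. The result is a deterministic map
\[
(R^m(t+1),a^m(t+1),Q^m(t+1)) = (R^m(t),a^m(t),Q^m(t)) + \tfrac{\gamma}{m}F_m(\cdot)+O(\gamma^2/m^2).
\]
The step size here is $\gamma/m$ for $\bm w_i$, because of the $1/m$ in $f$ and the $d$ scaling. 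The second layer is handled the same way: $\nabla_{a_i}\mathcal L_t=-\frac1{m}\frac1n\sum_\ell r_\ell\sigma(u_{\ell i})$ concentrates on $\frac1m[\,\Ep\sigma_\star\sigma-\sum_j a_j\Ep\sigma(u_i)\sigma(u_j)\,]$.

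\textbf{Step 2 (symmetry closure).} Under Assumption~\ref{ass:symmetric_init} all neurons are exchangeable and have identical deterministic recursions. I will show by induction that $R_i^m(t)=R^m(t)$ and $a_i^m(t)=a^m(t)$ are independent of $i$. For $i\neq j$, the components of $\bm w_i,\bm w_j$ orthogonal to $\bm w_\star$ start asymptotically orthogonal and receive only common drift along $\bm w_\star$. So $Q_{ij}=R^2+o(1)$ as $m\to\infty$, and the interaction term satisfies $\Ep\sigma(u_i)\sigma(u_j)=\sum_k k!c_k^2R^{2k}=T(R)$ and similarly for the derivative, giving $T'(R)$. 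I expect this to be the main obstacle: the orthogonal residuals $\bm w_i-R\bm w_\star$ for different $i$ are driven by shared batch noise, so $Q_{ij}-R^2$ must be controlled. I would try to bound it by showing that these cross terms are $O(\gamma/m)$ per step with summable accumulation over $m\tau/\gamma$ steps, or possibly by adding $Q$ as a tracked variable and proving that it stays on the invariant set $Q=R^2$ of the limit.

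\textbf{Step 3 (Euler limit).} With the closure in hand, the recursion is an Euler scheme with step $h=\gamma/m$ for the vector field $(f,g)$ of \eqref{eq:original}, plus errors of size $o(h)$ uniformly on compact sets. The functions $f$ and $g$ are polynomial in $R$ through $S$ and $T$, hence locally Lipschitz, and the trajectory stays bounded on $[0,\tau]$ since $|R|\le1$ and $a$ obeys a linear ODE with bounded coefficients. The standard discrete Gr\"onwall argument then gives $\sup_{s\le\tau}|R^m_{s,i}-R_s|+|a^m_{s,i}-a_s|\to0$, which proves Proposition~\ref{prop:ode_validation}.
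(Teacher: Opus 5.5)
Your plan is essentially the paper's proof. The paper derives a Tensor Programs recursion in $(R^m,Q^m,a^m)$ with $O_m(m^{-2})$ remainders, proves neuron symmetry by induction, and takes the Euler-scheme limit of the three-variable system; it then closes exactly as in your second option, by showing that $U_\tau=Q_\tau-R_\tau^2$ satisfies $dU_\tau/d\tau=-2U_\tau V_\tau$ for a bounded $V_\tau$, so $U_0=0$ forces $U\equiv0$. Your first option, as phrased, would not suffice, because an $O(\gamma/m)$ per-step bound accumulates to $O(\tau)$ over $m\tau/\gamma$ steps. It goes through only because the same computation shows that the $O(\gamma/m)$ part of the increment of $Q^m-(R^m)^2$ is proportional to $Q^m-(R^m)^2$, leaving an $O(m^{-2})$ remainder that sums to $O(1/m)$.
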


The proof is in Section \ref{sec:validation_ode}. We prove this equivalence by mediating the difference equation using the Tensor Program.

Also, in Section \ref{sec:alt_derivation}, we derive the same ODE by an alternative approach of analyzing the population gradient.

\subsection{Fast-slow reformulation}

\subsubsection{Empirical observation of multi time scale}

We numerically solve the ODE \eqref{eq:original} and study the dynamics of $( R_\tau, a_\tau)_{\tau \in \R_+}$ as shown in Figure~\ref{fig:timescale}.
Then, we observe a pronounced separation of time scales: $R_\tau$ rapidly changes over a short initial time interval, while $a_\tau$ remains nearly constant.
After this fast transient, $R_\tau,  a_\tau$ together evolve much more slowly.
This behavior is consistently observed for different conditions.
\begin{figure}[htbp]
  \begin{minipage}{0.49\linewidth}
    \centering
    \includegraphics[width=\linewidth]{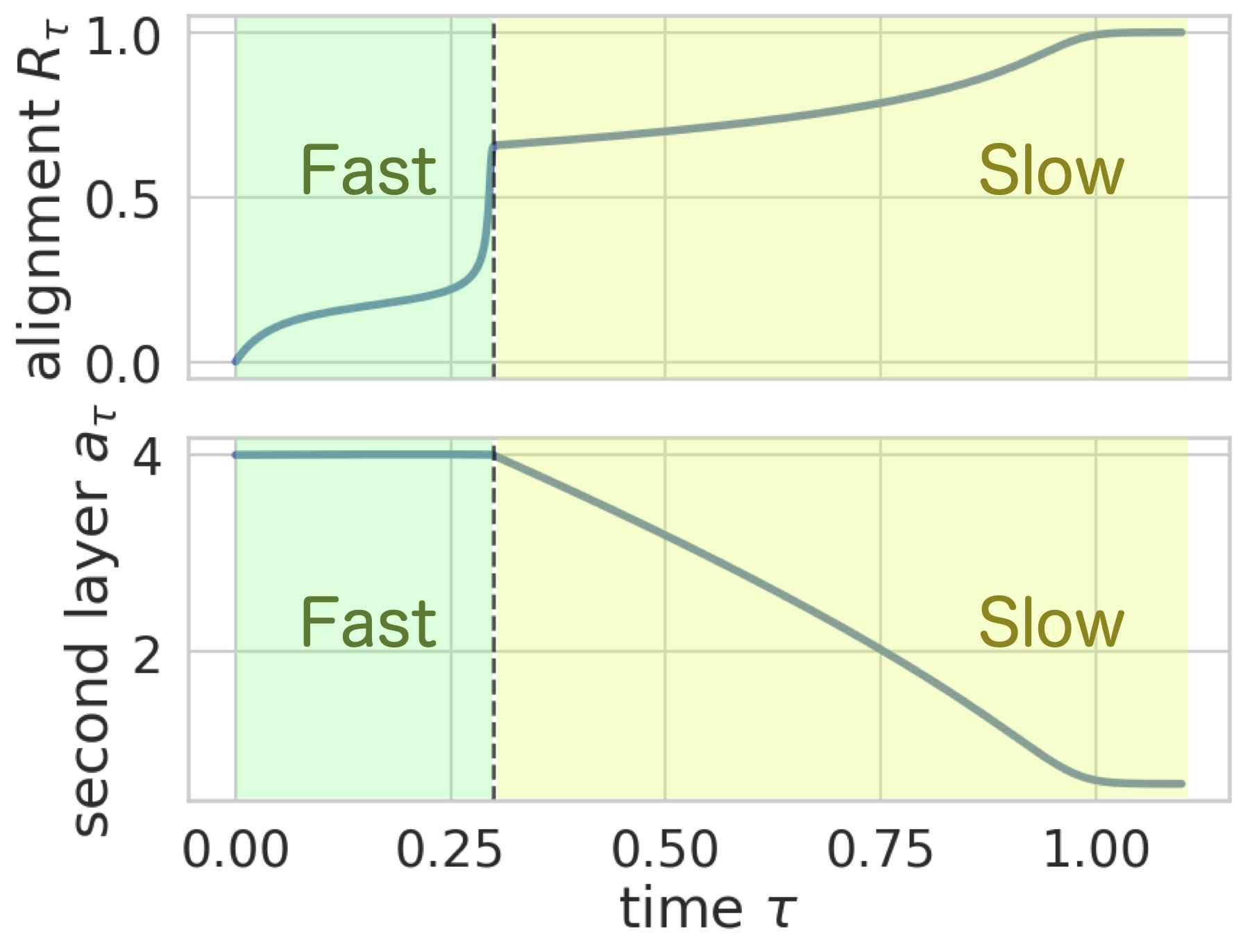}
    \label{fig:time2}
  \end{minipage}
  \hfill
  \begin{minipage}{0.49\linewidth}
    \centering
    \includegraphics[width=\linewidth]{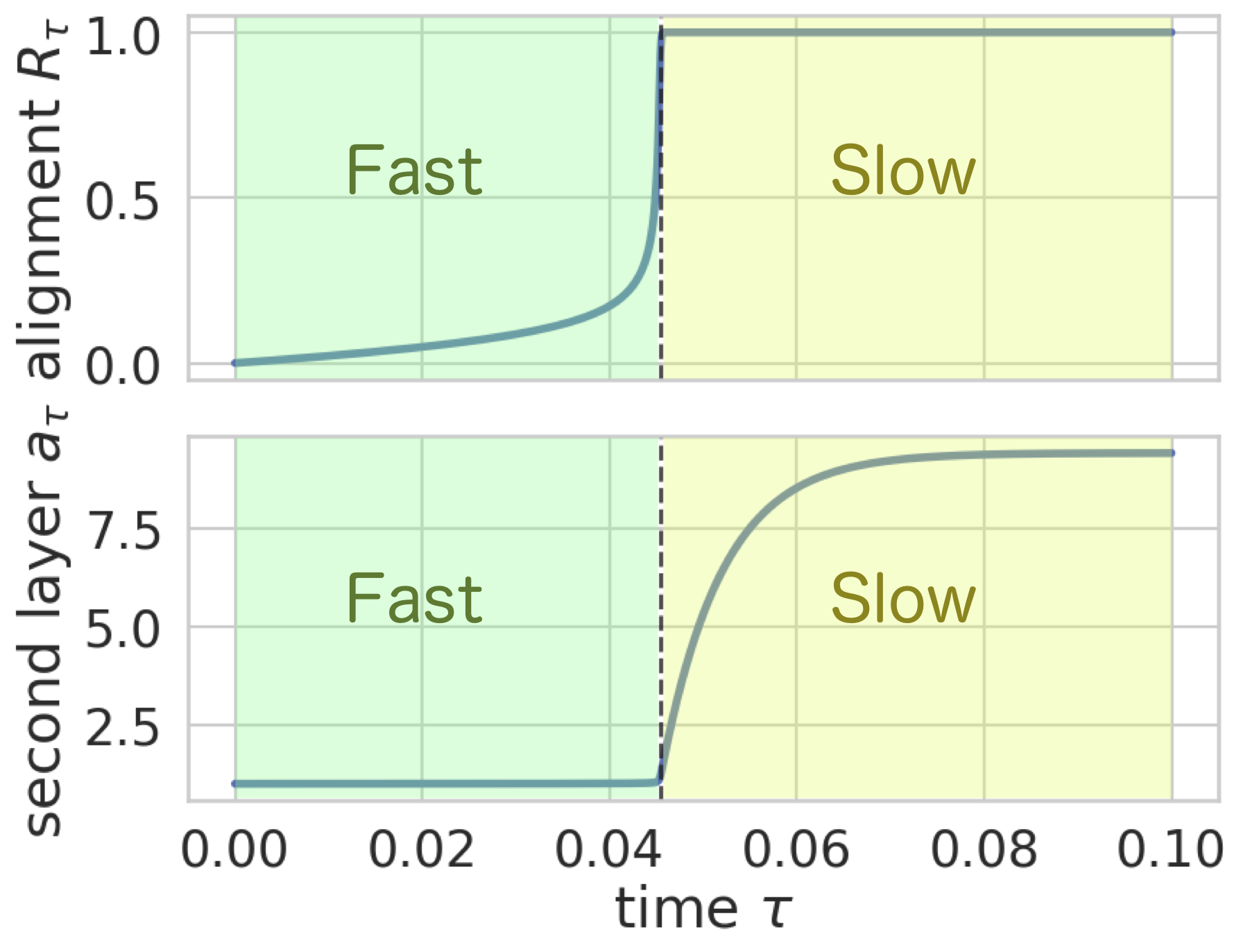}
    \label{fig:time3}
  \end{minipage}
  \caption{Multi time-scale appears in numerical simulations of \eqref{eq:original}. In the early stage of the dynamics, $R_\tau$ quickly moves away from $0$, while $a_\tau$ stays around the initial value $\bar a$. We set $\bar{k}_\star = \bar{k} = 5$ and $c = (1, 1, 1, 1, 1)$, and also set $c_{\star} = ( 1, -1, 1, -1, 1 ), \bar{a} = 4$  (left), and $c_\star = ( 2,4,6,8,10 ), \bar{a} = 1$ (right).
 \label{fig:timescale}}
\end{figure}

This difference in timescales can be explained by several insights. First, since $R_\tau$ is zero at the initial stage, the update to $a_\tau$ is nearly zero, so only $R_\tau$ is updated initially. Second, when $R_\tau$ is small, we can observe that the eigenvalues of the Jacobian of the ODE concentrate on two peaks, with the peak in the direction of updating $R_\tau$ having a larger scale. In this situation, $R_\tau$ is updated preferentially.
We will provide more quantitative discussion on this point in Section \ref{sec:origin_fast_slow}.

\subsubsection{Fast-Slow Ansatz}

Motivated by numerical observations indicating that the dynamics of
\eqref{eq:original} exhibits a pronounced separation of time scales, we
formalize this behavior by introducing two time-scales.
Rather than postulating a small parameter at the level of the vector field, we identify representative fast and slow time-scales directly from the observed
dynamics.

We introduce an ansatz for reducing \eqref{eq:original} to a singularly perturbed system. 
In preparation, we define $\lambda_f(\tau),\lambda_s(\tau) \in \R$ as eigenvalues of a Jacobian $\nabla f(R_\tau,a_\tau) $ along a solution $(R_\tau,a_\tau)$ with $\lambda_f(\tau)  \geq \lambda_s(\tau)$, corresponding to the fast and slow dynamics.
Also, $\bm v_f(\tau) \in \R^2$ denotes a normalized eigenvector corresponding to $\lambda_f(\tau)$.

\begin{ansatz}[Fast-slow dynamics]
    Given a time horizon $T$, the followings holds:
    \begin{enumerate}
  \setlength{\parskip}{0cm} 
  \setlength{\itemsep}{0cm} 
        \item (\textit{Scale separation}) Time-averaged fast/slow eigenvalues, defined as
\begin{equation}
\textstyle \Lambda_f := T^{-1}\int_0^T |\lambda_f(\tau)| d\tau \mbox{~~and~~}
\Lambda_s :=T^{-1}\int_0^T |\lambda_s(\tau)| d\tau,
\label{eq:Lambda_def}
\end{equation}
satisfy $\varepsilon := {\Lambda_s} / {\Lambda_f} \ll 1$.
        \item (\textit{Direction stability}) The eigenvector $\bm v_f(\tau)$ is well aligned to $\bm e_R = (1,0)^\top$ through time, that is, we have
\begin{equation}
\textstyle T^{-1}\int_0^T \bm v_f(\tau)^\top \bm e_R   d\tau \approx 1.
\label{eq:dir_align}
\end{equation}
    \end{enumerate}

\end{ansatz}

We can rigorously show the first point on the scale separation for small values of $|R|$: see Appendix~\ref{sec:origin_fast_slow} for details.

Such an ansatz is standard in the analysis of multi-scale dynamical
systems and has been widely used across applied mathematics and theoretical
physics.
In the context of learning dynamics, closely related approaches have been
employed \citep{jelbart2022process, montanari2025dynamical, berthier2024learning,
nishiyama2025precise}.

Based on the fast-slow ansatz above, we rewrite \eqref{eq:original} in terms of
the slow time variable $\tau_s := \Lambda_s\tau$ and $\varepsilon > 0$.
\begin{tcolorbox}[boxrule=0mm]
\textbf{Singularly perturbed system}: 
    We define $\{R_{\tau_s}^\varepsilon, a_{\tau_s}^\varepsilon\}_{\tau_s\in\mathbb
    R_+}$ by $R_{\tau_s}^\varepsilon:=R_\tau$ and
    $a_{\tau_s}^\varepsilon:=a_\tau$ as
    \begin{equation}
    \begin{aligned}
    \varepsilon \frac{dR_{\tau_s}^\varepsilon}{d\tau_s}
    &=
    f(R_{\tau_s}^\varepsilon, a_{\tau_s}^\varepsilon) / \Lambda_f
    =: \bar f(R_{\tau_s}^\varepsilon, a_{\tau_s}^\varepsilon),\\
    \frac{da_{\tau_s}^\varepsilon}{d\tau_s}
    &=
    g(R_{\tau_s}^\varepsilon, a_{\tau_s}^\varepsilon) / \Lambda_s
    =: \bar g(R_{\tau_s}^\varepsilon, a_{\tau_s}^\varepsilon),
    \end{aligned}
    \label{eq:singular}
    \end{equation}
    with initial conditions $R_0^\varepsilon=0$ and $a_0^\varepsilon=\bar a$.
\end{tcolorbox}

In this formulation, the variable $R$ evolves on the fast time scale $\Lambda_f \tau$,
while $a$ evolves on the slow time scale $\Lambda_s \tau$.
This regime differs from that considered in \citet{berthier2024learning}, where
the second-layer weights are assumed to evolve on a faster time scale than the
first-layer weights.

We view
$\varepsilon$ as an independent small parameter and consider limits 
$\varepsilon\to +0$.
Following the theory, we define the following limiting values:
\begin{align}
    a_{\tau_s}^0 :=
        \lim_{\varepsilon\to +0}a_{\tau_s}^\varepsilon, \mbox{~~and~~}R_{\tau_s}^0 :=
        \lim_{\varepsilon\to +0}R_{\tau_s}^\varepsilon.
\end{align}

\section{Feature unlearning as slow flow} \label{sec:unlearning}

We present the feature unlearning phenomenon by numerical analysis on the derived models.
While conceptual and theoretical analysis require the singularly perturbed system \eqref{eq:singular}, we utilize the ODE \eqref{eq:original} for numerical analysis because of its feasibility.

\subsection{Feature learning and critical manifold}

We first define the feature learning in the sense of the dynamics of the alignment $R_{\tau_s}^0$.

\begin{definition}[Feature unlearning] \label{def:feature_unlearning}
    We say that a neural network system follows the \textit{feature unlearning}, if the variable for the alignment $\{R_{\tau_s}^0\}_{\tau_s \in \R_+}$ satisfies the following:
    there exists a constant $\bar{c} > 0$ and finite $\bar{\tau}$ such that we have
    \begin{align}
        \max_{\tau_s \in (0,\bar{\tau})} |R_{\tau_s}^0 | = \bar{c}, \mbox{~~and~~} \lim_{\tau_s \to \infty} |R_{\tau_s}^0| =0.
    \end{align}
\end{definition}
In contrast, when $\lim_{\tau_s \to \infty} |R_{\tau_s}^0|$ is lower bounded by a strictly positive constant, we say that the neural network achieves \textit{feature learning}.
This definition of feature unlearning implies that the first-layer of a neural network aligns to the teacher vector $\bm{w}_\star$ as a feature at an early stage, and then the learned feature may be lost as training progresses. This definition conceptually follows  \citet{montanari2025dynamical}. 

Further, we formally define a manifold in the $R-a$  space,  here termed a \textit{critical manifold}, to which $\{R_{\tau_s}^0, a_{\tau_s}^0\}_{\tau_s \in \R_+}$ stays close in slow time.
\begin{definition}[Critical manifold] \label{def:critical_manifold}
    We define a critical manifold $\mathcal{S}$ by $\bar{f}(\cdot,\cdot)$ in the singularly perturbed system \eqref{eq:singular} as
\begin{align}
    \mathcal{S}:=\{(R,a) \in [-1,1] \times \R \mid \bar{f}(R,a)=0\}.
\end{align}
\end{definition}
As illustrated in Figure \ref{fig:feature_outline} and subsequent numerical analysis, $\mathcal{S}$ becomes a continuously smooth one-dimensional manifold.
Note that $R_{\tau_s}^0$ takes a value only within $[-1,1]$ by the form of ODE and the normalization of the online SGD \eqref{def:sgd_w}, hence it is sufficient to consider $R \in [-1,1]$.

\subsection{Slow flow on the critical manifold} \label{sec:feature_emp}

We analyze the the dynamics of the macroscopic variables $(R_\tau, s_\tau)$ in the space or $R$ and $a$, i.e. an $R-a$  space.
Numerically, we utilize the ODE \eqref{eq:original} as a proxy of the singularly perturbed system \eqref{eq:singular}, which is a common approach for the singular perturbation theory \citep{hek2010geometric} and its application to machine learning \citep{serino2025fast,patsatzis2024physics}.

The result, illustrated in Figure \ref{fig:geometry}, reveals that there are two types of dynamics.
In the fast time scale, the alignment $R_{\tau}$ evolves rapidly while $a_{\tau}$ remains effectively frozen, and trajectories are attracted to the critical manifold $\mathcal{S}$.
Then, on longer time scales, the evolution is governed by a reduced slow flow along $\mathcal{S}$.
Direction of the slow flow on $\mathcal{S}$ is determined by the link $\sigma(\cdot)$ and activation$\sigma_\star(\cdot)$, and if there are unstable points on $\mathcal{S}$, the direction changes there.
Even in dynamics on $\mathcal{S}$, some trajectories exhibit that they left $\mathcal{S}$ and only $R_\tau$ evolves independently of $\mathcal{S}$. In such cases, the slow dynamics resumes upon reaching $\mathcal{S}$ again.

By further analysis on the longer time scale, we can observe two types of trajectories: \textbf{(I) one converges to a finite point} $\lim_{\tau \to \infty} (R_\tau,a_\tau) \to (R',a') \in \mathcal{S}$, and \textbf{(II) one diverges with zero alignment}, i.e. it behaves as $\lim_{\tau \to \infty} (R_\tau,a_\tau) \to (0,\pm\infty)$. 
Which trajectory appears depends on which point on the critical manifold $\mathcal{S}$ is reached on the fast time scale.

Consequently, the trajectory (II), which has diverging dynamics on $\mathcal S$, exhibits feature unlearning.
Specifically, along certain branches the reduced dynamics makes $|R_\tau| > 0$ once and afterwards drives $R_\tau$ gradually toward zero. 
This behavior corresponds to a progressive loss of alignment $R_{\tau_s}^0$, and can be naturally interpreted as
feature unlearning emerging from the slow drift along the attracting branch. 
We give more plots with different choice of $\sigma(\cdot)$ and $\sigma_\star(\cdot)$ in Section \ref{sec:additional_simulation}.

\begin{figure}[htbp]
  \centering
  \begin{minipage}{0.49\linewidth}
    \centering
    \includegraphics[width=\linewidth]{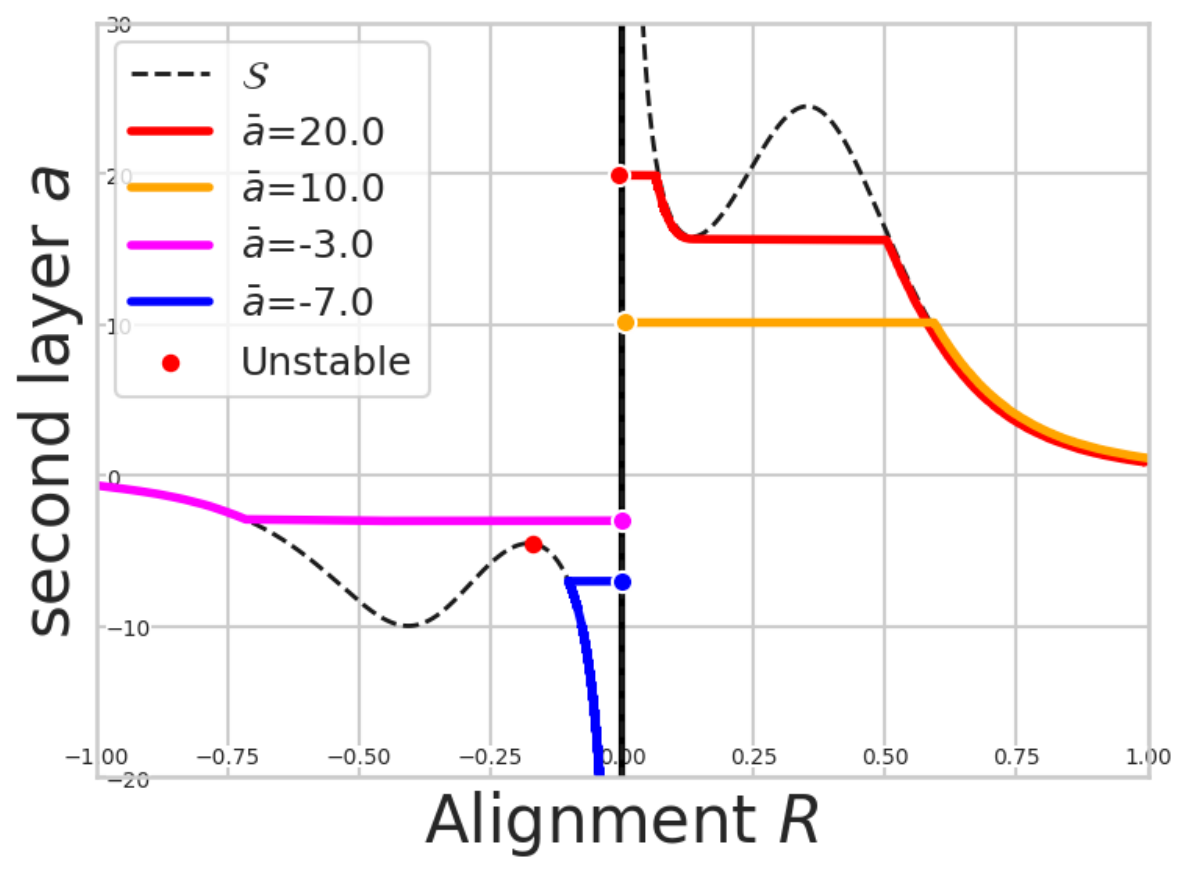}
    \label{fig:geometry1}
  \end{minipage} 
  \hfill
  \begin{minipage}{0.49\linewidth}
    \centering
    \includegraphics[width=\linewidth]{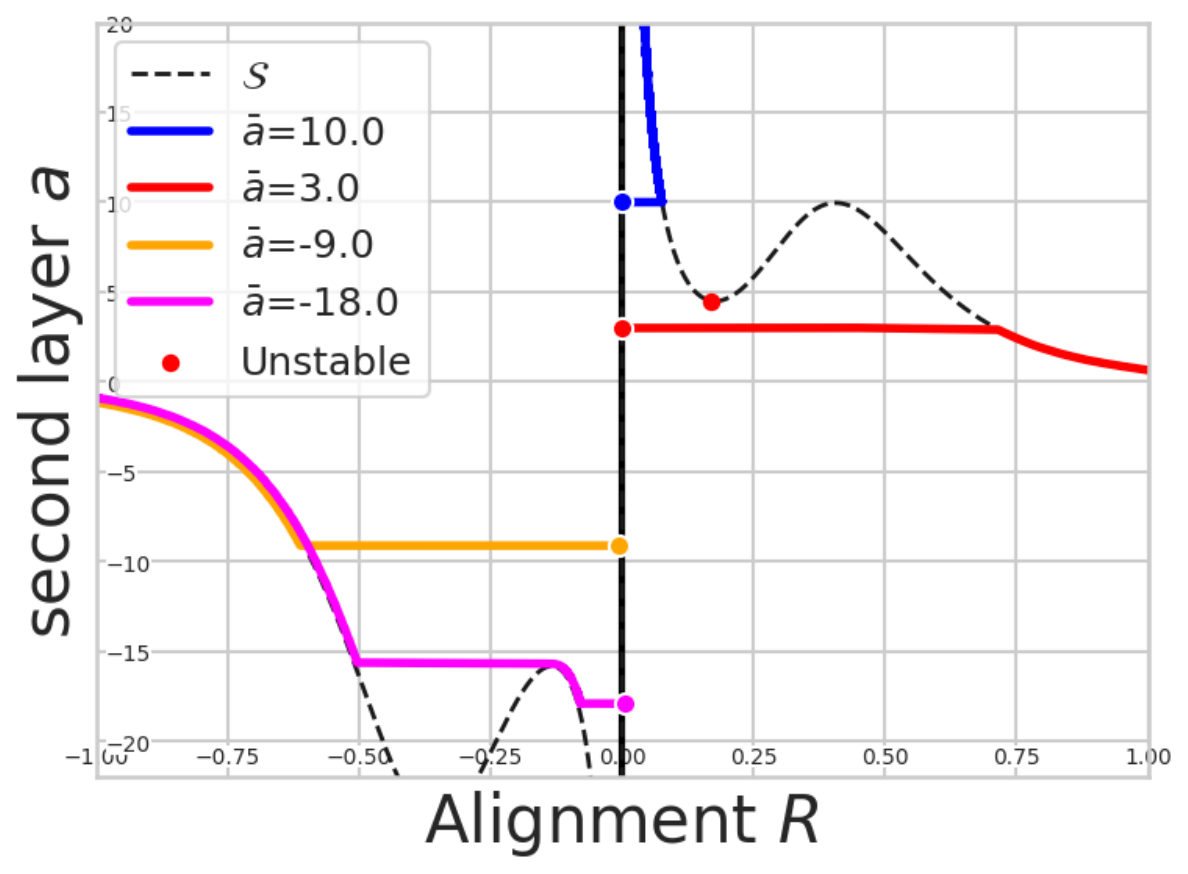}
    \label{fig:geometry2}
  \end{minipage}
  \caption{Trajectories of the model \eqref{eq:original} on the $R-a$ space. The dots on the $a$-axis are the initial values $\bar a$. The red, yellow, pink trajectories show feature learning, and the blue trajectories shows feature unlearning.
  We set $\bar{k}_\star = \bar{k} = 5$ and $c = (1, 1, 1, 1, 1)$, and also set $c_{\star} = ( 1, 1, 1, 1, 1 )$ (left) or $c_\star = ( 1, -1, 1, -1, 1 )$ (right).
\label{fig:geometry} }
\end{figure}

\subsection{Fast-slow flow and loss dynamics}

We study the connection between the fast-slow dynamics in the $R-a$ space described above and the dynamics of other variables, such as the test loss. 
Figure \ref{fig:unlearning_manifold_numerical} compares the trajectory on a trajectory on the $R-a$ space with the corresponding transitions of the alignment $R_\tau$, the second-layer scaler $a_\tau$, and the test loss.

In the initial fast dynamics where alignment $R_\tau$ increases from $0$, a rapid improvement in $R_\tau$ and a rapid decrease in loss occur. Subsequently, in the dynamics (I) where feature learning occurs, as $R_\tau$ increases slowly, the loss also gradually decreases. Furthermore, when the trajectory temporarily leaves the critical manifold $\mathcal{S}$ and evolves rapidly, the loss also decreases rapidly. In contrast, in the dynamics (II) for feature unlearning occurs which diverges toward $R_\tau$ is zero, causing $R_\tau$ to decrease monotonically. The loss continues to decrease but converges to a value of the lazy regime value without performing feature learning.

\begin{figure*}[htbp]
    \centering
    \includegraphics[width=0.44\linewidth]{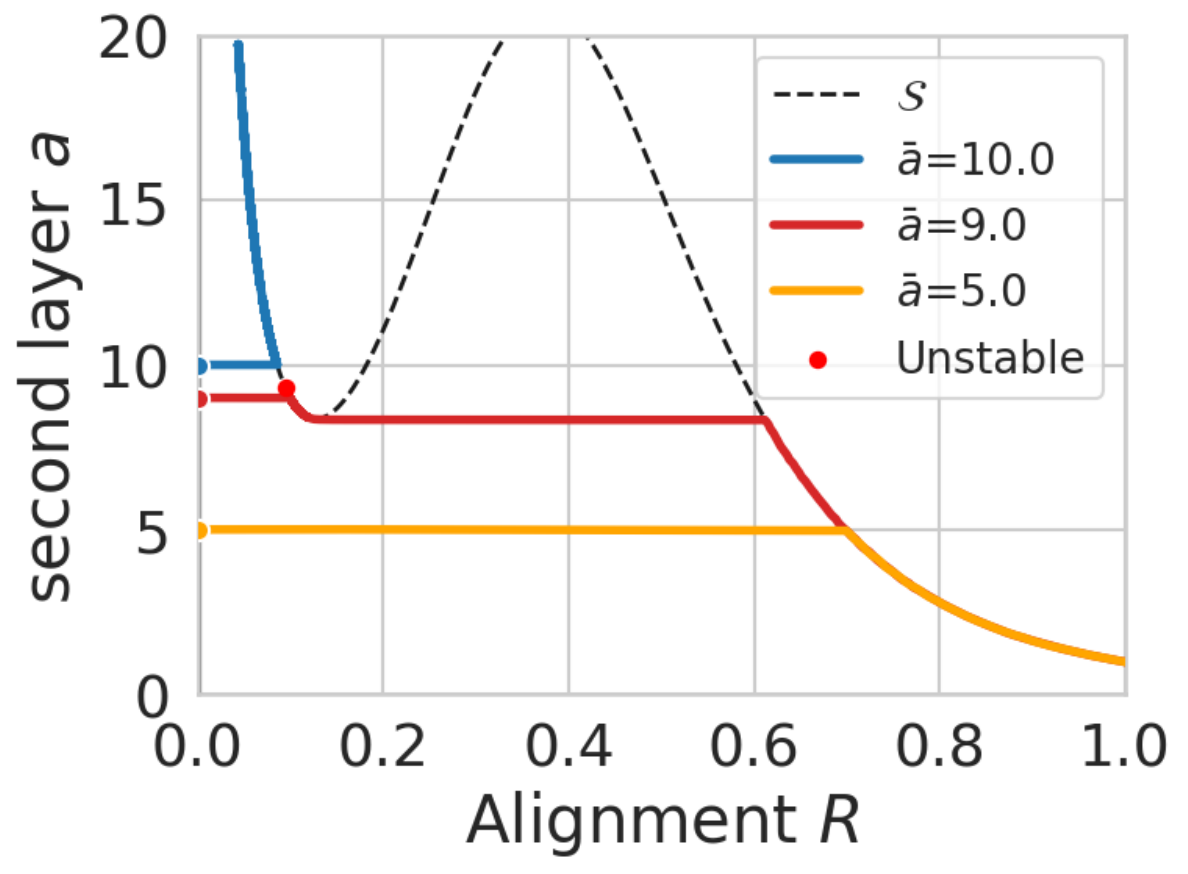}
    \includegraphics[width=0.44\linewidth]{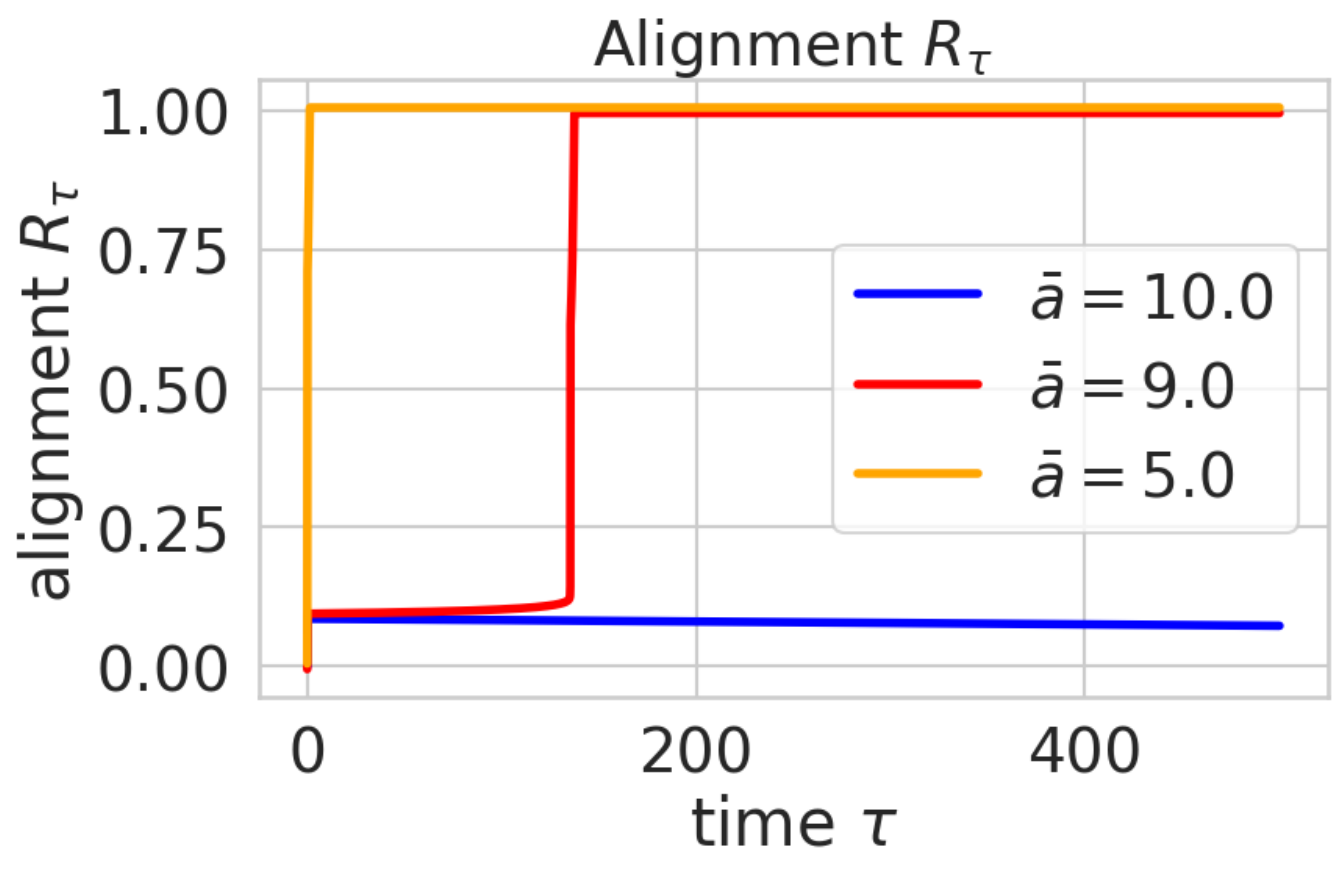} \\
    \includegraphics[width=0.44\linewidth]{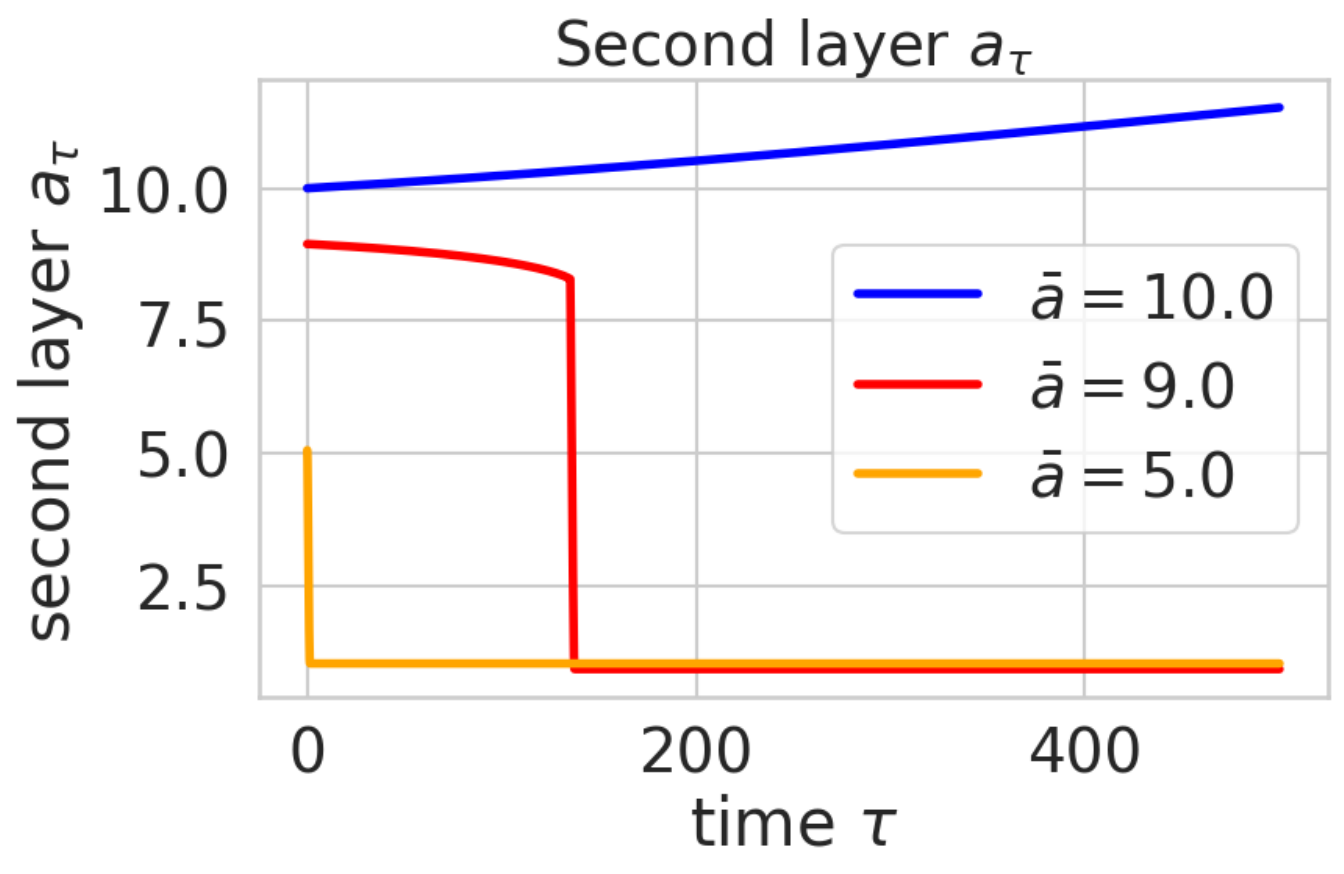}
    \includegraphics[width=0.44\linewidth]{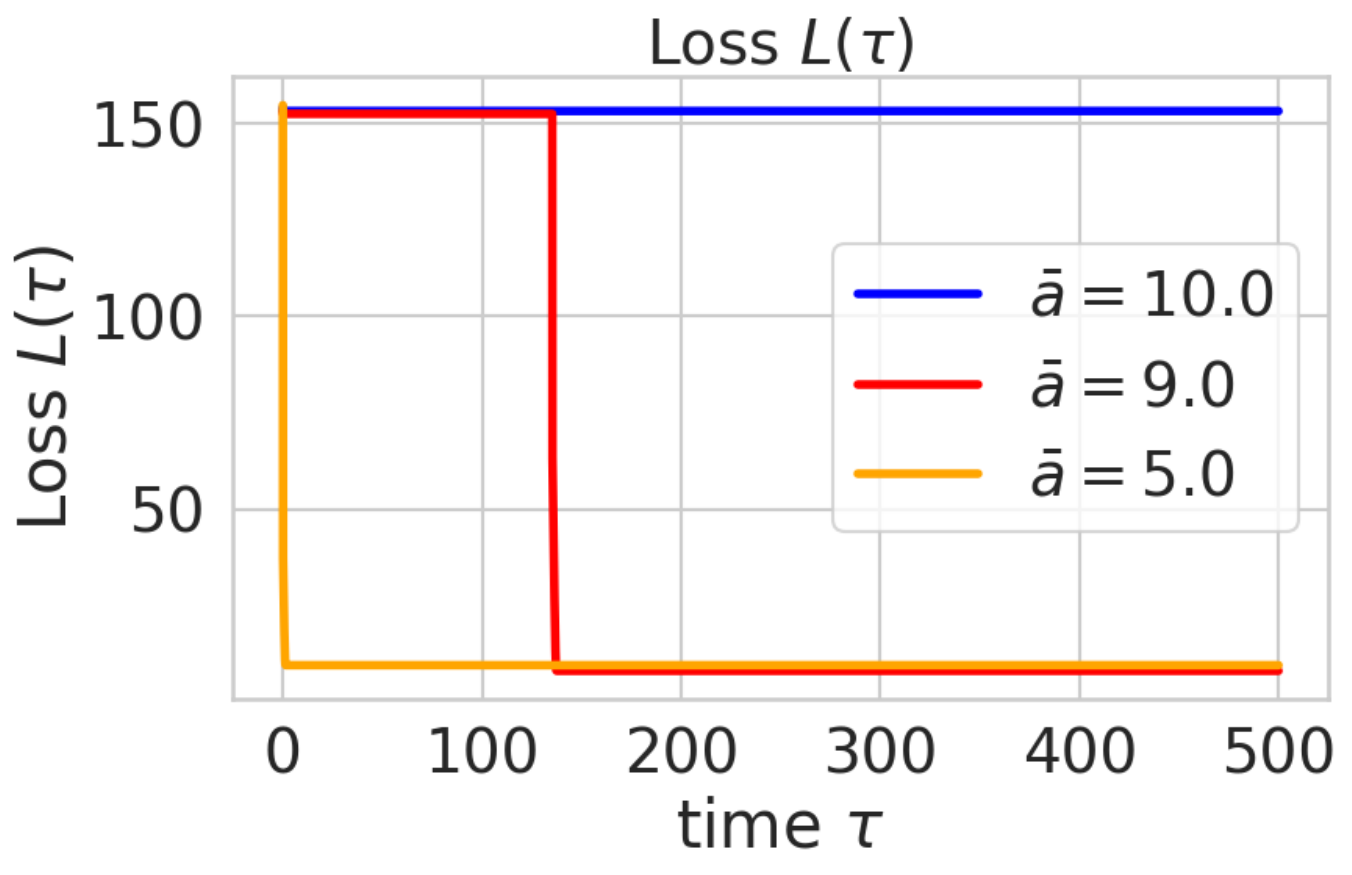}
    
    \caption{Simulated trajectories, alignments, second-layer weights, and losses of the model \eqref{eq:original}. We set $\bar{k}_\star = \bar{k} = 5$ and $c = (1, 1, 1, 1, 1), c_{\star} = ( 1, -1, 1, 1, 1 )$, and  $\bar{a} \in \{5, 9, 10\}$. We can observe that the learning dynamics proceeds differently for each case.}
\label{fig:unlearning_manifold_numerical}
\end{figure*}

\section{Theoretical grounding} \label{sec:theory}

We derive a theoretical description of the feature learning based on the singularly perturbed system \eqref{eq:singular}.
This analysis mathematically supports the observation on the feature learning in Section \ref{sec:feature_emp}.

\subsection{Condition of feature unlearning}

We provide a rigorous justification of feature unlearning in Definition \ref{def:feature_unlearning}. 
First, we put an assumption on the relation between the link $\sigma_\star(\cdot)$ and the activation function $\sigma(\cdot)$:
\begin{assumption}[Redundant degree of polynomial of activation]\label{ass:alpha}
With $k_0:=\min\{k\ge2:\ c_{\star,k}c_k\neq0\}$ and $ 
k_1 = \min \{ k \ge 2; c_k \ne 0 \}$, one of the followings holds:
\begin{itemize}
  \setlength{\parskip}{0cm} 
  \setlength{\itemsep}{0cm} 
    \item[(i)] $k_0 + 1 > 2k_1$,
    \item[(ii)] $k_0 + 1 < 2k_1$ and $c_{\star, k_0} c_{k_0} < 0$,
\end{itemize}
\end{assumption}
Both of these conditions refer to a situation where the student model possesses low-order nonlinearities that are not present in the teacher model.

We next introduce an assumption on the initialization $\bar{a}$ through functions which may describe the dynamics on the critical manifold $\mathcal{S}$.
Here, to simplify the analysis, we consider the case with $R > 0$.

\begin{assumption}[Initialization on $\bar{a}$]\label{ass:bar_a}
We define $h,\alpha:(0,1)\to \R$ as
\begin{align}
    h(R)
= \frac{2S'(R)}{T'(R)} \mbox{~~and~~}\alpha(R):=S(R)T'(R)-2S'(R)T(R).
\end{align}
Also, we define the following values $R_h = \min \{R \in (0,1)\mid h'(R) = 0 \}$ and $R_\alpha = \min \{R \in (0,1)\mid \alpha(R) = 0 \},$ with considering $\min \emptyset = 1$, and also define $R^\star = \min\{R_h , R_\alpha\} $.
Then, we assume that, there exists some $R \in (0, R^\star)$ such that $\bar{a} = h(R)$ holds.
\end{assumption}
This assumption requires that the initial value $\bar{a}$ lies in a region that induces unlearning, i.e., divergence of the dynamics on the $\mathcal{S}$. Here, $h(\cdot)$ is the parameterization of $a_{\tau_s}^0$ on $\mathcal{S}$ with respect to $R_{\tau_s}^0$, and $\alpha (\cdot)$ is a component of the intrinsic dynamics of $R_{\tau_s}^0$ on $\mathcal{S}$. These characterize the region of $(R,a) \in \mathcal{S}$ that moves toward divergence.

More precisely, direction of the slow flow on $\mathcal{S}$ is determined by Assumption \ref{ass:bar_a}.
We can observe that the direction of the slow dynamics on $\mathcal S$ is entirely determined by the sign of $\alpha(h^{-1}(a))$. Consequently, when $h^{-1}(\bar a)$ crosses a root of $\alpha(R)=0$ due to a change in the initial value $\bar a$, the direction of the slow flow is reversed. In particular, since Assumption~\ref{ass:alpha} guarantees that $\alpha(R) > 0$ holds in a neighborhood of $R = +0$, feature unlearning occurs when $h^{-1}(\bar a)$ is smaller than the smallest positive root of $\alpha(R) = 0$.
We can find details from the reduced ODE (\eqref{eq:slow_detail}) in Section \ref{sec:proof_main_theorem}.

With these assumptions, we now state the theorem for feature unlearning:

\begin{theorem}[Feature unlearning]\label{thm:main}
    Under Assumptions \ref{ass:link}-\ref{ass:bar_a},  we obtain
    \begin{align}       \lim_{\tau_s\to\infty}R_{\tau_s}^0 = 0,~    \mbox{and} ~    \lim_{\tau_s\to\infty}a_{\tau_s}^0 = \infty.
    \end{align}
    Furthermore, we have $\lim_{\tau_s\to\infty}R_{\tau_s}^0 a_{\tau_s}^0 = {c_{\star, 1}} / {c_1}$.
\end{theorem}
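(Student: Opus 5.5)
The plan is to run the standard singular-perturbation reduction: take $\varepsilon\to+0$ in \eqref{eq:singular}, so that the limit $(R^0_{\tau_s},a^0_{\tau_s})$ (after an initial fast layer) lies on the critical manifold $\mathcal S$ and follows the reduced slow flow on it. Then I analyse that one-dimensional flow explicitly.

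\textbf{Fast layer.} In the layer problem $a\equiv\bar a$ is frozen and $dR/d\tau=\tfrac12\bar a(1-R^2)\{2S'(R)-\bar aT'(R)\}$. At $R=0$ we have $2S'(0)-\bar aT'(0)=2c_{\star,1}c_1>0$, because $T'(0)=0$. So $R$ increases from $0$ until it first reaches a root of $2S'(R)=\bar aT'(R)$, i.e. $\bar a=h(R)$. By Assumption~\ref{ass:bar_a} there is such a root in $(0,R^\star)$; let $R_0^\star$ be the smallest one. Since $R_0^\star<R_h$, $h$ is strictly monotone on $(0,R_0^\star]$, and $h(R)\to+\infty$ as $R\to+0$ (numerator $\to 2c_{\star,1}c_1>0$, denominator $\sim 4c_1^2R\to0^+$). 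Hence $h$ is strictly decreasing there. I will check normal hyperbolicity, i.e. $\partial_R\bar f<0$ along $\{a=h(R),\,0<R\le R_0^\star\}$. On this set, $\partial_R f=\tfrac12 a(1-R^2)\,T'(R)\,h'(R)<0$, since $T'>0$ for $R>0$ and $h'<0$. So the branch is attracting, and Tikhonov/Fenichel theory shows the limiting trajectory jumps to $(R_0^\star,\bar a)\in\mathcal S$ and then stays on this branch.

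\textbf{Reduced flow.} On the branch, $a=h(R)$, and $da/d\tau_s=\bar g(R,h(R))$ gives
\begin{align}
h'(R)\frac{dR}{d\tau_s}=\frac{1}{\Lambda_s}\Big(S(R)-\frac{2S'(R)T(R)}{T'(R)}\Big)=\frac{\alpha(R)}{\Lambda_s T'(R)}.
\end{align}
Thus $dR/d\tau_s=\alpha(R)/(\Lambda_s T'(R)h'(R))$. I expect the main obstacle to be controlling the sign of $\alpha$ on $(0,R^\star)$, and this is where Assumption~\ref{ass:alpha} enters. Write $S(R)=c_{\star,1}c_1R+k_0!c_{\star,k_0}c_{k_0}R^{k_0}+\dots$ and $T(R)=c_1^2R^2+k_1!c_{k_1}^2R^{2k_1}+\dots$. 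The leading terms cancel in $\alpha$: $c_{\star,1}c_1R\cdot2c_1^2R-2c_{\star,1}c_1\cdot c_1^2R^2=0$. The next orders are:
\begin{itemize}
\item a contribution from $S$ of order $R^{k_0+1}$, with coefficient $2c_1^2k_0!c_{\star,k_0}c_{k_0}(1-k_0)$;
\item a contribution from $T$ of order $R^{2k_1}$, with coefficient $c_{\star,1}c_1k_1!c_{k_1}^2(2k_1-2)>0$.
\end{itemize}
In case (i) the $R^{2k_1}$ term dominates and is positive. In case (ii) the $R^{k_0+1}$ term dominates, with sign $-(k_0-1)\,\mathrm{sgn}(c_{\star,k_0}c_{k_0})>0$. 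Either way $\alpha>0$ near $0^+$, and since $R_\alpha$ is the first zero of $\alpha$, $\alpha>0$ on $(0,R^\star)$. Hence $dR/d\tau_s<0$ on $(0,R_0^\star]$, and $R^0$ decreases monotonically while staying on the attracting part of the branch.

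\textbf{Limits.} $R^0_{\tau_s}$ is monotone and bounded below, so it converges to some $L\ge0$. If $L>0$, then $dR/d\tau_s$ would be bounded away from $0$ on $[L,R_0^\star]$ by continuity, a contradiction; hence $L=0$. It follows that $a^0=h(R^0)\to\infty$. Finally,
\begin{align}
R\,h(R)=\frac{2RS'(R)}{T'(R)}\to\frac{2c_{\star,1}c_1R}{2c_1^2R}=\frac{c_{\star,1}}{c_1},
\end{align}
which gives the last claim. One residual technicality is that Fenichel theory needs compact pieces of the manifold, so I will apply it on $[\eta,R_0^\star]$ for each $\eta>0$ and let $\eta\to0$.
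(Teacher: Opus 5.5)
Your proposal is correct and follows essentially the paper's route: check the Tikhonov hypotheses, reduce to the slow flow whose sign is that of $\alpha$, read off $\alpha>0$ near $0^+$ from its expansion under Assumption~\ref{ass:alpha}, and reach arbitrarily long horizons by exhausting the branch with compact pieces. The checks you use are that the branch is attracting via $\partial_R f=\tfrac12 a(1-R^2)T'(R)h'(R)<0$ and that $R=0$ lies in the basin. The paper parametrizes by $a\in[\bar a-\delta,M]$ with $M\to\infty$ and gets $a\to\infty$ from a power-law comparison, whereas you use a monotone-limit argument; both work.

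Two cosmetic fixes are needed. First, $T'(R)\sim 2c_1^2R$, not $4c_1^2R$. Second, your compact piece should extend slightly past $R_0^\star$, to $[\eta,R_0^\star+\theta]$ with $R_0^\star+\theta<R^\star$, so that $\bar a$ lies in the interior of the slow-variable interval as Tikhonov's theorem requires.
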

This theorem derives a sufficient condition for feature unlearning to occur. Specifically, under the assumptions imposed here, the divergence of $a_{\tau_s}^0$ and the vanishing of $R_{\tau_s}^0$, which represents alignment, indicate that the first layer weights lose the learned features, meaning learning occurs in the so-called lazy regime. The third limit of $R_{\tau_s}^0  a_{\tau_s}^0 $ provides further additional information, that is, the rate of divergence of $a_{\tau_s}^0$ is of the order $O((R_{\tau_s}^0)^{-1})$.

This result can be regarded as a more precise description of the conditions under which feature unlearning occurs, as demonstrated in the \citet{montanari2025dynamical} under the similar setting. 
Furthermore, using a similar proof, it is also possible to derive a sufficient condition under which feature unlearning does not occur. 

\subsection{Scaling law of feature unlearning}

We derive a scaling law of the variables $R_{\tau_s}^0$ and $a_{\tau_s}^0$ in the feature unlearning case, which shows their convergence rate in terms of $\tau_s \to \infty$.
\begin{theorem}[Scaling law]\label{thm:scaling}
    Under Assumptions \ref{ass:link}-\ref{ass:bar_a},
    for each case of (i) and (ii) in Assumption \ref{ass:alpha}, we obtain the following as $\tau_s \to \infty$:
    \begin{itemize}
  \setlength{\parskip}{0cm} 
  \setlength{\itemsep}{0cm} 
        \item[(i)] $R_{\tau_s}^0 = \Theta(\tau_s^{-{1}/{(2k_1)}}), \mbox{~~~and~~~}  a_{\tau_s}^0 = \Theta(\tau_s^{{1}/{(2k_1)}})$,

        \item[(ii)] $ R_{\tau_s}^0 = \Theta(\tau_s^{-{1}/({k_0 + 1})}), \mbox{~~~and~~~} a_{\tau_s}^0 = \Theta(\tau_s^{{1}/({k_0 + 1})})$,
    \end{itemize}
\end{theorem}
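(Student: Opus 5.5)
The plan is to work entirely with the reduced slow flow on the critical manifold $\mathcal S$, which the proof of Theorem~\ref{thm:main} already supplies. For $R\in(0,1)$ and $a>0$ the condition $\bar f(R,a)=0$ is equivalent to $a=h(R)$. By Tikhonov's theorem, the limit $(R^0_{\tau_s},a^0_{\tau_s})$ stays on the branch $a=h(R)$ with $R\in(0,R^\star)$ selected by Assumption~\ref{ass:bar_a}. On this branch $h'\neq0$, so the slow equation $\dot a=\bar g(R,h(R))$ can be rewritten as an ODE for $R$ alone. Substituting $a=h(R)=2S'/T'$ into $g$ gives $g=(ST'-2S'T)/T'=\alpha/T'$. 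Hence
\begin{align}
\frac{dR^0_{\tau_s}}{d\tau_s}=\frac{\alpha(R^0_{\tau_s})}{\Lambda_s\,T'(R^0_{\tau_s})\,h'(R^0_{\tau_s})},
\end{align}
which is the reduced equation referred to as \eqref{eq:slow_detail}. I take from Theorem~\ref{thm:main} that $R^0_{\tau_s}\downarrow0$ monotonically along this branch. It then suffices to know the small-$R$ behavior of the right-hand side.

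\textbf{Step 1 (expansions near $R=0$).} By the definitions of $k_0$ and $k_1$, set $A:=k_0!\,c_{\star,k_0}c_{k_0}$ and $B:=k_1!\,c_{k_1}^2>0$. Then
\begin{align}
S(R)&=c_{\star,1}c_1R+AR^{k_0}+o(R^{k_0}),\\
T(R)&=c_1^2R^2+BR^{2k_1}+o(R^{2k_1}),
\end{align}
and these expansions may be differentiated termwise, since $S$ and $T$ are power series and hence analytic near $0$. From them:
\begin{itemize}
\item $T'(R)=2c_1^2R\,(1+o(1))$;
\item $h(R)=\dfrac{c_{\star,1}}{c_1}\,R^{-1}(1+o(1))$, so $h'(R)=-\dfrac{c_{\star,1}}{c_1}\,R^{-2}(1+o(1))$.
\end{itemize}
For $\alpha$, the order-$R^2$ terms cancel: $c_{\star,1}c_1R\cdot2c_1^2R-2c_{\star,1}c_1\cdot c_1^2R^2=0$. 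The next contributions are $2c_1^2A(1-k_0)R^{k_0+1}$ from the $S$-correction and $2c_1c_{\star,1}B(k_1-1)R^{2k_1}$ from the $T$-correction. Therefore $\alpha(R)=\kappa R^p(1+o(1))$ with $p=\min\{k_0+1,2k_1\}$. In case (i), $\kappa=2c_1c_{\star,1}B(k_1-1)>0$ because $c_1c_{\star,1}>0$ and $k_1\ge2$. In case (ii), $\kappa=2c_1^2A(1-k_0)>0$ because $A<0$ and $k_0\ge2$. The equality case $k_0+1=2k_1$ is excluded by Assumption~\ref{ass:alpha}, so no cancellation between the two corrections can occur.

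\textbf{Step 2 (asymptotic ODE and integration).} Combining the expansions, $T'h'=-2c_1c_{\star,1}R^{-1}(1+o(1))$, so
\begin{align}
\frac{dR^0_{\tau_s}}{d\tau_s}=-\beta\,(R^0_{\tau_s})^{p+1}\,(1+o(1)),\qquad \beta:=\frac{\kappa}{2c_1c_{\star,1}\Lambda_s}>0.
\end{align}
Since $R^0_{\tau_s}\to0$, for any $\eta\in(0,1)$ there is $\tau_\eta$ beyond which the factor $1+o(1)$ lies in $[1-\eta,1+\eta]$. Comparison with $\dot r=-\beta(1\pm\eta)r^{p+1}$, which gives $r^{-p}(\tau)=r^{-p}(\tau_\eta)+p\beta(1\pm\eta)(\tau-\tau_\eta)$, yields $(R^0_{\tau_s})^{-p}\sim p\beta\,\tau_s$. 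Thus $R^0_{\tau_s}=\Theta(\tau_s^{-1/p})$. Moreover $a^0_{\tau_s}=h(R^0_{\tau_s})\sim(c_{\star,1}/c_1)/R^0_{\tau_s}=\Theta(\tau_s^{1/p})$, consistent with the last claim of Theorem~\ref{thm:main}. Reading off $p=2k_1$ in case (i) and $p=k_0+1$ in case (ii) gives the two statements.

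\textbf{Main obstacle.} The delicate point is not the asymptotics, which are routine once the expansions are in hand. It is the legitimacy of using the reduced equation for all $\tau_s\in\mathbb R_+$. One must know that the limit trajectory never leaves the attracting branch (no jump), i.e.\ $h'<0$ and normal hyperbolicity hold along the whole path $R\in(0,h^{-1}(\bar a)]$. I would take this from the proof of Theorem~\ref{thm:main}: since $R^\star\le R_h$, $h'$ does not vanish on $(0,R^\star)$, and since $R^\star\le R_\alpha$, $\alpha>0$ there. Hence the flow is monotone toward $0$ and stays in the region where the expansions of Step 1 apply. I would also check attractivity, $\partial_R\bar f<0$ on this branch, directly from $\partial_R f=-\tfrac12 a(1-R^2)T'(R)h'(R)\cdot(\ldots)$ evaluated at $a=h(R)$.
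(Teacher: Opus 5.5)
Your proposal is correct and is essentially the paper's argument, namely Lemma~\ref{lem:scaling_reduced} combined with Corollary~\ref{cor:epsilon_to_zero}; the paper writes the reduced slow equation in $a$ as $\dot{\hat a}=\alpha(\xi(\hat a))/(\Lambda_s T'(\xi(\hat a)))\sim K\hat a^{-(p-1)}$, whereas you write it in $R$ via $\xi'=1/h'$, which is the same computation after the change of variables $a=h(R)$, and your expansions of $h$ and $\alpha$ match Lemma~\ref{lem:h_alpha}. One small slip in your side remark on attractivity: on the branch one has exactly $\partial_R f\,|_{a=h(R)}=\tfrac12 h(R)(1-R^2)T'(R)h'(R)$, with no minus sign and no extra factor, and this is negative precisely because $h'<0$ on $(0,R^\star)$, as in (H3) of Lemma~\ref{lem:hypotheses}.
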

These results imply that 
$k_0$ and $k_1$ defined in Assumption \ref{ass:alpha} are essential in determining the speed of convergence.

\section{Simulation}

\subsection{Phase map of feature unlearning}

We perform numerical simulations of the ODE \eqref{eq:original} for multiple
choices of activation and link function coefficients $(c_k,c_{\star,k})$.
Based on these simulations, we also construct a phase diagram summarizing whether unlearning occurs as a function of the coefficients and the initial value $\bar{a}$.
Figure \ref{fig:phase_map} shows the result. We see that, in this case, sign matching between the teacher/student coefficient is important for successful feature learning.

\begin{figure}[htbp]
    \centering
    \includegraphics[width=0.4\linewidth]{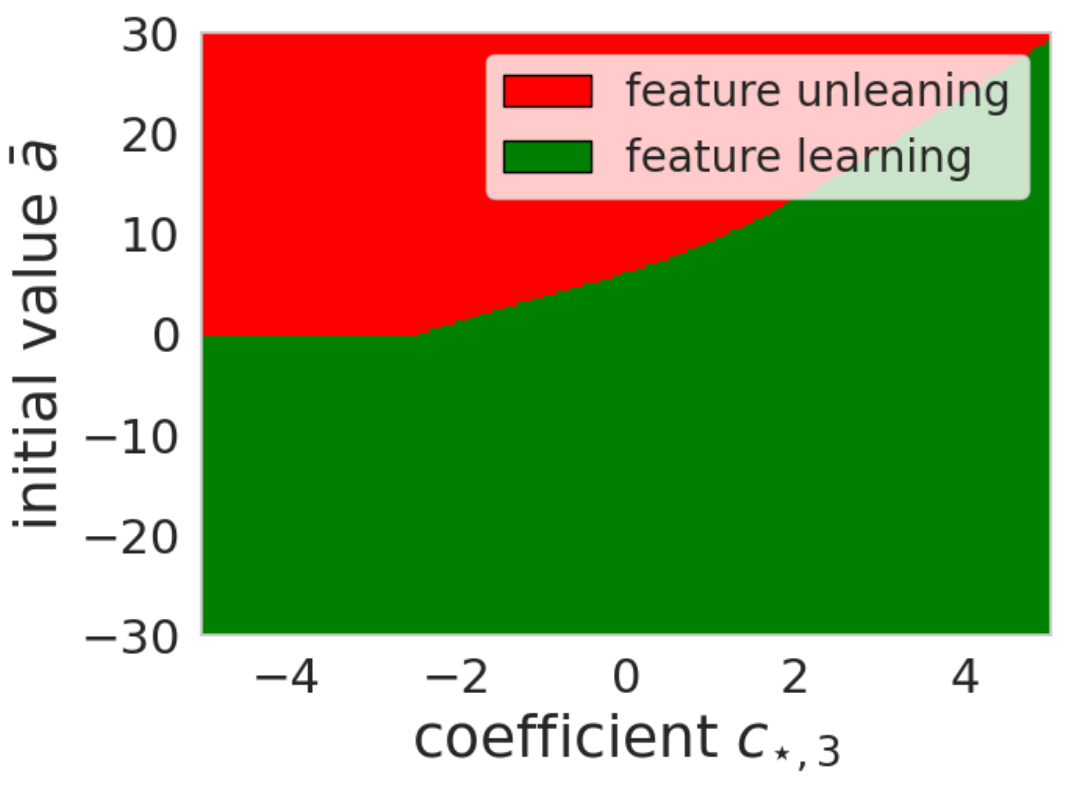}
    \includegraphics[width=0.4\linewidth]{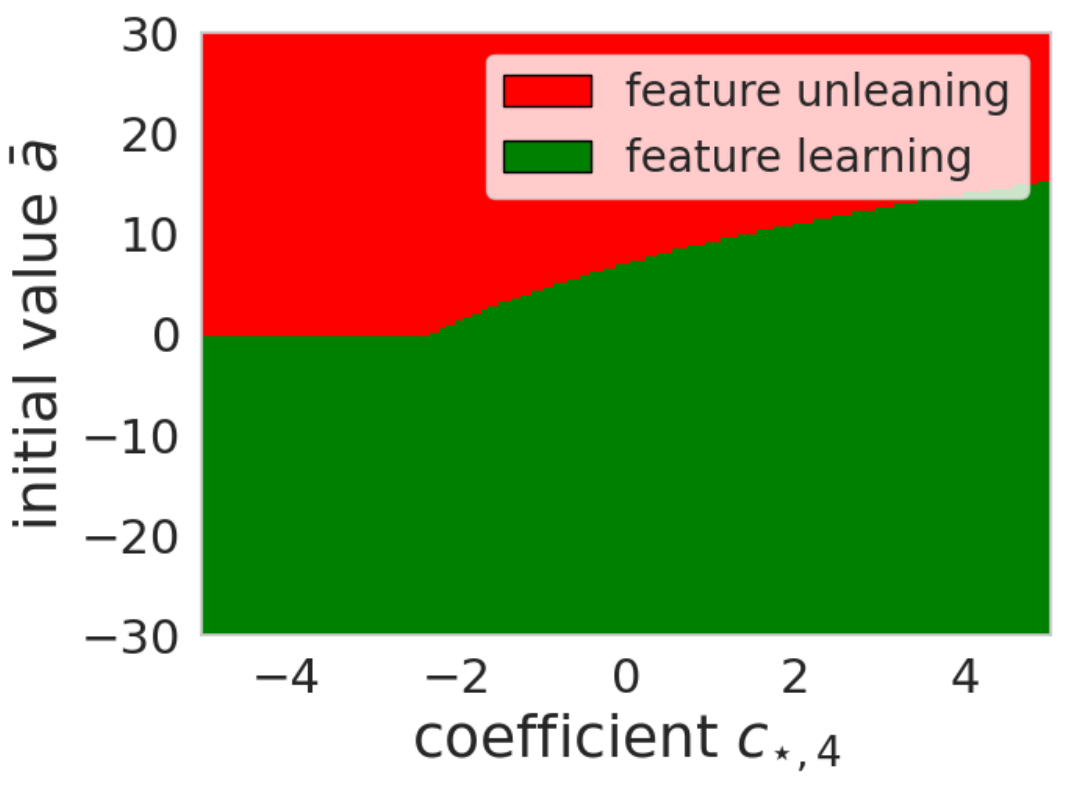}
    \caption{Phase maps for the feature unlearning by  \eqref{eq:original}. We set $\bar{k}_\star = \bar{k} = 5$ and $c = (1, 1, 1, 1, 1)$, and also set $c_{\star, 1} = c_{\star, 4} = c_{\star, 5} = 1, c_{\star, 2} = -1$  (left),  or $c_{\star, 1} = c_{\star, 2} =  c_{\star, 5} = 1, c_{\star, 3} = -1$ (right).}
    \label{fig:phase_map}
\end{figure}

\subsection{Scaling law of feature unlearning}

We numerically investigate the convergence rates predicted by
Theorem~\ref{thm:scaling}.
By tracking the long-time behavior of $R_{\tau}$ and $a_{\tau}$ in the
ODE \eqref{eq:original} , we find clear power-law regimes whose exponents agree with the
theoretical scalings.
These results provide quantitative confirmation of the scaling law for feature
unlearning derived from the singular perturbation analysis. We observe that, from Figure \ref{fig:scaling}, the log-log tail slopes of $R_\tau$ and $a_\tau$ of both settings get close to the theoretical values $\pm1/4$, $\pm1/3$ respectively.

\subsection{Experiments with real neural networks and SGD}
We report simulations of online SGD applied directly to real
two-layer neural networks.
Figure \ref{fig:agreement} shows the results. 
Across all tested configurations, we observe qualitative behaviors consistent with fast-slow dynamics, including a gradual decay of
alignment with the teacher direction, accompanied by growth in the second-layer weights.
While finite-width effects and stochastic fluctuations remain visible, these
results suggest that the fast-slow mechanism predicted by the infinite-width theory persists, at least transiently, in realistic neural network settings.

\begin{figure}[htbp]
    \centering
    \includegraphics[width=0.4\linewidth]{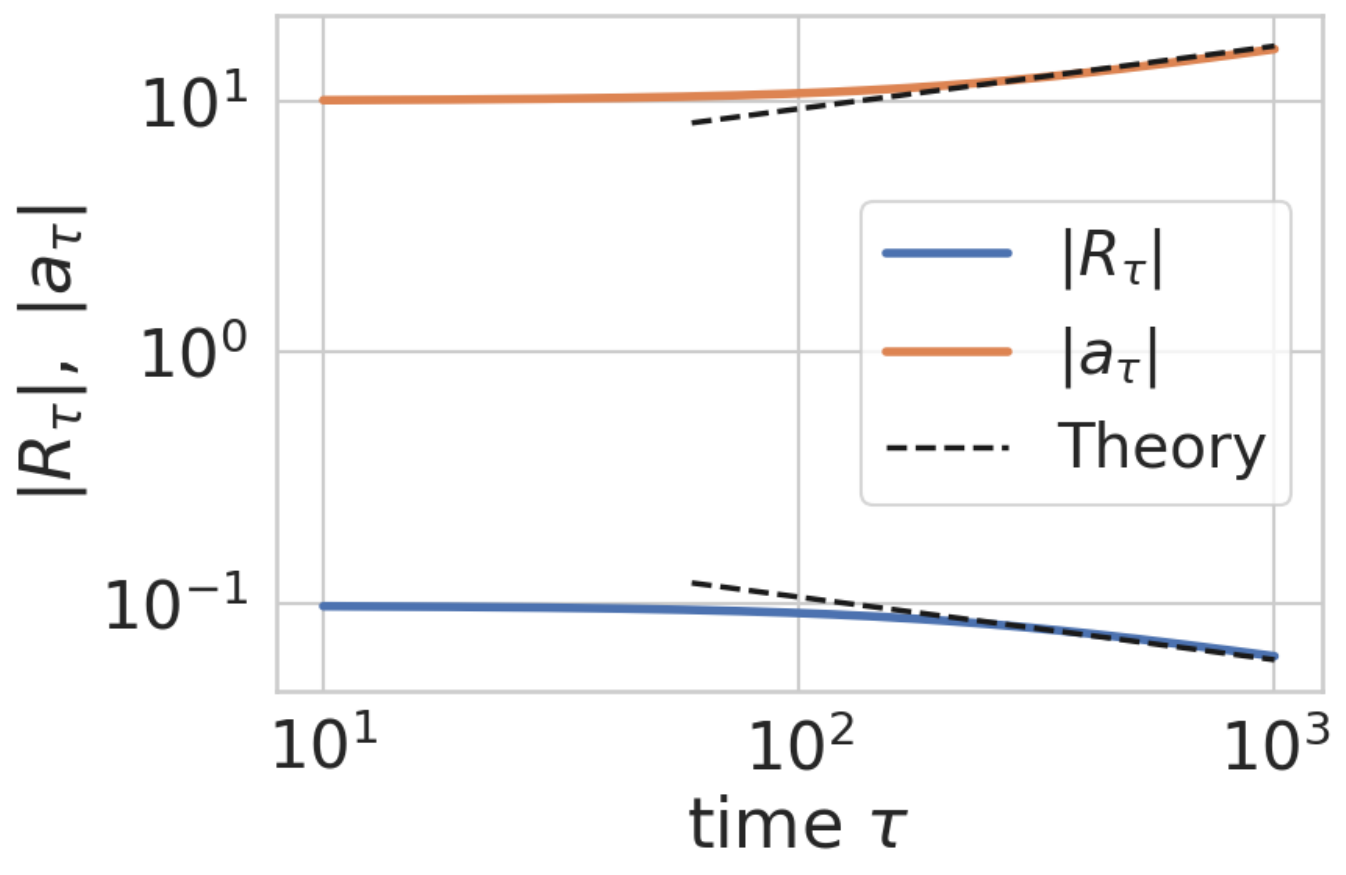}
    \includegraphics[width=0.4\linewidth]{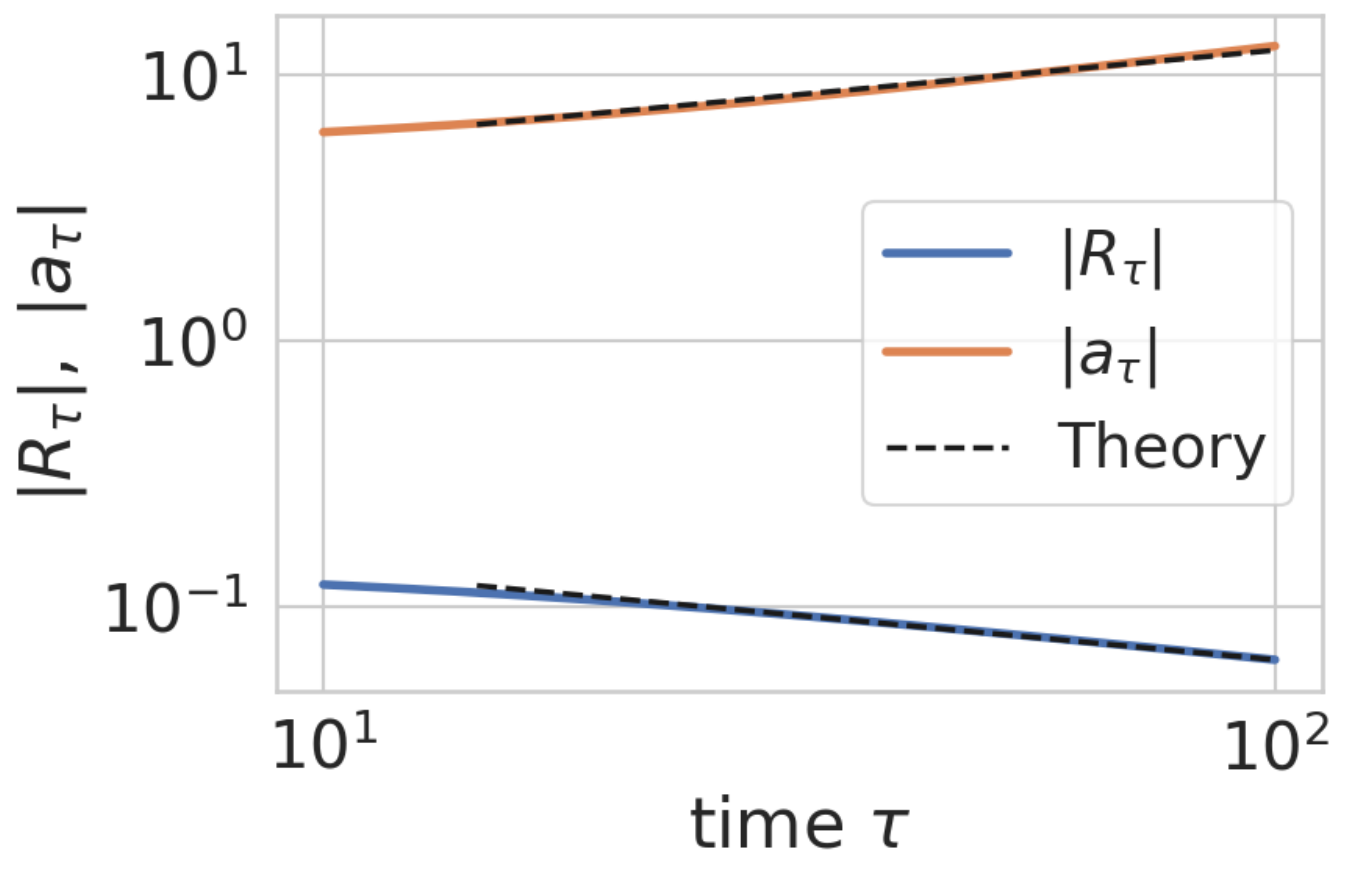}
    \caption{Numerical verification of the scaling law of Theorem \ref{thm:scaling}.  ({Left}) $\bar{k}_\star = \bar{k} = 7$, $c = (1, 1, 1, 1, 1, 1, 1), c_\star =( 1, 0, 0, 0, 0, 0, 0.5),\bar a = 10$. This corresponds to the case (i) of Assumption \ref{ass:alpha}; ({Right}) $\bar{k}_\star = \bar{k} = 3$, $c = (1, 1, 1), c_\star = ( 1, -1, 1 ), \bar a = 5$. This corresponds to the case (ii) of Assumption \ref{ass:alpha}.}
    \label{fig:scaling}
\end{figure}

\begin{figure}[htbp]
  \centering
  \begin{minipage}{0.49\linewidth}
    \centering
    \includegraphics[width=\linewidth]{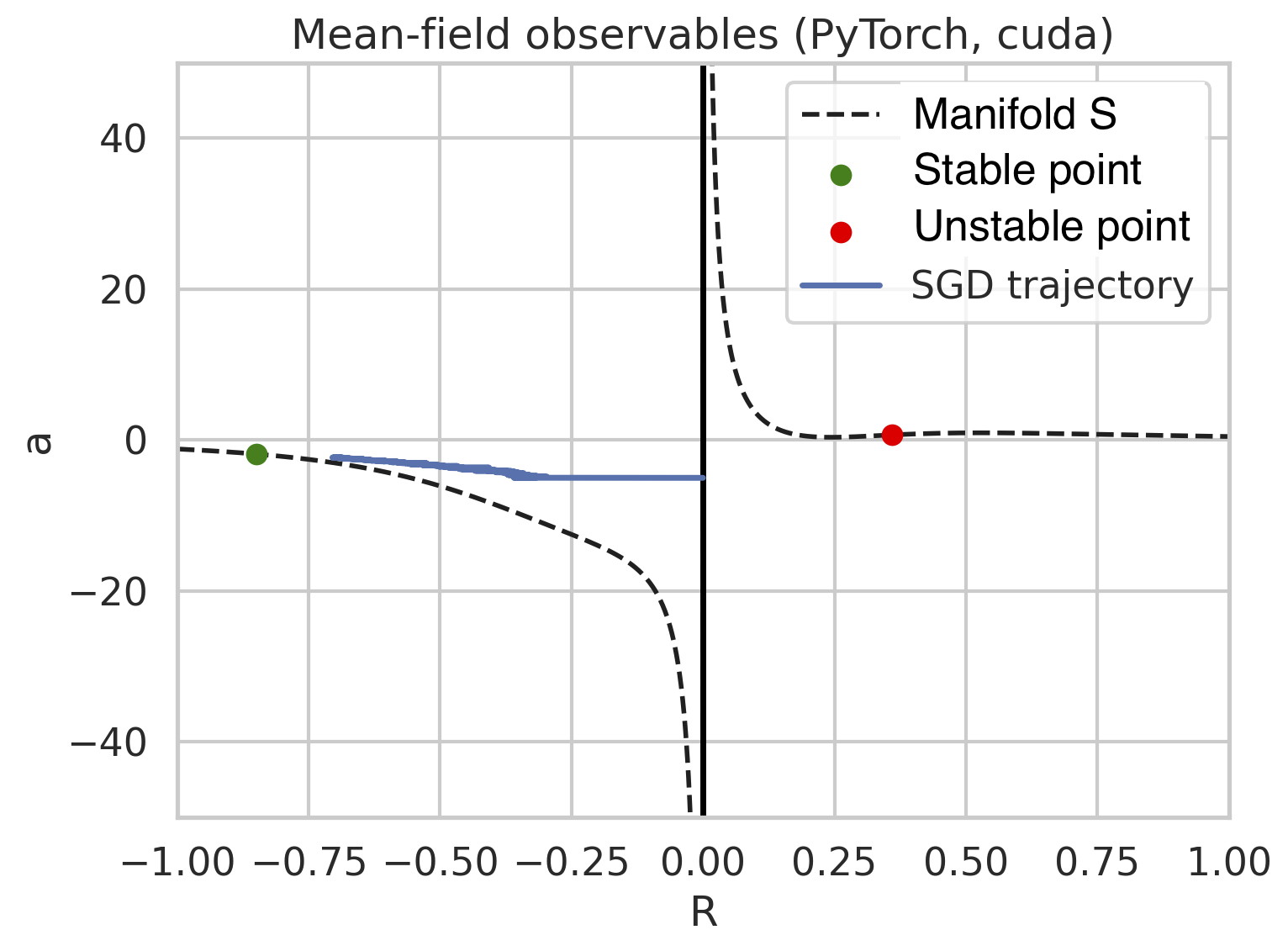}
  \end{minipage}
  \hfill
  \begin{minipage}{0.49\linewidth}
    \centering
    \includegraphics[width=\linewidth]{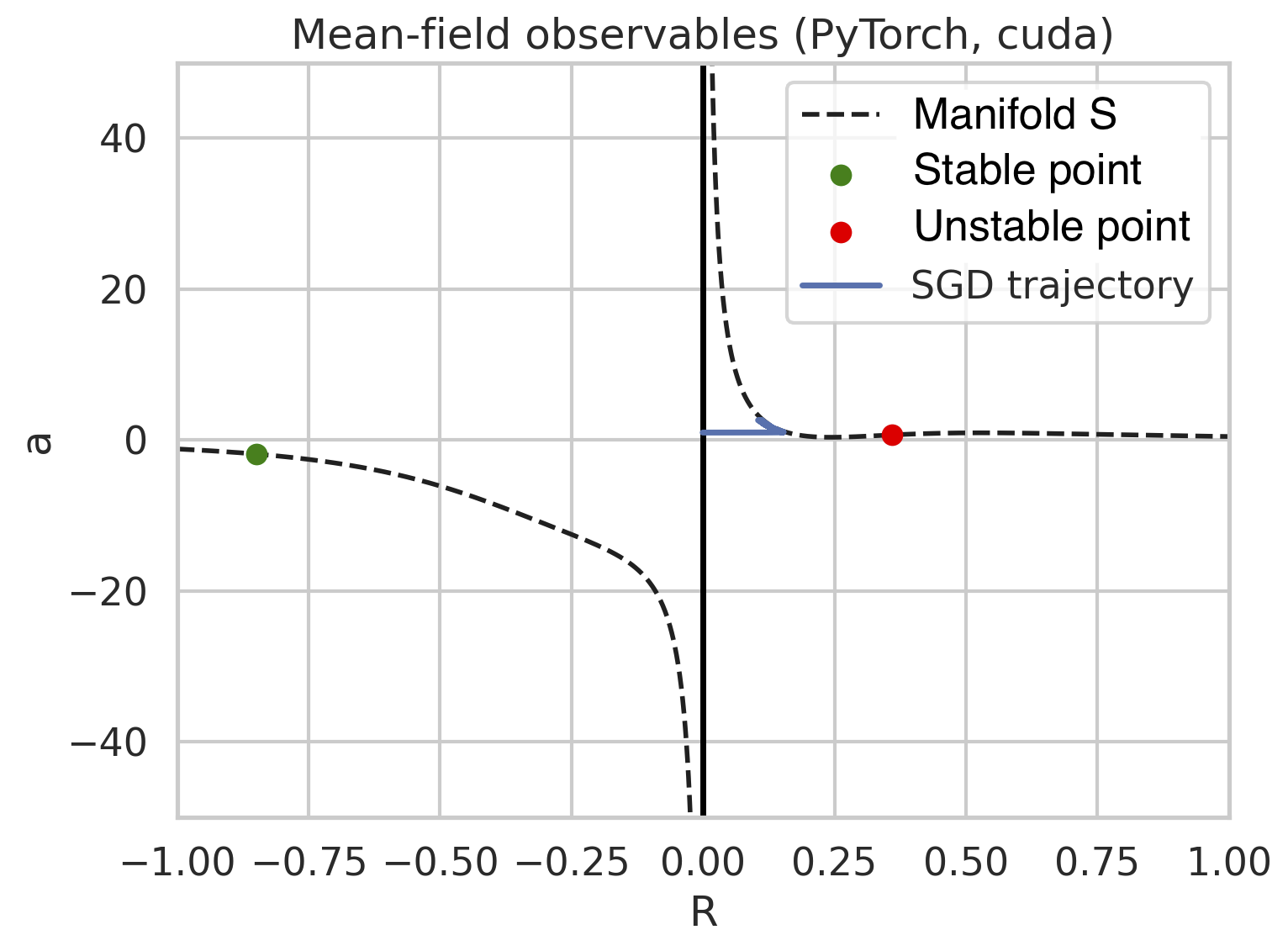}
  \end{minipage}
  \caption{Numerical simulations of real neural networks up to $10^4$ iterations for $\bar{k}_\star = \bar{k} = 3$, $c = (1, 1, 1), c_\star = (1, -2, 1)$, and learning rate $\gamma = 1$ with ({Left}) $n = d = 10^4, m= 500$;   ({Right}) $n = d = 10^4, m = 10^3$. Stable/unstable points are where the flow direction on the manifold changes.\label{fig:agreement}}
\end{figure}

\section{Conclusion}

This paper investigates the learning dynamics via SGD using data obtained from a single-index teacher model to elucidate the mechanism behind feature unlearning in two-layer neural networks. We represented the dynamics of macrovariables of the neural network as continuous-time differential equations, then analyze the difference in timescales of these variables. As a result, we clarified the relationship between the fast-slow dynamics of the macrovariables and the critical manifold, demonstrating that the slow flow on the manifold explains phenomena such as feature unlearning and staircase loss dynamics.

\appendix

\section{Related works}

\subsection{Feature learning/unlearning in two-layer neural networks}
The dynamics of feature learning in two-layer neural networks has been studied
extensively in teacher-student settings and high-dimensional regimes (\citet{ba2022high, damian2022neural, moniri2024theory, yang2021tensor4}).
The phenomenon of feature unlearning, where alignment with previously learned
features degrades over long training times, was recently identified in large
two-layer neural networks by \citet{montanari2025dynamical}.
Using the dynamical mean-field theory, these authors revealed a pronounced separation of time scales and
showed that feature unlearning can occur even under full-batch gradient flow in
the infinite-width limit.

\subsection{Tensor Programs and other theories for high-dimensional dynamics}
Tensor Programs provide a constructive and algorithm-aware framework for
deriving macroscopic descriptions of wide neural networks directly from their
computational graphs and update rules.
Originally developed to analyze forward-pass behavior and signal propagation
in deep networks \citet{yang2019scaling,yang2019tensor1}, the framework was later
extended to cover backpropagation, training dynamics, and a more general program
structures \citet{yang2020tensor2,yang2020tensor3,yang2021tensor2b,yang2021tensor4,
yang2022tensor5,littwin2023adaptive,yang2024tensor6}.
Recent work has conducted the analysis of discrete gradient-based training algorithms,
yielding rigorous state-evolution results for stochastic gradient descent and
related methods \citet{gerbelot2024rigorous,dandi2024two}.
In contrast to the dynamical mean-field theory, which typically postulates a closed-form dynamical
description at the outset, Tensor Programs offers a systematic procedure for deriving such
descriptions from the underlying algorithm, a perspective that is central to
the present work.

Recent works have focused on the high-dimensional limit, where learning
dynamics can be characterized through a small number of macroscopic order
parameters.
Within this framework, both gradient flow and stochastic gradient descent have
been studied using tools from dynamical mean-field theory, revealing the
dependence of learning behavior on initialization, width, and data statistics (\citet{bordelon2022self, celentano2021high, dandi2024two, mignacco2020dynamical}).
Another theory is the generalized first-order theory, which handles a general class of iterative algorithms with first-order gradients and derives a state evolution to represent a limiting high-dimensional dynamics of the iterative algorithms. \citet{celentano2020estimation} developed the framework and its efficiency, \citet{han2025entrywise} studies its applicability to a wider class of models and data distributions, and \citet{han2025precise} applied the theory to the dynamics of multi-layer neural networks.

\subsection{Singular perturbation theory}
The singular perturbation theory is a general theory to analyze
multi-scale phenomena in dynamical systems and have long been used in physics, chemistry, biology, and many others \citep{fenichel1979geometric, patsatzis2024slow, serino2025fast, jelbart2022process, kojakhmetov2021geometric, tran2025singular}.
Regarding the analysis for neural networks, \citet{berthier2024learning} analyzed incremental and non-monotone learning dynamics by explicitly introducing a small parameter to utilize the singular perturbation theory.
\citet{montanari2025dynamical} applied the singular perturbation theory for two-layer neural networks with gradient flow and analyzed the feature unlearning phenomenon as described above. 
\citet{nishiyama2025precise} studied a diagonal linear neural network using singular perturbation theory and developed a precise dynamical analysis of the network training.

\section{Proof of Validation of ODE} \label{sec:validation_ode}

\subsection{Difference equation of macroscopic variables for finite-width network}

We derive the difference equation representing the dynamics of the macro variables. This equation is derived using the framework of Tensor Programs \citep{yang2019scaling,yang2020tensor2,yang2021tensor2b}, based on the neural network \eqref{eq:neural_network} and the online SGD update \eqref{def:sgd_w} and \eqref{def:sgd_a}. 
In the following, we set $b_k := (k+1)\cdot (k+1)!$.
For sequences or functions depending on a parameter $m$, we use the notation $O_m(\cdot)$.

In preparation, we define an additional macroscopic variable of neural networks with SGDs:
\begin{align}
    Q_{i, j}^m(t) = \plim_{n, d \to \infty} \frac{1}{d}\bm{w}_i^\top \bm{w}_j^t, ~~ i,j\in \{1,2,...,m\}, t = 1,2,....
\end{align}
$Q_{i, j}^m(t)$ corresponds to the overlap between $i$-th and $j$-th feature vectors.
Note that for a case with $i=j$, $Q_{i, j}^m(t) = 1$ holds because of the normalization step.

\begin{proposition}[Difference equation of macroscopic variable] \label{prop:differnce_equation}
There exists a sequence of macroscopic variables $\{R^m(t), Q^m(t), a^m(t)\}_{t \in \N}$ such that for any $i,j \in \{1,...,m\}$ we have
\begin{align}
    R^m(t) = R_i^m(t),\quad
    Q^m(t) = Q_{i, j}^m(t)\ (i \ne j),\quad 
    a^m(t) = a_i^m(t).
\end{align}
Further, for any $t \in \N \cup \{0\}$, they satisfy the following a recursive system
    \begin{align}  \label{eq:discrete} 
    \begin{aligned}        
        R^m(t + 1) &= R^m(t) + \gamma m^{-1} \Big\{ a^m(t) \big( 1 - R^m(t)^2 \big) \sum_{k=0}^{\infty} b_k c_{\star, k+1} c_{k+1} R^m(t)^k \\
        &\quad- a^m(t)^2 R^m(t) \big( 1 - Q^m(t) \big) \sum_{k=0}^{\infty} b_k c_{k+1}^2 Q^m(t)^k\Big\} + O_m(m^{-2}),\\      
        Q^m(t + 1) &= Q^m(t) + \gamma m^{-1} \Big\{ 2a^m(t) R^m(t) \big( 1 - Q^m(t) \big)\sum_{k=0}^{\infty} b_k c_{\star, k+1}c_{k+1}R^m(t)^k \\
        &\quad -2 a^m(t)^2 Q^m(t) \big( 1 - Q^m(t) \big)\sum_{k=0}^{\infty} b_k c_{k+1}^2 Q^m(t)^k \Big\} 
        + O_m(m^{-2}),  \\    
        a^m(t + 1) &= a^m(t) + \gamma m^{-1} \Big\{ \sum_{k=1}^\infty k! c_{\star, k}c_k R^m(t)^k - a^m(t) \sum_{k=1}^\infty k! c_k^2 Q^m(t)^k \Big\} + O_m(m^{-2}),
        \end{aligned}
    \end{align}
with initialization $R^m(0) = Q^m(0) = 0$ and $ a^m(0) = \bar{a} \ne 0$.
\end{proposition}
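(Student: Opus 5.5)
We prove the proposition by induction on $t$. At each step we carry the claim that the high-dimensional limits of the overlaps exist, are deterministic, and are exchangeable in the neuron index. The existence and determinism of the limits come from the Tensor Programs master theorem \citep{yang2020tensor2,yang2021tensor2b}. The one-pass SGD iteration \eqref{def:sgd_w}--\eqref{def:sgd_a} is written as a program: the fresh batch matrix $\bm{X}^t\in\R^{n\times d}$ plays the role of an i.i.d.\ Gaussian initial matrix, and $\bm{w}_\star,\bm{w}_i^0$ are initial vectors. Because each batch is used once, the preactivations $\bm{X}^t\bm{w}_i^t/\sqrt d$ and $\bm{X}^t\bm{w}_\star/\sqrt d$ are, conditionally on the past, jointly Gaussian in the limit $n,d\to\infty$. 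Their covariance is given by $(Q_{i,j}^m(t))$, $(R_i^m(t))$, and $\|\bm{w}_\star\|^2/d\to1$. All updated quantities are nonlinearities (polynomially bounded, by Assumptions \ref{ass:link} and \ref{ass:activation}) of these vectors. Hence $\frac1d\bm{w}_\star^\top\bm{w}_i^{t+1}$, $\frac1d\bm{w}_i^{t+1\top}\bm{w}_j^{t+1}$ and $a_i^{t+1}$ converge in probability to deterministic limits. The normalization is a continuous map of $\|\tilde{\bm{w}}_i^{t+1}\|^2/d$, whose limit is bounded away from $0$ for fixed $m$ large, so it passes to the limit.

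The symmetry follows from the same induction. At $t=0$ we have $a_i^0=\bar a$ and i.i.d.\ $\bm{w}_i^0$, so $R_i^m(0)=0$ and $Q_{i,j}^m(0)=0$ for $i\neq j$. The update rule is equivariant under permutations of neurons. If at time $t$ all $R_i$, all $a_i$, and all off-diagonal $Q_{ij}$ coincide, the limiting recursion maps them to equal values at $t+1$. This produces the common sequences $R^m,Q^m,a^m$.

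Next we compute the increments. The gradient with respect to $\bm{w}_i$ is $-\frac{a_i}{mn\sqrt d}\sum_l r_l\sigma'(u_{il})\bm{x}_l$, where $r_l=y_l-f(\bm{x}_l)$ and $u_{il}=\langle\bm{w}_i,\bm{x}_l\rangle/\sqrt d$. Projecting $\tilde{\bm w}_i^{t+1}$ on $\bm{w}_\star/d$ and on $\bm{w}_j/d$ turns Gaussian integration by parts (Stein's lemma) into expectations of the form $\Ep[\sigma_\star(v)\sigma'(u)\,\cdot\,]$ and $\Ep[\sigma(u_k)\sigma'(u)\,\cdot\,]$, with $(u,v)$ correlated Gaussians. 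By the Hermite expansions, $\Ep[H_k(u)H_{k'}(v)]=k!\rho^k\delta_{kk'}$, so these become power series in $R$ and $Q$. The prefactor $\gamma d\cdot\frac{1}{m}$ gives the $\gamma m^{-1}$ scaling. In $f$, the diagonal term $k=i$ carries weight $1/m$ and is absorbed into $O_m(m^{-2})$; the off-diagonal neurons contribute correlation $Q$. Renormalization contributes the factors $(1-R^2)$ and $(1-Q)$. To see this, expand $\sqrt d/\|\tilde{\bm w}\|=1-\langle\bm w,\Delta\rangle/d+O(m^{-2})$ with $\Delta=O(m^{-1})$. This subtracts the radial component, and the second-order terms, including the squared gradient norm of order $\gamma^2 m^{-2}$, are $O_m(m^{-2})$. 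Collecting terms gives the $b_k$ coefficients in \eqref{eq:discrete}. For $a$, $-\partial_{a_i}\mL$ gives $\frac1m\Ep[(y-f)\sigma(u_i)]=\frac1m\{S(R)-aT\text{-type sum in }Q\}+O(m^{-2})$, which is the third line.

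The main obstacle is the rigorous Tensor Programs step. The batch size $n$ grows with $d$, so the finite-$n$ fluctuation term (the $\delta$-dependent term from $\frac{\gamma d}{n^2}\sum_l\|\bm x_l\|^2\cdots$) appears in $\|\tilde{\bm w}\|^2$. We must check that it affects only the norm and enters the normalized overlaps at order $O_m(m^{-2})$. We also need to justify interchanging the infinite Hermite sums with the limits, using polynomial boundedness and $|R|,|Q|\le1$. We would handle the first point by noting that this term is $\gamma^2 m^{-2}$ times a bounded constant, and the second by dominated convergence.
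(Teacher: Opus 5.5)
Your proposal is correct and follows essentially the same route as the paper: induction on $t$ with a Tensor Programs limit at each step, and a first-order expansion $\sqrt d/\|\tilde{\bm w}_i^{t+1}\|_2 = 1 - \gamma A(t)/m + O_m(m^{-2})$ that removes the radial component and produces the $(1-R^2)$ and $(1-Q)$ factors, with the batch-noise contribution relegated to $O_m(m^{-2})$ and Hermite expansions yielding the $b_k$ series. Your Stein's-lemma computation on the fresh-batch Gaussian preactivations is exactly what the $\dot Z$ correction term in the paper's Tensor Programs decomposition encodes, so the two computations coincide.
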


The proof and derivation process is given in Section \ref{sec:discrete}.
Note that Assumptions \ref{ass:link} and \ref{ass:activation} guarantee the absolute convergence of the infinite series that appear in the right-hand side of (\ref{eq:discrete}).

We have some implications of the difference equation. First, we can reduce the number of macroscopic variables, which consist of $O_m(m^2)$ components, that are intractable when $m\to\infty$, to only three variables under the symmetric initialization in Assumption \ref{ass:symmetric_init}. 
Details will be presented in Lemma \ref{lem:symmetric} in Appendix \ref{sec:discrete}.
Second, the macro variable following \eqref{eq:discrete} is updated by balancing (i) the values determined by the interaction between $\sigma_\star$ and $\sigma$ through $c_{\star,k}$ and $c_k$, and (ii) the values determined solely by $\sigma$ through $c_k$.

\subsection{Derivation and validation of the discrete system}\label{sec:discrete} 

In this section, we rigorously derive recursive equations \eqref{eq:discrete}, and prove Lemma \ref{lem:symmetric} at the same time. The actual derivation of the recursive dynamics is essentially based on Tensor Programs (\citet{yang2019scaling, yang2020tensor2, yang2020tensor3, yang2021tensor4}). 
Following their notation of Tensor Programs, for a collection of potentially random $d$-dimensional vectors $x^1, x^2,...,x^k \in \R^d$, we consider real-valued random variables  $Z^{x^1},...,Z^{x^k}$ such that a distribution of elements of $x^j$ will be shown to converge to a distribution of $Z^{x^j}$ as $d \to \infty$, that is, it holds that
\begin{align}
    \frac{1}{d} \sum_{j=1}^d \psi(x_j^1,...,x_j^k) \to \Ep[\psi(Z^{x^1},...,Z^{x^k})]
\end{align}
almost surely for every polynomially bounded $\psi:\R^k \to \R$.
Further, we can decompose the random variable as $Z^{x^j} = \hat{Z}^{x^j} + \dot{Z}^{x^j}$, where $\hat{Z}^{x^j}$ and $\dot{Z}^{x^j}$ has specific formulation (see Box 2 in \cite{yang2020tensor3}.
Detailed discussion and examples are given in \cite{yang2019scaling,yang2020tensor3}.

In the following, we proceed by induction: assuming the statement of Lemma~\ref{lem:symmetric} and the equation \eqref{eq:discrete} hold at the iteration  
$t$, we show they also hold for the iteration $t+1$. Then we obtain that Lemma \ref{lem:symmetric} and the recursive equation \eqref{eq:discrete} hold.

\textbf{Preparation.} As a preliminary, we introduce several functions and constants for convenience. In Section \ref{sec:discrete}, we abbreviate notation by writing $R(t),   Q(t),   a(t)$ instead of $R^m(t),   Q^m(t),   a^m(t)$. First, for functions \(a, b, c, d:\mathbb{R}\to \mathbb{R}\) that are integrable with respect to the Gaussian measure, define
\begin{align}
    &I_R(t; a, b) = \mathbb{E}[a(z_1)b(z_2)], \quad
    I_Q(t; a, b) = \mathbb{E}[a(z_3)b(z_4)], \\
    &J_R(t; a, b, c) = \mathbb{E}[a(z_5)b(z_6)c(z_7)], \quad
    J_Q(t; a, b, c) = \mathbb{E}[a(z_8)b(z_9)c(z_{10})], \\
    &K_R(t; a, b, c, d)= \mathbb{E}[a(z_{11})b(z_{12})c(z_{13})d(z_{14}))], \quad
    K_Q(t; a, b, c, d)= \mathbb{E}[a(z_{15})b(z_{16})c(z_{17})d(z_{18}))],
\end{align}
where
\begin{align}
    &\begin{pmatrix}
        z_1(t) \\
        z_2(t)
    \end{pmatrix}
    \sim
    GP\left(\bm{0}, 
    \begin{pmatrix}
        1 & R(t) \\
        R(t) & 1
    \end{pmatrix}\right), \quad
    \begin{pmatrix}
        z_3(t) \\
        z_4(t)
    \end{pmatrix}
    \sim
    GP\left(\bm{0}, 
    \begin{pmatrix}
        1 & Q(t) \\
        Q(t) & 1
    \end{pmatrix}\right), \\
    &\begin{pmatrix}
        z_5(t) \\
        z_6(t) \\
        z_7(t)
    \end{pmatrix}
    \sim
    GP\left(\bm{0}, 
    \begin{pmatrix}
        1 & R(t) & R(t) \\
        R(t) & 1 & Q(t) \\
        R(t) & Q(t) & 1
    \end{pmatrix}\right), \quad
    \begin{pmatrix}
        z_8(t) \\
        z_9(t) \\
        z_{10}(t)
    \end{pmatrix}
    \sim
    GP\left(\bm{0}, 
    \begin{pmatrix}
        1 & Q(t) & Q(t) \\
        Q(t) & 1 & Q(t) \\
        Q(t) & Q(t) & 1
    \end{pmatrix}\right), \\
    &\begin{pmatrix}
        z_{11}(t) \\
        z_{12}(t) \\
        z_{13}(t) \\
        z_{14}(t) 
    \end{pmatrix}
    \sim
    GP\left(\bm{0}, 
    \begin{pmatrix}
        1 & R(t) & R(t) & R(t) \\
        R(t) & 1 & Q(t) & Q(t) \\
        R(t) & Q(t) & 1 & Q(t) \\
        R(t) & Q(t) & Q(t) & 1 
    \end{pmatrix}
    \right), \quad
    \begin{pmatrix}
        z_{15}(t) \\
        z_{16}(t) \\
        z_{17}(t) \\
        z_{18}(t) 
    \end{pmatrix}
    \sim
    GP\left(\bm{0}, 
    \begin{pmatrix}
        1 & Q(t) & Q(t) & Q(t) \\
        Q(t) & 1 & Q(t) & Q(t) \\
        Q(t) & Q(t) & 1 & Q(t) \\
        Q(t) & Q(t) & Q(t) & 1 
    \end{pmatrix}
    \right).
\end{align}
Also, we further define the following constants:
\begin{align}
    &s_1 = \mathbb{E}[\sigma'(z)^2], \quad
    s_2 = \mathbb{E}[\sigma(z)\sigma''(z)], \quad
    s_3 = \mathbb{E}[\sigma(z)^2\sigma'(z)^2], \quad
    s_4 = \mathbb{E}[\sigma(z)^2], 
\end{align}
where $z\sim\mathcal{N}(0, 1)$. In what follows, we will derive the model \eqref{eq:discrete}, and prove Lemma \ref{lem:symmetric} at the same time.
\begin{lemma}[Symmetricity of macroscopic variables]\label{lem:symmetric}
    Under Assumption \ref{ass:symmetric_init}, we have 
    $
        R_i^m(t) = R_j^m(t), 
        Q_{i, j}^m(t) = Q_{i', j'}^m(t)$, and $
        a_i^m(t) = a_j^m(t)
    $
    for any integer $t\geq 1$ and $i,j, i', j' \in \{1,\dots,m\}$ that satisfy $\ i \ne j, i \ne j'$.
\end{lemma}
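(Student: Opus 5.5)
We argue by induction on $t$, proving the symmetry statement together with the recursion \eqref{eq:discrete}. The guiding idea is exchangeability. Under Assumption \ref{ass:symmetric_init}, the joint law of $(\bm w_\star, \bm w_1^0,\dots,\bm w_m^0, \bm a^0)$ is invariant under any permutation $\pi$ of the neuron indices $\{1,\dots,m\}$: the $\bm w_i^0$ are i.i.d., independent of $\bm w_\star$, and $a_i^0=\bar a$ for all $i$. The SGD updates \eqref{def:sgd_w}--\eqref{def:sgd_a} are equivariant under such permutations. Relabelling neurons in $f(\cdot;\bm a,\bm W)$ leaves the network output, and hence $\mathcal L_t$, unchanged, and the normalization acts neuron-wise. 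Hence for every $t$ the joint law of $\{(\bm w_i^t,a_i^t)\}_{i}$ together with $\bm w_\star$ and the data is permutation invariant.

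The base case $t=0$ is direct. By the law of large numbers, $\frac1d\bm w_\star^\top\bm w_i^0\to0$ and $\frac1d \bm w_i^{0\top}\bm w_j^0\to 0$ for $i\neq j$ almost surely, and $a_i^0=\bar a$. So $R_i^m(0)=Q^m_{ij}(0)=0$ and $a_i^m(0)=\bar a$ for all indices.

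For the inductive step, we express one SGD step as a Tensor Program. The quantities are the preactivations $\bm X^t\bm w_i^t/\sqrt d$ and $\bm X^t\bm w_\star/\sqrt d$, the entrywise nonlinearities $\sigma,\sigma',\sigma_\star$, and the backward multiplications by $\bm X^{t\top}$. The Master Theorem then gives almost-sure limits of $\frac1d\bm w_\star^\top\bm w_i^{t+1}$, $\frac1d\bm w_i^{t+1\top}\bm w_j^{t+1}$ and $a_i^{t+1}$. These limits are deterministic functions $F_R,F_Q,F_a$ of the limiting overlaps at time $t$ and of $(a_k^m(t))_k$: they are Gaussian expectations of the type $I_R,I_Q,J_R,J_Q,K_R,K_Q$, with covariances built from $R_k^m(t)$ and $Q_{kl}^m(t)$. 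Because the batch at time $t$ is fresh, $\bm X^t$ is independent of the past, so the program's initial vectors are legitimate.

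The key point of the inductive step concerns the form of these limits. For neuron $i$, the limit of $\frac1d\bm w_\star^\top\bm w_i^{t+1}$ depends on $R_i$, on $a_i$, and on the other neurons only through symmetric sums over $k\neq i$ of terms in $(a_k,R_k,Q_{ik})$. The same holds for $Q_{ij}$ with the pair $(i,j)$, and for $a_i$. By the inductive hypothesis all $R_k$ equal $R^m(t)$, all off-diagonal $Q_{kl}$ equal $Q^m(t)$, and all $a_k$ equal $a^m(t)$. Therefore the limits are the same for every $i$ and for every pair $i\neq j$, which proves Lemma \ref{lem:symmetric} at $t+1$. The permutation-invariance argument of the first paragraph gives the same conclusion more abstractly, since the limits are deterministic and equal in law across indices.

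To obtain \eqref{eq:discrete}, we expand $F_R,F_Q,F_a$ in Hermite series. Using $\mathbb E[H_k(z_1)H_l(z_2)]=k!\,\rho^k\delta_{kl}$ for standard Gaussians $(z_1,z_2)$ with correlation $\rho$, we obtain the series in $c_{\star,k}c_k$ and $c_k^2$. The gradient on $\bm w_i$ carries a factor $a_i/m$, so with step $\gamma d$ the raw update has size $O(\gamma/m)$. Expanding the normalization $\sqrt d/\|\tilde{\bm w}_i^{t+1}\|_2$ to first order yields the projection factors $(1-R^2)$ and $(1-Q)$, and the remaining terms are $O_m(m^{-2})$. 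The self-interaction terms involving $s_1,\dots,s_4$ enter with an extra $1/m$ and are absorbed into this remainder.

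The main obstacle is the bookkeeping in the Master Theorem step. One must verify that the second-order contributions (such as $\|\nabla_{\bm w_i}\mathcal L_t\|^2$) and the finite-batch fluctuation terms, which survive as $n,d\to\infty$ with $n/d\to\delta$, are genuinely $O(m^{-2})$ and uniform in $i$. This requires tracking the $\dot Z$ correction terms carefully. The symmetry claim itself only needs the structural fact that these limits are functions of the symmetric inputs; it does not need their explicit form.
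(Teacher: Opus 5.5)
Your proposal is correct and follows essentially the same route as the paper: an induction on $t$ in which the Tensor Programs limits at step $t+1$ are Gaussian expectations whose covariances are built from the time-$t$ overlaps and second-layer weights, so the inductive hypothesis makes $R_i^m(t+1)$, $Q_{i,j}^m(t+1)$ and $a_i^m(t+1)$ independent of the indices; your exchangeability remark is a supplementary justification that the paper does not spell out, and it does not need the explicit form of the limits. One small imprecision: the limit for neuron $i$ also contains double sums over $j,k\neq i$ involving $Q_{jk}$ (for example in $\mathbb{E}[(Z^{\bm{\ell}_{t,i}})^2]$, which enters the normalization), not only terms in $(a_k,R_k,Q_{ik})$, but these are also all equal under the inductive hypothesis, so the conclusion is unaffected.
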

We proceed with our proof inductively: let the model \eqref{eq:discrete} and the statement of Lemma \ref{lem:symmetric} hold for the $t$-th iteration, and prove the same statement for the $t+1$-th iteration.
We first calculate
\begin{align}
    \bm{G}_{\bm{w}}^t &= \nabla_{\bm{W}^t} \frac{1}{2n} \sum_{s=1}^n \Big\{ y_s^t - \frac{1}{m} \sum_{i=1}^m a_i^t \sigma(\langle\bm{x}_s^t,\bm{w}_i^t\rangle / \sqrt{d})\Big\}^2 \\
    &= \nabla_{\bm{W}^t} \frac{1}{2n} \Big\| \bm{y}^t - \frac{1}{m} \sigma(\bm{X}^t\bm{W}^t) \bm{a}^t \Big\|_2^2 \\
    &= -\frac{1}{n} \cdot\frac{1}{m} \bm{X}^{t \top} \Big\{ \big( \bm{y}^t \bm{a}^{t\top} - \frac{1}{m} \sigma(\bm{X}^t\bm{W}^t)\bm{a}^t \bm{a}^{t\top} \big) \odot \sigma'(\bm{X}^t\bm{W}^t) \Big\} \in \mathbb{R}^{d\times m},
\end{align}
where we defined $\bm{X}^t = (\bm{x}_1^t, \dots, \bm{x}_n^t)^\top / \sqrt{d}\in\mathbb{R}^{n\times d},  \bm{a}^t = (a_1^t,\dots, a_m^t)^\top \in\mathbb{R}^m$.  We introduce
\begin{align}
    \bm{\ell}_t = \big( \bm{y}^t \bm{a}^{t\top} - \frac{1}{m} \sigma(\bm{X}^t\bm{W}^t)\bm{a}^t \bm{a}^{t\top} \big) \odot \sigma'(\bm{X}^t\bm{W}^t) 
    =: (\bm{\ell}_{t, 1}, \dots, \bm{\ell}_{t, m})\in\mathbb{R}^{d\times m}
\end{align}
and express 
\begin{align}
    \bm{G}_{\bm{w}}^t = -\frac{1}{n}\cdot \frac{1}{m} \bm{X}^{t\top}\bm{\ell}_t 
    =: (\bm{G}_{\bm{w}, 1}^t , \dots, \bm{G}_{\bm{w}, m}^t) \in\mathbb{R}^{d\times m}.
\end{align}

\textbf{About $R(t)$:}
As the first step, we study the alignment term $R(t)$ through the analysis of $R_i(t)$.
Since we have 
\begin{align}
    R_i(t) = \Ep[Z^{\bm{w}^\star} Z^{\bm{w}_i^{t+1}}],    
\end{align}
we mainly study the term $Z^{\bm{w}_i^{t+1}}$.
To study this term, we recall the normalization step on the first layer: 
\begin{align}
    \bm{w}_i^{t + 1} = \frac{\sqrt{d}\tilde{\bm{w}}_i^{t+1}}{\| \tilde{\bm{w}}_i^{t+1} \|_2} = \frac{\tilde{\bm{w}}_i^{t+1}}{\sqrt{\tilde{\bm{w}}_i^{t+1 \top} \tilde{\bm{w}}_i^{t+1} / d}},
\quad
    \tilde{\bm{w}}_i^{t+1} = \bm{w}_i^t - \gamma d \bm{G}_{\bm{w}, i}^t,
\end{align}
Using the Tensor Programs formalism \citep{yang2019scaling}, we obtain the form
\begin{align}
    Z^{\bm{w}_i^{t+1}} &=Z^{\tilde{\bm{w}}_i^{t+1}} / \sqrt{\mathbb{E}[(Z^{\tilde{\bm{w}}_i^{t+1}})^2]}\\
    &= (Z^{\bm{w}_i^t} + \gamma Z^{-d\bm{G}_{\bm{w}, i}^t}) / \sqrt{\mathbb{E}[(Z^{\bm{w}_i^t} + \gamma Z^{-d \bm{G}_{\bm{w}, i}^t})^2]}. \label{eq:zw}
\end{align}
Then, we will study the term $Z^{-d\bm{G}_{\bm{w}, i}^t}$ and the expectation $\mathbb{E}[(Z^{\bm{w}_i^t} + \gamma Z^{-d \bm{G}_{\bm{w}, i}^t})^2]$.

First, we directly study the term $Z^{-d\bm{G}_{\bm{w}, i}^t}$.
By the variable decomposition of the Tensor Programs, (e.g.,  Box 1 in \cite{yang2020tensor3}), each element of $-d \bm{G}_{\bm{w}, i}^t$ asymptotically follows $Z^{-d \bm{G}_{\bm{w}, i}^t} $, which is decomposed as 
\begin{align}
    Z^{-d \bm{G}_{\bm{w}, i}^t} = \frac{1}{m \delta} \{ \hat{Z}^{\bm{X}^{t\top} \bm{\ell}_{t, i}} + \dot{Z}^{\bm{X}^{t\top} \bm{\ell}_{t, i}} \}, \label{eq:zdg}
\end{align}
where  the variables $\hat{Z}^{\bm{X}^{t\top} \bm{\ell}_{t, i}}$ and $\dot{Z}^{\bm{X}^{t\top} \bm{\ell}_{t, i}}$ follow Box 1 and Theorem 2.1 in \cite{yang2020tensor3}.
About the term $\dot{Z}^{\bm{X}^{t\top} \bm{\ell}_{t, i}}$ in \eqref{eq:zdg}, since we assume the statement of Lemma \ref{lem:symmetric} holds for the iteration $t$, we obtain the detailed form of $\dot{Z}^{\bm{X}^{t\top} \bm{\ell}_{t, i}}$ as
\begin{align}
    &\dot{Z}^{\bm{X}^{t \top}\bm{\ell}_{t, i}} \\
    &=  \delta a(t) \Big\{ \hat{Z}^{\bm{w}^\star} \mathbb{E}[\sigma_\star'(\hat{Z}^{\bm{X}^t \bm{w}^\star}) \sigma'(\hat{Z}^{\bm{X}^t \bm{w}_i^t})] \\
    &\quad +   Z^{\bm{w}_i^t} \mathbb{E}\Big[-\frac{a(t)}{m} \sigma'(\hat{Z}^{\bm{X}^t \bm{w}_i^t})^2 + \Big\{ \sigma_\star(\hat{Z}^{\bm{X}^t \bm{w}^\star}) + \hat{Z}^{\bm{\varepsilon}^t} - \frac{a(t)}{m}\sum_{j=1}^m \sigma(\hat{Z}^{\bm{X}^t \bm{w}_j^t})\Big\}  \sigma''(\hat{Z}^{\bm{X}^t \bm{w}_i^t}) \Big] \\
    &\quad +   \sum_{j\ne i}Z^{\bm{w}_j^t}\mathbb{E}\Big[-\frac{a(t)}{m} \sigma'(\hat{Z}^{\bm{X}^t \bm{w}_i^t}) \sigma'(\hat{Z}^{\bm{X}^t \bm{w}_j^t}) \Big]
    \Big\} \\
    &=   \delta a(t) \Big\{ \hat{Z}^{\bm{w}^\star} \mathbb{E}[\sigma_\star'(\hat{Z}^{\bm{X}^t \bm{w}^\star}) \sigma'(\hat{Z}^{\bm{X}^t \bm{w}_i^t})] \\
    &\quad +    Z^{\bm{w}_i^t} \Big( \mathbb{E}[\sigma_\star(\hat{Z}^{\bm{X}^t \bm{w}^\star})\sigma''(\hat{Z}^{\bm{X}^t \bm{w}_i^t})]
    - \frac{a(t)}{m}\sum_{j\ne i}\mathbb{E}[\sigma(\hat{Z}^{\bm{X}^t \bm{w}_j^t})\sigma''(\hat{Z}^{\bm{X}^t \bm{w}_i^t})] \\
    &\quad -   \frac{a(t)}{m} \mathbb{E}[\sigma'(\hat{Z}^{\bm{X}^t \bm{w}_i^t})^2] - \frac{a(t)}{m}\mathbb{E}[\sigma(\hat{Z}^{\bm{X}^t \bm{w}_i^t}) \sigma''(\hat{Z}^{\bm{X}^t \bm{w}_i^t})]
    \Big) \\
    &\quad -   \frac{a(t)}{m} \sum_{j\ne i} Z^{\bm{w}_j^t} \mathbb{E}[\sigma'(\hat{Z}^{\bm{X}^t \bm{w}_i^t}) \sigma'(\hat{Z}^{\bm{X}^t \bm{w}_j^t})]
    \Big\} \\
    &=   \delta a(t) \Big\{ \hat{Z}^{\bm{w}^\star} I_R(t; \sigma_\star', \sigma') 
    + Z^{\bm{w}_i^t} \Big( I_R(t; \sigma_\star, \sigma'') - a(t) \frac{m-1}{m} I_Q(t; \sigma, \sigma'') - \frac{a(t)}{m}s_1 - \frac{a(t)}{m}s_2 \Big) \\
    &\quad -   \frac{a(t)}{m} \sum_{j\ne i} Z^{\bm{w}_j^t}I_Q(t; \sigma', \sigma')
    \Big\}.
\end{align}
To achieve this form, we utilize the form
\begin{align}
    Z^{\bm{\ell}_{t, i}} = a_i^t  \Big\{ \sigma_\star(\hat{Z}^{\bm{X}^t \bm{w}^\star}) + \hat{Z}^{\bm{\varepsilon}^t} - \frac{1}{m}\sum_{j=1}^m a_j^t \sigma(\hat{Z}^{\bm{X}^t \bm{w}_j^t})\Big\} \sigma'(\hat{Z}^{\bm{X}^t \bm{w}_i^t}). \label{eq:zl}
\end{align}
Then, we can rewrite \eqref{eq:zdg} as
\begin{align} 
    &m Z^{-d \bm{G}_{\bm{w}, i}^t}  \\
    & = \frac{1}{\delta} (\hat{Z}^{\bm{X}^{t \top}\bm{\ell}_{t, i}} + \dot{Z}^{\bm{X}^{t \top}\bm{\ell}_{t, i}}) \\
    &=   \frac{1}{\delta} \hat{Z}^{\bm{X}^{t \top}\bm{\ell}_{t, i}}
    + a(t) \hat{Z}^{\bm{w}^\star} I_R(t;\sigma_\star', \sigma') \\
    & \quad +   Z^{\bm{w}_i^t} \Big\{ a(t) I_R(t; \sigma_\star, \sigma'') - a(t)^2 I_Q(t;\sigma, \sigma'') + \frac{a(t)^2}{m} \big(I_Q(t; \sigma, \sigma'') - s_1 - s_2\big) \\
    & \quad -   \frac{a(t)^2}{m} \sum_{j\ne i}Z^{\bm{w}_j^t} I_Q(t; \sigma', \sigma')
    \Big\}. \label{eq:zdg_new}
\end{align}

Second, we study the expectation $\mathbb{E}[(Z^{\bm{w}_i^t} + \gamma Z^{-d \bm{G}_{\bm{w}, i}^t})^2]$ in \eqref{eq:zw}, which includes the cross term $\frac{2\gamma}{m}\mathbb{E}[Z^{\bm{w}_i^t} m Z^{-d \bm{G}_{\bm{w}, i}^t}]$ and the second moments  $\mathbb{E}[(Z^{{\bm{w}}_i^{t+1}})^2]$ and $\mathbb{E}[(\frac{\gamma}{m}\cdot m Z^{-d \bm{G}_{\bm{w}, i}^t})^2] $.
In particular, we compute $\mathbb{E}[(Z^{\tilde{\bm{w}}_i^{t+1}})^2] = \lim_{n,d\to\infty} \| \tilde{\bm{w}}_i^{t+1} \|_2^2 / d$ following the relation $Z^{\tilde{\bm{w}}_i^t} = Z^{\bm{w}_i^t} + \gamma / m\cdot m Z^{-d \bm{G}_{\bm{w}, i}^t}$. 
About the cross term, we utilize \eqref{eq:zdg} and obtain
\begin{align}
    &\frac{2\gamma}{m}\mathbb{E}[Z^{\bm{w}_i^t} m Z^{-d \bm{G}_{\bm{w}, i}^t}] \\
    &=   \frac{2\gamma}{m} \Big\{ a(t) R(t) I_R(t; \sigma_\star', \sigma') + a(t) I_R(t; \sigma_\star, \sigma'') - a(t)^2 I_Q(t; \sigma, \sigma'') \\
    &\quad +   \frac{a(t)^2}{m} \big(I_Q(t; \sigma, \sigma'') - s_1 - s_2\big) - a(t)^2 \frac{m-1}{m} Q(t) I_Q(t; \sigma', \sigma')
    \Big\} \\
    &=   \frac{2\gamma}{m} \Big\{ a(t) R(t) I_R(t; \sigma_\star', \sigma') + a(t) I_R(t; \sigma_\star, \sigma'') - a(t)^2 I_Q(t; \sigma, \sigma'') - a(t)^2 Q(t) I_Q(t;\sigma', \sigma')
    \Big\} \\
    &\quad +  \frac{2\gamma}{m^2} \Big\{ a(t)^2 \big(I_Q(t; \sigma, \sigma'') - s_1 - s_2\big) + a(t)^2 Q(t) I_Q(t; \sigma', \sigma')
    \Big\} \\
    &=: \frac{2\gamma}{m} A(t) + O_m(m^{-2}).
\end{align}
Next, we calculate the second moments $\mathbb{E}[(\frac{\gamma}{m}\cdot m Z^{-d \bm{G}_{\bm{w}, i}^t})^2]$ by utilizing the decomposition \eqref{eq:zdg} as
\begin{align}
    \mathbb{E}\Big[\Big(\frac{\gamma}{m}\cdot m Z^{-d \bm{G}_{\bm{w}, i}^t}\Big)^2\Big] 
    = \frac{\gamma^2}{m^2} \mathbb{E}[(m Z^{-d \bm{G}_{\bm{w}, i}^t})^2] 
    = \frac{\gamma^2}{m^2} \Big\{\mathbb{E}\Big[\Big(\frac{1}{\delta} \hat{Z}^{\bm{X}^{t \top}\bm{\ell}_{t, i}}\Big)^2\Big] + \mathbb{E}\Big[\frac{1}{\delta}\Big(\dot{Z}^{\bm{X}^{t \top}\bm{\ell}_{t, i}}\Big)^2\Big]
    \Big\}.
\end{align}
With the relation $\mathbb{E}[(\frac{1}{\delta} \hat{Z}^{\bm{X}^{t \top}\bm{\ell}_{t, i}})^2] = \frac{1}{\delta}\mathbb{E}[( Z^{\bm{\ell}_{t, i}})^2]$ and the form \eqref{eq:zl}, it follows
\begin{align}
    &\mathbb{E}[( Z^{\bm{\ell}_{t, i}})^2] \\
    &=   a(t)^2 \mathbb{E}\Big[\Big\{ \sigma_\star(\hat{Z}^{\bm{X}^t \bm{w}^\star}) + \hat{Z}^{\epsilon^t} - \frac{a(t)}{m} \sum_{j=1}^m \sigma(\hat{Z}^{\bm{X}^t \bm{w}_j^t}) \Big\}^2 \sigma'(\hat{Z}^{\bm{X}^t \bm{w}_i^t})^2\Big] \\
    &=   a(t)^2   \mathbb{E}[ \sigma_\star( \hat{Z}^{\bm{X}^t \bm{w}^\star})^{2}
      \sigma'( \hat{Z}^{\bm{X}^t w^t_i} )^2]
    + \sigma_\varepsilon^2 a(t)^2   \mathbb{E}[ \sigma'( \hat{Z}^{\bm{X}^t w^t_i} )^2]
    \\ 
    & \quad - \frac{2 a(t)^3}{m} 
    \sum_{j \neq i} \mathbb{E}[ 
    \sigma_\star( \hat{Z}^{\bm{X}^t \bm{w}^\star} )
    \sigma( \hat{Z}^{\bm{X}^t w^t_j} )
    \sigma'( \hat{Z}^{\bm{X}^t w^t_i} )^{2}
    ]
    \\ 
    & \quad - \frac{2 a(t)^3}{m} 
    \mathbb{E}[
    \sigma_\star( \hat{Z}^{\bm{X}^t \bm{w}^\star} )
    \sigma( \hat{Z}^{\bm{X}^t w^t_i} )
    \sigma'( \hat{Z}^{\bm{X}^t w^t_i} )^{2}
    ]
    \\ 
    & \quad + \frac{a(t)^4}{m^2}
    \sum_{\substack{j=1 \\ j\neq i}}^{m}
    \sum_{\substack{k=1 \\ k\neq i,  k\neq j}}^{m}
    \mathbb{E}[
    \sigma( \hat{Z}^{\bm{X}^t w^t_j} )
    \sigma( \hat{Z}^{\bm{X}^t w^t_k} )
    \sigma'( \hat{Z}^{\bm{X}^t w^t_i} )^{2}
    ]
    \\ 
    & \quad  + \frac{a(t)^4}{m^2}
    \sum_{\substack{j=1 \\ j\neq i}}^{m}
    \mathbb{E}[
    \sigma( \hat{Z}^{\bm{X}^t w^t_j} )^{2}
    \sigma'( \hat{Z}^{\bm{X}^t w^t_i} )^{2}
    ]
     + \frac{a(t)^4}{m^2}
    \mathbb{E}[
    \sigma( \hat{Z}^{\bm{X}^t w^t_i} )^{2}
    \sigma'( \hat{Z}^{\bm{X}^t w^t_i} )^{2}
    ] \\
    &=   a(t)^2   I_R(t  ;  \sigma_{\star}^{2},  \sigma'^{ 2})
    + \sigma_\varepsilon^2 a(t)^2 s_{1}
    - 2 a(t)^3 \frac{m-1}{m}  
    J_R(t  ;  \sigma_{\star},  \sigma,  \sigma'^{ 2})
    \\ 
    & \quad - \frac{2 a(t)^3}{m}  
    I_R(t  ;  \sigma_{\star},  \sigma \cdot \sigma'^{ 2})
    + a(t)^4 \frac{(m-1)(m-2)}{m^{2}}  
    J_Q(t  ;  \sigma,  \sigma,  \sigma'^{ 2})
    \\ 
    & \quad + a(t)^4 \frac{m-1}{m^{2}}  
    I_Q(t  ;  \sigma^{2},  \sigma'^{ 2})
    + \frac{a(t)^4}{m^{2}}   s_{3} \\ 
    & =   a(t)^2 I_R(t  ;  \sigma_{\star}^{2},  \sigma'^{ 2})
    + a(t)^2 s_{1}
    - 2 a(t)^3   J_R(t  ;  \sigma_{\star},  \sigma,  \sigma'^{ 2})
    + a(t)^4   J_Q(t  ;  \sigma,  \sigma,  \sigma'^{ 2})
    \\ 
    & \quad  + \frac{1}{m}\Big\{ 2 a(t)^3   J_R(t  ;  \sigma_{\star},  \sigma,  \sigma'^{ 2})
    - 2 a(t)^3  
    I_R(t  ;  \sigma_{\star},  \sigma \cdot \sigma'^{ 2})
    \\ 
    &\quad - 3 a(t)^4  
    J_Q(t  ;  \sigma,  \sigma,  \sigma'^{ 2})
    + a(t)^4  
    I_Q(t  ;  \sigma^{2},  \sigma'^{ 2})\Big\}
    \\ 
    & \quad + \frac{1}{m^2}\Big\{ 2 a(t)^4  
    J_Q(t  ;  \sigma,  \sigma,  \sigma'^{ 2})
    - a(t)^4  
    I_Q(t  ;  \sigma^{2},  \sigma'^{ 2})
    + a(t)^4   s_{3} \Big\}\\
    &=   O_m (1). \label{eq:zl_2}
\end{align}
Also, we obtain
\begin{align}
    &\mathbb{E}\Big[
    \Big( \frac{1}{\delta} 
    \dot{Z}^{ \bm{X}^t \ell(\bm{X}^t \bm{w}^\star)_i}
    \Big)^{2}
    \Big] \\ 
    &=  
    \mathbb{E}\Big[ \Big\{
     a(t)  \hat{Z}^{ \bm{w}^\star} 
    I_R( t  ;  \sigma_{\star}',  \sigma' )
    + Z^{ w^{t}_{i}} 
    \Big(
    a(t)  I_R(t  ;  \sigma_{\star},  \sigma'' )
    -
    a(t)^{2}  I_Q(t  ;  \sigma,  \sigma'' )
    \\ 
    &\quad
    + \frac{a(t)^{2}}{m} 
    (
    I_Q(t  ;  \sigma,  \sigma'' )
    -
    s_{1} - s_{2}
    ) \Big)
    - \frac{a(t)^{2}}{m}
    \sum_{j \neq i}
    Z^{ \bm{w}_{j}} 
    I_Q(t  ;  \sigma',  \sigma' )
    \Big\}^2 \Big]
    \\ 
    &=   a(t)^{2}
    I_R(t  ;  \sigma_{\star}',  \sigma' )^{2}
    + \Big\{
    a(t)  I_R(t  ;  \sigma_{\star},  \sigma'' )
    -
    a(t)^{2} I_Q(t  ;  \sigma,  \sigma'' )
    + \frac{a(t)^{2}}{m}
    \big(
    I_Q(t  ;  \sigma,  \sigma'' )
    - s_{1} - s_{2}
    \big) \Big\}^{2}
    \\ 
    &\quad
    + \frac{a(t)^{4}}{m^{2}}
    \mathbb{E}\Big[
    \Big(
    \sum_{j\neq i} Z^{ \bm{w}_{j}} 
    I_Q(t  ;  \sigma',  \sigma' )
    \Big)^{2}
    \Big]
    \\ 
    &\quad
    + 2  a(t)  R(t) 
    I_R(t  ;  \sigma_{\star}',  \sigma' )
    \Big(
    a(t)  I_R(t  ;  \sigma_{\star},  \sigma'' )
    - a(t)^{2}  I_Q(t  ;  \sigma,  \sigma'' )
    + \frac{a(t)^{2}}{m}
    (
    I_Q(t  ;  \sigma,  \sigma'' )
    - s_{1} - s_{2}
    ) \Big)
    \\ 
    &\quad
    - 2  a(t)^{2}  \frac{m-1}{m} 
    Q(t)  I_Q(t  ;  \sigma',  \sigma' )
    \Big(
    a(t)  I_R(t  ;  \sigma_{\star},  \sigma'' )
    - a(t)^{2}  I_Q(t  ;  \sigma,  \sigma'' ) \\
    &\quad
    + \frac{a(t)^{2}}{m}
    (
    I_Q(t  ;  \sigma,  \sigma'' )
    - s_{1} - s_{2}
    ) \Big)
    - 2  a(t)^{3}  \frac{m-1}{m} 
    R(t) 
    I_R(t  ;  \sigma_{\star}',  \sigma' )
    I_Q(t  ;  \sigma',  \sigma' )
    \\ 
    &=   a(t)^{2}
    I_R(t  ;  \sigma_{\star}',  \sigma')^2
    + \Big(
    a(t)  I_R(t  ;  \sigma_{\star},  \sigma'' )
    - a(t)^{2} I_Q(t  ;  \sigma,  \sigma'' )
    + \frac{a(t)^{2}}{m}
    (
    I_Q(t  ;  \sigma,  \sigma'' )
    - s_{1} - s_{2}
    )^{2} \Big)
    \\ 
    &\quad
    + a(t)^{4}
    I_Q(t  ;  \sigma',  \sigma' )^{2}
    \left\{
    \frac{(m-1)(m-2)}{m^{2}}  Q(t)
    + \frac{m-1}{m^{2}}
    \right\}
    \\ 
    &\quad
    + 2  a(t) R(t)
    I_R(t  ;  \sigma_{\star}',  \sigma' )
    \Big(
    a(t)  I_R(t  ;  \sigma_{\star},  \sigma'' )
    - a(t)^{2} I_Q(t  ;  \sigma,  \sigma'' )
    + \frac{a(t)^{2}}{m}
    (
    I_Q(t  ;  \sigma,  \sigma'' )
    - s_{1} - s_{2}
    ) \Big)
    \\ 
    &\quad
    - 2  a(t)^{2} \frac{m-1}{m}
    Q(t) 
    I_Q(t  ;  \sigma',  \sigma' )
    \Big(
    a(t)  I_R(t  ;  \sigma_{\star},  \sigma'' )
    - a(t)^{2} I_Q(t  ;  \sigma,  \sigma'' ) \\
    &\quad
    + \frac{a(t)^{2}}{m}
    (
    I_Q(t  ;  \sigma,  \sigma'' ) 
    - s_{1} - s_{2}
    ) \Big)
    \\ 
    &\quad
    - 2  a(t)^{3} \frac{m-1}{m}
    R(t) 
    I_R(t  ;  \sigma_{\star}',  \sigma' )
    I_Q(t  ;  \sigma',  \sigma' )
    \\ 
    & =   a(t)^{2}
    I_R(t  ;  \sigma_{\star}',  \sigma' )^{2}
    + \Big\{
    a(t)  I_R(t  ;  \sigma_{\star},  \sigma'' )
    - a(t)^{2} I_Q(t  ;  \sigma,  \sigma'' )
    \Big\}^{2}
    + a(t)^{4}
    I_Q(t  ;  \sigma',  \sigma' )^{2}
      Q(t)
    \\[12pt]
    &\quad
    + 2  a(t) R(t) 
    I_R(t  ;  \sigma_{\star}',  \sigma' )
    \Big\{
    a(t)  I_R(t  ;  \sigma_{\star},  \sigma'' )
    - a(t)^{2} I_Q(t  ;  \sigma,  \sigma'' )
    \Big\}
    \\ 
    &\quad
    - 2  a(t)^{2} Q(t) 
    I_Q(t  ;  \sigma',  \sigma' )
    \Big\{
    a(t)  I_R(t  ;  \sigma_{\star},  \sigma'' )
    - a(t)^{2} I_Q(t  ;  \sigma,  \sigma'' )
    \Big\}
    \\ 
    &\quad
    - 2  a(t)^{3} R(t) 
    I_R(t  ;  \sigma_{\star}',  \sigma' )
    I_Q(t  ;  \sigma',  \sigma' )
    \\[12pt]
    &\quad
    + m^{-1}\Big[
    2 a(t)^{2}
    \big(
    I_Q(t  ;  \sigma, \sigma'') - s_{1} - s_{2}
    \big)
    \big(
    a(t) I_R(t  ;  \sigma_{\star}, \sigma'')
    - a(t)^{2} I_Q(t  ;  \sigma, \sigma'')
    \big)
    \\ 
    &\quad
    + a(t)^{4}
    I_Q(t  ;  \sigma', \sigma')^{2}
    \big(
    -3 Q(t) + 1
    \big)
    + 2 a(t)^{3} R(t)
    I_R(t  ;  \sigma_{\star}', \sigma')
    \big(
    I_Q(t  ;  \sigma, \sigma'')
    - s_{1} - s_{2}
    \big)
    \\ 
    &\quad
    - 2 a(t)^{2} Q(t) 
    I_Q(t  ;  \sigma', \sigma')
    \Big\{
    a(t)^{2}
    \big(
    I_Q(t  ;  \sigma, \sigma'')
    - s_{1} - s_{2}
    \big)
    - a(t) I_R(t  ;  \sigma_{\star}, \sigma'')
    + a(t)^{2} I_Q(t  ;  \sigma, \sigma'')
    \Big\}
    \\ 
    &\quad
    + 2 a(t)^{3} R(t)
    I_R(t  ;  \sigma_{\star}', \sigma')
    I_Q(t  ;  \sigma', \sigma') \Big]
    \\[12pt]
    &\quad
    + m^{-2}
    \Big\{
    a(t)^{4}
    \big(
    I_Q(t  ;  \sigma, \sigma'')
    - s_{1} - s_{2}
    \big)^2
    + a(t)^{4}
    I_Q(t  ;  \sigma', \sigma')^{2}
    \big(
    2 Q(t) - 1
    \big)
    \\ 
    &\quad
    + 2 a(t)^{4} Q(t) 
    I_Q(t  ;  \sigma', \sigma')
    \big(
    I_Q(t  ;  \sigma, \sigma'')
    - s_{1} - s_{2}
    \big) \Big\} \\
    &=   O_m (1). \label{eq:zxl_2}
\end{align}
Combining the results \eqref{eq:zl_2} and \eqref{eq:zxl_2} for $\mathbb{E}[(\frac{1}{\delta} \hat{Z}^{\bm{X}^{t \top}\bm{\ell}_{t, i}})^2]$ and $\mathbb{E}[
    ( \frac{1}{\delta} 
    \dot{Z}^{ \bm{X}^t \ell(\bm{X}^t \bm{w}^\star)_i}
    )^{2}]$ with the relation \eqref{eq:zdg_new}, we get
\begin{align}
    \mathbb{E}\Big[\Big(\frac{\gamma}{m}\cdot m Z^{-d \bm{G}_{\bm{w}, i}^t}\Big)^2\Big] = O_m(m^{-2}).
\end{align}
Thus, it follows that
\begin{align}
    \mathbb{E}[(Z^{\tilde{\bm{w}}_i^{t+1}})^2] &= \mathbb{E}[(Z^{\bm{w}_i^t} + \gamma Z^{-d \bm{G}_{\bm{w}, i}^t})^2] \\ 
    &=\ \mathbb{E}[(Z^{\bm{w}_i^t} + \gamma m^{-1} \cdot m Z^{-d \bm{G}_{\bm{w}, i}^t})^2]  \\
    &= 1 + 2 A(t) \gamma m^{-1} + O_m(m^{-2}),
\end{align}
then we evaluate the expectation term in \eqref{eq:zw} as
\begin{align}
    \{\mathbb{E}[(Z^{\bm{w}_i^t} + \gamma Z^{-d \bm{G}_{\bm{w}, i}^t})^2]\}^{-1/2} 
    =\ 1 - A(t) \gamma m^{-1} + O_m(m^{-2}). \label{eq:variance}
\end{align}

Now, we are ready to study $R_i(t)$.
By using \eqref{eq:variance}, we update \eqref{eq:variance} as
\begin{align}
    Z^{\bm{w}_i^{t+1}} &=Z^{\tilde{\bm{w}}_i^{t+1}} / \sqrt{\mathbb{E}[(Z^{\tilde{\bm{w}}_i^{t+1}})^2]}\\
    &= (Z^{\bm{w}_i^t} + \gamma Z^{-d\bm{G}_{\bm{w}, i}^t}) / \sqrt{\mathbb{E}[(Z^{\bm{w}_i^t} + \gamma Z^{-d \bm{G}_{\bm{w}, i}^t})^2]} \\
    &= (Z^{\bm{w}_i^t} + \gamma Z^{-d\bm{G}_{\bm{w}, i}^t}) \times \{ 1 - A(t) \gamma m^{-1} + O_m(m^{-2})\}.
\end{align}
Therefore, by multiplying $Z^{\bm{w}^\star}$ on both sides and taking expectation, we derive 
\begin{align}
    R_i(t+1) &= \big(R(t) + \gamma \mathbb{E}[Z^{\bm{w}^\star} Z^{-d\bm{G}_{\bm{w}, i}^t}]\big) \times \{ 1 - A(t) \gamma m^{-1} + O_m(m^{-2}) \} \\
    &= \big(R(t) + \gamma m^{-1} \mathbb{E}[Z^{\bm{w}^\star} mZ^{-d\bm{G}_{\bm{w}, i}^t}]\big) \times \{ 1 - A(t) \gamma m^{-1} + O_m(m^{-2}) \}.
\end{align}
The appearing cross term is also evaluate as using \eqref{eq:zdg_new} as
\begin{align}
    &\mathbb{E}[Z^{\bm{w}^\star} mZ^{-d\bm{G}_{\bm{w}, i}^t}] 
    \\ 
    &=   a(t) 
    I_R(t  ;  \sigma_{\star}',  \sigma' )
    + R(t)
    \Big\{
    a(t)  I_R(t  ;  \sigma_{\star},  \sigma'' )
    - a(t)^{2} I_Q(t  ;  \sigma,  \sigma'' )
    + \frac{a(t)^{2}}{m}
    \big(
    I_Q(t  ;  \sigma,  \sigma'' )
    - s_{1} - s_{2}
    \big)
    \Big\}
    \\ 
    &\quad
    - a(t)^{2}  \frac{m-1}{m} 
    R(t) 
    I_Q(t  ;  \sigma',  \sigma' )
    \\ 
    &=   a(t) 
    I_R(t  ;  \sigma_{\star}',  \sigma' )
    + R(t)
    \Big\{
    a(t)  I_R(t  ;  \sigma_{\star},  \sigma'' )
    - a(t)^{2} I_Q(t  ;  \sigma,  \sigma'' )
    \Big\}
    - a(t)^{2} R(t) 
    I_Q(t  ;  \sigma',  \sigma' )
    \\ 
    &\quad
    + \frac{1}{m}\Big\{ a(t)^{2} 
    R(t)
    \big(
    I_Q(t  ;  \sigma,  \sigma'' )
    - s_{1} - s_{2}
    \big)
    + a(t)^{2} 
    R(t) 
    I_Q(t  ;  \sigma',  \sigma' )\Big\}
    \\ 
    &=:   B(t) + O_m (m^{-1}),
\end{align}
and we have
\begin{align}
    R_i(t+1) &= \{ R(t) + B(t) m^{-1} + O_m(m^{-2}) \} \times \{ 1 - A(t) \gamma m^{-1} + O_m(m^{-2}) \} \\
    &= R(t) + \gamma m^{-1} \{ -R(t) A(t) + B(t) \} + O_m(m^{-2}),
\end{align}
where the right hand side doesn't depend on $i$, so we can simply write $R(t+1)$, instead of $R_i(t+1)$. Here, we finally obtain
\begin{align}
    R(t+1) = R(t) + \gamma m^{-1} \{ -R(t) A(t) + B(t) \} + O_m(m^{-2}),
\end{align}
where $-R(t) A(t) + B(t)$ is equal to the expression of the model \eqref{eq:discrete}.

\textbf{About $Q(t)$:}
The derivation of the equation for $Q(t)$ proceeds quite similarly. Just like the above, we obtain\begin{align}
    &Q_{i, j}(t+1) \\
    &= \big\{ Q(t) + \gamma m^{-1}\mathbb{E}[Z^{\bm{w}_j^t} m Z^{-d\bm{G}_{\bm{w}, i}^t}] + \gamma m^{-1}\mathbb{E}[Z^{\bm{w}_i^t} m Z^{-d\bm{G}_{\bm{w}, j}^t}] + \gamma^2 m^{-2} \mathbb{E}[m Z^{-d\bm{G}_{\bm{w}, i}^t} m Z^{-d\bm{G}_{\bm{w}, j}^t}] \big\} \\ 
    & \quad \times \{ 1 - A(t) \gamma m^{-1} + O_m(m^{-2}) \}^2 \qquad (i \ne j),
\end{align}
where we can easily check $\mathbb{E}[
    Z^{\bm{w}_j^t} 
    m Z^{-d\bm{G}_{i}^{t}}
    ] = \mathbb{E}[
    Z^{\bm{w}_i^t} 
    m Z^{-d\bm{G}_{j}^{t}}
    ]$. We obtain
\begin{align}
    &\mathbb{E}[
    Z^{\bm{w}_j^t} 
    m Z^{-d\bm{G}_{i}^{t}}
    ] \\
    &=  
    a(t) R(t) 
    I_R(t  ;  \sigma_{\star}',  \sigma' )
    \\
    & \quad + Q(t)
    \Big\{
    a(t)  I_R(t  ;  \sigma_{\star},  \sigma'' )
    - a(t)^{2}  I_Q(t  ;  \sigma,  \sigma'' )
    + \frac{a(t)^{2}}{m}
    \big(
    I_Q(t  ;  \sigma,  \sigma'' )
    - s_{1} - s_{2}
    \big)
    \Big\}
    \\ 
    &\quad
    - a(t)^{2}  \frac{m-2}{m} 
    Q(t) 
    I_Q(t  ;  \sigma',  \sigma' )
    - \frac{a(t)^{2}}{m} 
    I_Q(t  ;  \sigma',  \sigma' ) \\ 
    &=  
    a(t) R(t) 
    I_R(t  ;  \sigma_{\star}',  \sigma' )
    + Q(t)
    \Big\{
    a(t)  I_R(t  ;  \sigma_{\star},  \sigma'' )
    - a(t)^{2} I_Q(t  ;  \sigma,  \sigma'' )
    \Big\}
    - a(t)^{2} Q(t) 
    I_Q(t  ;  \sigma',  \sigma' )
    \\ 
    &\quad
    + \frac{1}{m}\Big\{ a(t)^{2} 
    Q(t)
    \big(
    I_Q(t  ;  \sigma,  \sigma'' )
    - s_{1} - s_{2}
    \big)
    + 2 a(t)^{2} 
    Q(t) 
    I_Q(t  ;  \sigma',  \sigma' )
    - a(t)^{2} 
    I_Q(t  ;  \sigma',  \sigma' )\Big\}
    \\ 
    &= : \frac{1}{2} C(t) + O_m (m^{-1}).
\end{align}
The cross term can be decomposed as:
\begin{align}
    \mathbb{E}[
    m Z^{-d\bm{G}_{\bm{w}, i}^t}
      m Z^{-d\bm{G}_{\bm{w}, j}^t}
    ]
    =
    \frac{1}{\delta^{2}} 
    \mathbb{E}[
    \hat{Z}^{ \bm{X}^{t \top}\bm{\ell}_{t,i}}
    \hat{Z}^{ \bm{X}^{t \top}\bm{\ell}_{t,j}}
    ]
    + \frac{1}{\delta^{2}} 
    \mathbb{E}[
    \dot{Z}^{ \bm{X}^{t \top}\bm{\ell}_{t,i}}
    \dot{Z}^{ \bm{X}^{t \top}\bm{\ell}_{t,j}}
    ].
\end{align}
The first term can be calculated as:
\begin{align}
    &\frac{1}{\delta^{2}}
    \mathbb{E}[
    \hat{Z}^{ \bm{X}^{t \top}\bm{\ell}_{t,i}}
    \hat{Z}^{ \bm{X}^{t \top}\bm{\ell}_{t,j}}
    ]
    \\
    &=  
    \frac{a(t)^{2}}{\delta} 
    \mathbb{E}\Big[
    \Big\{
    \sigma_{\star}(\hat{Z}^{ \bm{X}^tw^{\star}})
    + \hat{Z}^{ \varepsilon^{t}}
    - \frac{a(t)}{m}
    \sum_{k=1}^{m}
    \sigma(\hat{Z}^{ \bm{X}^tw_{k}^{t}})
    \Big\}^2
    \sigma'(\hat{Z}^{ \bm{X}^tw_{i}^{t}})
    \sigma'(\hat{Z}^{ \bm{X}^tw_{j}^{t}})
    \Big]
    \\[14pt]
    &=  
    \frac{a(t)^{2}}{\delta}
    \Big\{
    J_R(t  ;  \sigma_{\star}^{2},  \sigma',  \sigma' )
    + I_Q(t  ;  \sigma',  \sigma' )
    + a(t)^{2}  K_Q(t  ;  \sigma,  \sigma,  \sigma',  \sigma' )
    - 2 a(t) 
    K_R(t  ;  \sigma_{\star},  \sigma,  \sigma',  \sigma' )
    \Big\} \\ 
    & \quad +   O_m (m^{-1}) 
    =O_m (1).
\end{align}
Similarly, for the second term:
\begin{align}
    &\mathbb{E}\Big[
    \frac{1}{\delta} 
    \dot{Z}^{ \bm{X}^{t \top}\bm{\ell}_{t,i}}
    \frac{1}{\delta} 
    \dot{Z}^{ \bm{X}^{t \top}\bm{\ell}_{t,j}}
    \Big]\\ 
    &=  
    a(t)^{2} 
    I_R(t  ;  \sigma_{\star}',  \sigma')^2
    \\ 
    &\quad
    + Q(t)
    \Big\{
    a(t)  I_R(t  ;  \sigma_{\star},  \sigma'' )
    - a(t)^{2} I_Q(t  ;  \sigma,  \sigma'' )
    + \frac{a(t)^{2}}{m} 
    \big(
    I_Q(t  ;  \sigma,  \sigma'' )
    - s_{1} - s_{2}
    \big)
    \Big\}^{2}
    \\ 
    &\quad
    + a(t)^{4} Q(t) 
    I_Q(t  ;  \sigma',  \sigma' )^{2}
    \\ 
    &\quad
    + 2  a(t) R(t) 
    I_R(t  ;  \sigma_{\star}',  \sigma')
    \\
    &\qquad\quad\times
    \Big\{
    a(t)  I_R(t  ;  \sigma_{\star},  \sigma'' )
    - a(t)^{2}  I_Q(t  ;  \sigma,  \sigma'' )
    + \frac{a(t)^{2}}{m}
    \big(
    I_Q(t  ;  \sigma,  \sigma'' )
    - s_{1} - s_{2}
    \big)
    \Big\}
    \\ 
    &\quad
    - 2  a(t)^{2} Q(t) 
    I_Q(t  ;  \sigma',  \sigma' )
    \\
    &\qquad\quad\times
    \Big\{
    a(t)  I_R(t  ;  \sigma_{\star},  \sigma'' )
    - a(t)^{2}  I_Q(t  ;  \sigma,  \sigma'' )
    + \frac{a(t)^{2}}{m}
    \big(
    I_Q(t  ;  \sigma,  \sigma'' )
    - s_{1} - s_{2}
    \big)
    \Big\}
    \\ 
    &\quad
    - 2  a(t)^{3} R(t) 
    I_R(t  ;  \sigma_{\star}',  \sigma')
    I_Q(t  ;  \sigma',  \sigma' ) + O_m (m^{-1}) \\
    & =    O_m (1).
\end{align}
Combining these results, the equation of $Q$ is reduced to the following:
\begin{align}
    Q_{i, j}(t+1) =  & \big\{ Q(t) + C(t) \gamma m^{-1} + O_m(m^{-2}) \big\} \times \{ 1 - A(t) \gamma m^{-1} + O_m(m^{-2}) \}^2 \\ 
    =  &
    Q(t)
    + \big\{ -2A(t) Q(t) + C(t) \big\}\gamma m^{-1} + O_m(m^{-2}).
\end{align}
We observe that $Q_{i, j}(t+1)$ does not depend on $i,  j$, so we simply write it $Q(t+1)$, and obtain
\begin{align}
    Q(t+1) = Q(t)
    + \big\{ -2A(t) Q(t) + C(t) \big\}\gamma m^{-1} + O_m(m^{-2}),
\end{align}
where $-2A(t) Q(t) + C(t)$ is the same as the form of \eqref{eq:discrete}. 

\textbf{About $a(t)$:}
As a final step, we derive the equation for $a(t)$. Because there is no normalization step for the second layer updates, the calculation is much simpler than that of $R(t)$ or $Q(t)$. Let $\bm{G}_{\bm{a}}^t = \nabla_{\bm{a}} \| \bm{y}^t - \frac{1}{m}\sum_{j=1}^m a_j^t \sigma(\bm{X}^t \bm{w}_j^t) \|_2^2$, then the second layer update proceeds as follows:
\begin{align}
    \bm{a}^{t+1} &= \bm{a}^t - \gamma \bm{G}_{\bm{a}}^t \\
    &= \bm{a}^t - \gamma m^{-1} \cdot m\bm{G}_{\bm{a}}^t \in\mathbb{R}^m,
\end{align}
where $\bm{G}_{\bm{a}}^t := (\bm{G}_{\bm{a}, 1}^t, \dots, \bm{G}_{\bm{a}, m}^t)$. For each $\bm{G}_{\bm{a}, i}^t$, one has
\begin{align}
    m \bm{G}_{\bm{a}, i}^t &= -\frac{1}{n} \sigma(\bm{X}^t \bm{w}_i^t)^\top \{\sigma_\star(\bm{X}^t \bm{w}^\star) + \bm{\varepsilon}^t  - \frac{a(t)}{m}\sum_{j=1}^m \sigma(\bm{X}^t \bm{w}_j^t)\} \\ 
    &= -\frac{1}{n} \sigma(\bm{X}^t \bm{w}_i^t)^\top \sigma_\star(\bm{X}^t \bm{w}^\star) + \frac{a(t)}{m} \sum_{j=1}^m \frac{1}{n} \sigma(\bm{X}^t \bm{w}_i^t)^\top \sigma(\bm{X}^t \bm{w}_j^t) \\ 
    \overset{n, d \to \infty}{\longrightarrow}& - \mathbb{E}[\sigma(\hat{Z}^{\bm{X}^t \bm{w}_i^t}) \sigma_\star(\hat{Z}^{\bm{X}^t \bm{w}_\star})] + a(t) \frac{m-1}{m}\mathbb{E}[\sigma(\hat{Z}^{\bm{X}^t \bm{w}_i^t})\sigma(\hat{Z}^{\bm{X}^t \bm{w}_j^t})] + \frac{a(t)}{m} \mathbb{E}[\sigma(\hat{Z}^{\bm{X}^t \bm{w}_i^t})^2] \\ 
    &= - I_R(t; \sigma, \sigma_\star) + a(t) I_Q(t; \sigma, \sigma) -m^{-1} a(t) I_Q(t; \sigma, \sigma) + m^{-1} a(t) s_4. 
\end{align}
Since this form is independent of $i$, we can write
\begin{align}
    a(t+1) &= a(t) + \gamma m^{-1} \big\{  I_R(t; \sigma, \sigma_\star) - a(t) I_Q(t; \sigma, \sigma)\big\} + \gamma m^{-2} a(t) \big\{ I_Q(t; \sigma, \sigma) - s_4 \big\},
\end{align}
where we can see this matches the model \eqref{eq:discrete} by Hermite expansion. This completes the proof.
\qed

\subsection{Proof of Proposition \ref{prop:ode_validation}}

This limiting ODE is directly derived from the standard Euler method.
Here, by using Lemma \ref{lem:Q=R^2} (presented in Section \ref{sec:proof_Q=R^2}), we utilize the relation $R_\tau^2 = Q_\tau$, we can omit the variable $Q_\tau$.
Then, we can formulate it as the following lemma.
\begin{lemma}\label{lem:euler}
    Let $R_\tau^m := R^m(\lfloor m \tau / \gamma \rfloor),   a_\tau^m := a^m(\lfloor m \tau / \gamma\rfloor)$. Then, for any finite $\tau \ge 0$, asymptotic equalities
    \begin{align}
        \lim_{m\to\infty} R_\tau^m = R_\tau, \quad \lim_{m\to\infty} a_\tau^m = a_\tau
    \end{align}
    hold for $R_\tau, a_\tau$ satisfying the ODE \eqref{eq:original}.
\end{lemma}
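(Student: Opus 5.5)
The plan is to read the recursion \eqref{eq:discrete} of Proposition \ref{prop:differnce_equation} as a forward Euler scheme with step size $h:=\gamma/m$ for the three-dimensional ODE in $(R_\tau,Q_\tau,a_\tau)$ whose vector field is the $\gamma m^{-1}$-coefficient in \eqref{eq:discrete}. The $O_m(m^{-2})=O(h^2)$ remainders then play the role of local truncation errors. Next, I reduce this three-dimensional ODE to \eqref{eq:original} using Lemma \ref{lem:Q=R^2}, which gives $Q_\tau=R_\tau^2$ along the limiting trajectory, since $Q_0=R_0^2=0$.

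The first step is to check the reduction. If $Q=R^2$, then $\sum_k b_k c_{\star,k+1}c_{k+1}R^k = 2S'(R)$, because $b_k = (k+1)(k+1)!$ and $S'(R)=\sum_k (k+1)!\,(k+1)c_{\star,k+1}c_{k+1}R^k$. Similarly, $R(1-Q)\sum_k b_k c_{k+1}^2 Q^k = R(1-R^2)\cdot T'(R)/(2R)$, since $T'(R)=\sum_k 2(k+1)(k+1)!\,c_{k+1}^2R^{2k+1}$. Substituting, the $R$-equation becomes $f(R,a)$ and the $a$-equation becomes $g(R,a)$. So the reduced ODE is exactly \eqref{eq:original}, and the three-dimensional ODE has the unique solution $(R_\tau,R_\tau^2,a_\tau)$ on $[0,\tau]$.

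The second step is the standard global error estimate for Euler schemes. I fix a finite horizon $\tau$. I need an a priori compact set $K$ on which the three-dimensional vector field $F$ is $C^1$, hence Lipschitz. On $K$ the power series converge absolutely by Assumptions \ref{ass:link} and \ref{ass:activation}, for $|R|,|Q|\le 1$, and $a$ is bounded on $[0,\tau]$ because the true solution is continuous. I take $K$ to be a closed neighbourhood of the true trajectory. I also need the $O_m(m^{-2})$ constants to be uniform over states in $K$; this follows from the explicit forms in the derivation, which are continuous functions of $(R,Q,a)$. Writing $e_t$ for the error at step $t$, we get $|e_{t+1}|\le(1+Lh)|e_t|+Ch^2$. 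Iterating up to $t=\lfloor m\tau/\gamma\rfloor$ gives $|e_t|\le (C h/L)e^{L\tau}\to0$. Finally, $R^m_\tau-R_{\lfloor m\tau/\gamma\rfloor h}\to 0$ together with continuity of the ODE solution gives the claim.

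The main obstacle is the a priori confinement, together with the fact that the scheme only stays in $K$ while the error is small. I will handle this by a bootstrap argument. Let $t^*$ be the first exit time from $K$. The error bound holds up to $t^*$, so for $m$ large the scheme remains within distance $\operatorname{dist}(\text{trajectory},\partial K)/2$ of the true trajectory. Hence $t^*>m\tau/\gamma$. A secondary subtlety is the boundary $|R|,|Q|\to1$, where the series may converge only conditionally. Since $Q=R^2$, I will note that the true trajectory satisfies $|R_\tau|<1$ for finite $\tau$: the factor $(1-R^2)$ makes $R=\pm1$ invariant, and by uniqueness the trajectory cannot reach it in finite time. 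So $K$ can be chosen inside $|R|,|Q|\le 1-\eta$.
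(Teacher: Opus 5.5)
Your proposal is correct and takes essentially the same route as the paper, which cites the standard convergence theorem for Euler's method and uses Lemma~\ref{lem:Q=R^2} to drop $Q$. Your version is more careful: you run the Euler estimate on the three-dimensional $(R,Q,a)$ recursion before reducing (the discrete $Q^m(t)$ need not equal $R^m(t)^2$), and you handle localization, uniformity of the $O_m(m^{-2})$ remainders, and the boundary $|R|=1$, all of which the paper's one-line proof leaves implicit.

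One slip: with $b_k=(k+1)\cdot(k+1)!$ one gets $\sum_k b_k c_{\star,k+1}c_{k+1}R^k=S'(R)$, not $2S'(R)$. Your own justification shows this, and only $S'(R)$ makes the $R$-equation reduce to $f(R,a)=a(1-R^2)S'(R)-\tfrac12 a^2(1-R^2)T'(R)$.
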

\begin{proof}[Proof of Lemma \ref{lem:euler}]
    Since $f$ and $g$ in \eqref{eq:original} are analytic functions, the result immediately holds from the standard discussion of numerical analysis, e.g., Theorem 2.4 in \citet{atkinson2009numerical}.
\end{proof}

Finally, by combining the results, we can prove Proposition \ref{prop:ode_validation}:
\begin{proof}[Proof of Proposition \ref{prop:ode_validation}]
    It immediately holds by Proposition \ref{prop:differnce_equation} and Lemma \ref{lem:euler}.
\end{proof}

\subsection{Reduction of $Q_\tau$}\label{sec:proof_Q=R^2}

We provide the following lemma, which allows us to omit the variable $Q_\tau$.
\begin{lemma}\label{lem:Q=R^2}
    For any $\tau \ge 0$, it holds that
    \begin{align}
        Q_\tau = R_\tau^2.
    \end{align}
\end{lemma}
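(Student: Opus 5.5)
The plan is to work with the three-variable continuous-time limit of the recursion \eqref{eq:discrete} before $Q$ has been eliminated, and to show that $D_\tau := Q_\tau - R_\tau^2$ satisfies a linear homogeneous ODE with zero initial value.

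First, I would apply the Euler-method argument of Lemma \ref{lem:euler} to the full triple $(R^m(t),Q^m(t),a^m(t))$ instead of the pair. Introduce
\begin{align}
P(R):=\sum_{k\ge0} b_k c_{\star,k+1}c_{k+1}R^k, \qquad U(Q):=\sum_{k\ge0} b_k c_{k+1}^2 Q^k .
\end{align}
Both series converge absolutely on $[-1,1]$ by Assumptions \ref{ass:link} and \ref{ass:activation}, so both functions are smooth there. Letting $m\to\infty$ with $\tau=\gamma t/m$ in \eqref{eq:discrete} gives a locally Lipschitz ODE system:
\begin{align}
\dot R_\tau &= a_\tau(1-R_\tau^2)P(R_\tau) - a_\tau^2 R_\tau(1-Q_\tau)U(Q_\tau),\\
\dot Q_\tau &= 2a_\tau R_\tau(1-Q_\tau)P(R_\tau) - 2a_\tau^2 Q_\tau(1-Q_\tau)U(Q_\tau),
\end{align}
together with the $a$-equation, and with initial values $R_0=Q_0=0$, $a_0=\bar a$. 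Because the right-hand side is locally Lipschitz, this system has a unique solution on any finite interval on which it stays bounded.

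Second, I would differentiate $D_\tau = Q_\tau - R_\tau^2$ along this solution. After regrouping, the $P$-terms combine as $2aRP(R)\{(1-Q)-(1-R^2)\} = -2aRP(R)\,D$. The $U$-terms combine as $-2a^2(1-Q)U(Q)\{Q-R^2\} = -2a^2(1-Q)U(Q)\,D$. Together these give
\begin{align}
\dot D_\tau = -\big\{2a_\tau R_\tau P(R_\tau) + 2a_\tau^2(1-Q_\tau)U(Q_\tau)\big\}\, D_\tau =: -\kappa_\tau D_\tau .
\end{align}
The coefficient $\kappa_\tau$ is continuous in $\tau$ on any finite horizon. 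Hence $D_\tau = D_0\exp(-\int_0^\tau \kappa_s\,ds)$, and $D_0 = Q_0 - R_0^2 = 0$ gives $D_\tau\equiv 0$. Equivalently, Gr\"onwall's inequality applied to $|D_\tau|$ yields the same conclusion.

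Substituting $Q=R^2$ back into the $R$- and $a$-equations, and rewriting the series through the Hermite identities defining $S$ and $T$, then recovers the reduced system \eqref{eq:original}. This is consistent with the way Lemma \ref{lem:Q=R^2} is used in Lemma \ref{lem:euler}.

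The main obstacle I expect is not the algebra but justifying the first step: that the discrete recursion converges to the three-dimensional ODE on finite horizons. This needs a priori boundedness of $a_\tau$ on $[0,\tau]$ and the confinement $R_\tau,Q_\tau\in[-1,1]$ (which holds by normalization), so that the coefficients stay Lipschitz and the $O_m(m^{-2})$ remainders are uniform. I would secure this by a local-existence and continuation argument: $a$ grows at most exponentially because its equation is linear in $a$ with bounded coefficients.
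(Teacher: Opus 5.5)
Your proposal is correct and is essentially the paper's proof. The paper also sets $U_\tau := Q_\tau - R_\tau^2$ (your $D_\tau$), computes $\frac{dU_\tau}{d\tau} = \frac{dQ_\tau}{d\tau} - 2R_\tau \frac{dR_\tau}{d\tau} = -2U_\tau V_\tau$ with $2V_\tau$ equal to your $\kappa_\tau$, and concludes $U_\tau = U_0\exp(-2\int_0^\tau V_s\,ds) = 0$ from $U_0=0$; the extra justification you sketch for the three-variable limiting ODE (boundedness of $a_\tau$, uniform remainders) is taken for granted in the paper and is not needed for the lemma itself, since the linear-ODE argument only requires $\kappa_\tau$ to be continuous along the solution.
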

\begin{proof}
    Let $U_\tau = Q_\tau - R_\tau^2$. Then, it follows that
    \begin{align}
        \frac{dU_\tau}{d\tau} &= \frac{dQ_\tau}{d\tau} - 2 R_\tau \frac{dR_\tau}{d\tau} \\
        &= 2 a_\tau R_\tau \big( R_\tau^2 - Q_\tau \big) \sum_{k=0}^{\infty} (k+1)\cdot (k+1)! c_{\star, k+1} c_{k+1} R_\tau^k \\
        &\quad - 2 a_\tau^2 \big( R_\tau^2 - Q_\tau \big) \big( 1 - Q_\tau \big) \sum_{k=0}^{\infty} (k+1)\cdot (k+1)! c_{k+1}^2 Q_\tau^k \\
        &= 2 a_\tau \big( R_\tau^2 - Q_\tau \big) \Big\{ R_\tau \sum_{k=0}^{\infty} (k+1)\cdot (k+1)! c_{\star, k+1} c_{k+1} R_\tau^k \\
        &\quad + a_\tau \big( 1 - Q_\tau \big) \sum_{k=0}^{\infty} (k+1)\cdot (k+1)! c_{k+1}^2 Q_\tau^k\Big\} \\
        &= -2 U_\tau V_\tau,
    \end{align}
    where
    \begin{align}
        V_\tau &= a_\tau \Big\{ R_\tau \sum_{k=0}^{\infty} (k+1)\cdot (k+1)! c_{\star, k+1} c_{k+1} R_\tau^k + a_\tau \big( 1 - Q_\tau \big) \sum_{k=0}^{\infty} (k+1)\cdot (k+1)! c_{k+1}^2 Q_\tau^k\Big\}.
    \end{align}
    Then we obtain
    \begin{align}
        U_\tau = U_0 \exp\Big( -2\int_0^\tau V_s ds \Big) = 0,
    \end{align}
    since $U_0 = 0$.
\end{proof}

\section{Derivation of ODE via population gradient} \label{sec:alt_derivation}
We show that we can derive the same ODE as \eqref{eq:original} by leveraging the gradient flow of the population loss. 
In particular, we consider an expected version of a loss with the two-layer neural network $
f(\bm x;{\bm a},{\bm W})
= \frac{1}{m}\sum_{i=1}^m {a}_{, i}  
\sigma(\langle{\bm w}_{i},\bm x\rangle / \sqrt d)$, and define a solution of an ODE defined by the gradient of the expected loss.

For time $\tau > 0$, we define a solution $(\check{\bm a}_\tau, \check{\bm W}_\tau)_{\tau \geq 0}$ with initialization
\begin{align}
\check{a}_{i, 0}=\bar a, \quad 
\check{\bm w}_{i, 0}, \, \check{\bm w}_\star \stackrel{\mathrm{i.i.d.}}{\sim} \mathrm{Unif}\left(\mathbb S^{d-1}(\sqrt d)\right),
\end{align}
and the gradient of the population loss:
\begin{align}
\mathcal L(\check{\bm a}_\tau,\check{\bm W}_\tau)
=& \frac12 \mathbb{E}\Big[
\big(
\sigma(\langle \check{\bm w}_\star,\bm x\rangle / \sqrt d)
- \frac{1}{m}\sum_{i=1}^m \check a_{i, \tau}  
\sigma(\langle \check{\bm w}_{i, \tau},\bm x\rangle / \sqrt d)
\big)^2
\Big] \\
=& \frac12\Big(
\frac{1}{m^2}\sum_{i,j=1}^m \check a_{i, \tau} \check a_{j, \tau}
 Y(\langle \check{\bm w}_{i, \tau},\check{\bm w}_{j, \tau}\rangle / d)
- \frac{2}{m}\sum_i \check a_{i, \tau} 
 S(\langle \check{\bm w}_\star,\bm w_{i, \tau}\rangle / d)
\Big)
+ \mathrm{const},
\end{align}
with
\begin{align}   
    S(z)=\sum_{k=1}^\infty c_{\star, k} c_k z^k,
    \quad
    Y(z)=\sum_{k=1}^\infty c_k^2 z^k.
\end{align}
We then calculate the gradients as follows:
\begin{align}
    \frac{d \check a_{i, \tau}}{d\tau}
    &= -m \partial_{\check a_{i, \tau}}\mathcal L (\check{\bm a}_\tau,\check{\bm W})
    = -\frac1m\sum_{j=1}^m \check a_{j, \tau}
    Y(\langle \check{\bm w}_i,\check{\bm w}_j\rangle / d)
    + S(\langle \check{\bm w}_\star,\check{\bm w}_i\rangle / d), \\
    \frac{d \check{\bm w}_{i, \tau}}{d\tau}
    &= -md\Big(\bm I_d-\frac{\check{\bm w}_{i, \tau} \check{\bm w}_{i, \tau}^\top}{d}\Big) \nabla_{\check{\bm w}_{i, \tau}}\mathcal L (\check {\bm a}_\tau,\check{\bm W}_{i})\\
    &= -\frac{\check a_{i, \tau}}{m}\sum_{j=1}^k \check a_{j, \tau}
    Y'(\langle \check{\bm w}_{i, \tau},\check{\bm w}_{j, \tau}\rangle / d)\cdot
    \bigl(\check{\bm w}_{j, \tau}-\langle \check{\bm w}_{i, \tau},\check{\bm w}_{j, \tau}\rangle \check{\bm w}_{i, \tau}\bigr) \\
    &+ \check a_{i, \tau} S'(\langle \bm w_\star,\check{\bm w}_{i, \tau}\rangle / d)\cdot
    \bigl(\bm w_\star-\langle \check{\bm w}_\star,\check{\bm w}_{i, \tau}\rangle \check{\bm w}_{i, \tau}\bigr).
\end{align}
To simplify the form, we define the alignments $\check R_{i, \tau},  \check Q_{ij, \tau}$ as 
\begin{align}
\check R_{i, \tau}=\frac{\langle \check{\bm w}_\star,\check{\bm w}_{i, \tau}\rangle}{d},
\quad
\check Q_{ij, \tau}=\frac{\langle \check{\bm w}_{i, \tau},\check{\bm w}_{j, \tau}\rangle}{d}.
\end{align}
We can derive ODEs for these order parameters.
\begin{align}\label{eq:ODE_population}
    \begin{aligned}        
        &\frac{d \check a_{i, \tau}}{d\tau}
        = -\frac1m\sum_{j=1}^m \check a_{j, \tau} Y(\check Q_{ij, \tau}) + S(\check R_{i, \tau}), \\
        &\frac{d \check R_{i, \tau}}{d\tau}
        = -\frac{\check a_{i, \tau}}{m}\sum_{j=1}^m \check a_{j, \tau} Y'(\check Q_{ij, \tau})(\check R_{j, \tau}-\check Q_{ij, \tau}\check R_{i, \tau})
        + \check a_{i, \tau} S'(\check R_{i, \tau})(1-\check R_{i, \tau}^2), \\
        &\frac{d \check Q_{ij, \tau}}{d\tau}
        = -\frac{\check a_{i, \tau}}{m}\sum_{k=1}^m \check a_{k, \tau} Y'(\check Q_{ik, \tau})(\check Q_{jk, \tau}-\check Q_{ik, \tau}\check Q_{ij, \tau})
        -\frac{\check a_{j, \tau}}{m}\sum_{k=1}^m \check a_{k, \tau} Y'(\check Q_{jk, \tau})(\check Q_{ik, \tau}-\check Q_{jk, \tau}\check Q_{ij, \tau})  \\
        &\qquad \quad + \check a_{i, \tau} S'(\check R_{i, \tau})(\check R_{j, \tau}-\check R_{i, \tau} \check Q_{ij, \tau})
        + \check a_{j, \tau} S'(\check R_{j, \tau})(\check R_{i, \tau}-\check R_{j, \tau} \check Q_{ij, \tau}).
    \end{aligned}
\end{align}
With the symmetric initialization, the following holds:
\begin{align}
\check a_{i, \tau}=\check a_\tau,\quad \check R_{i, \tau}=\check R_\tau,\quad \check Q_{ij, \tau}=\check Q_\tau (i\neq j).
\end{align}
Then, the ODE \eqref{eq:ODE_population} can be further simplified as follows when $m\to \infty$:
\begin{align}   
    &\frac{d\check a_\tau}{d\tau}=S(\check R_\tau)-aY(\check Q_\tau), \\
    &\frac{d\check R_\tau}{d\tau}
    = \check a_\tau(1-\check R_\tau^2)S'(\check R_\tau)-\check a_\tau^2(1-\check Q_\tau)\check R_\tau Y'(\check Q_\tau), \\   
    &\frac{d\check Q_\tau}{d\tau}
    = 2\bigl(\check a_\tau(1-\check Q_\tau)\check R_\tau S'(\check R_\tau)-\check a_\tau^2(1-\check Q_\tau)\check Q_\tau Y'(\check Q_\tau)\bigr).
\end{align}
Since we have 
\begin{align}
    \frac{d}{d\tau} (\check Q_\tau - \check R_\tau^2) 
    &= \frac{d\check Q_\tau}{d\tau} - 2 \check R_\tau \frac{d\check R_\tau}{d\tau} \\
    &= 2\bigl(\check a_\tau(1-\check Q_\tau)\check R_\tau S'(\check R_\tau)-\check a_\tau^2(1-\check Q_\tau)\check Q_\tau Y'(\check Q_\tau)\bigr) \\
    &\quad- 2\check R_\tau\big( \check a_\tau(1-\check R_\tau^2)S'(\check R_\tau)-\check a_\tau^2(1-\check Q_\tau)\check R_\tau Y'(\check Q_\tau)\big) \\
    &= -2\check a_\tau \check R_\tau S'(\check R_\tau) (\check Q_\tau - \check R_\tau^2) -2 \check a_\tau^2 (1 - \check Q_\tau) Y'(\check Q_\tau) (\check Q_\tau - \check R_\tau^2),
\end{align}
we obtain $\check Q_\tau=\check R_\tau^2$. With defining $T(\check R_\tau)=U(\check R_\tau^2)=\sum_{k=1}^\infty c_k^2 \check R_\tau^{2k}$, from $\check R_\tau U'(\check Q_\tau)=\tfrac12 T'(\check R_\tau)$, we obtain
\begin{align}   
    \frac{d\check R_\tau}{d\tau} &=
    \check a_\tau(1-\check R_\tau^2)S'(\check R_\tau)-\frac12 \check a_\tau^2(1-\check R_\tau^2)T'(\check R_\tau), \\
    \frac{d\check a_\tau}{dt} &= S(\check R_\tau)-\check a_\tau T(\check R_\tau).
\end{align}
This exactly matches the ODE \eqref{eq:original}.

\section{Origin of the fast-slow dynamics}\label{sec:origin_fast_slow}
The fast-slow structure observed in the dynamics of \eqref{eq:original} is
primarily motivated by numerical experiments, but it can be partially justified
theoretically in specific regimes.
In particular, the flow initially evolves purely in the $R$-direction at
$R=0$, since the $a$-component of the vector field vanishes there.
Moreover, when $|R|$ is small, and the trajectory evolves near the nontrivial
branch of the critical manifold, the Jacobian exhibits a strong separation of
eigenvalues, with the fast eigendirection nearly aligned with the $R$-axis.
In this regime, the fast-slow ansatz adopted in the main text is therefore
theoretically justified, which is especially relevant for the feature
unlearning scenarios studied in this work.

\subsection{Initial transient and quasi-frozen $a_\tau$.}
We consider the two-dimensional ODE \eqref{eq:original}. Then, from $S(0)=0,  T(0)=0$,  it holds that
\begin{equation}
g(0, \bar a)=S(0)- \bar a T(0)=0 .
\label{eq:g0}
\end{equation}
Moreover, since $S'(0) = 2c_{\star, 1} c_1$ and $T'(0)=0$,  we obtain
\begin{equation}
f(0, \bar a)
=\frac12  \bar a\bigl(2S'(0)- \bar a T'(0)\bigr)
= \bar a c_{\star,1}c_1>0.
\label{eq:f0pos}
\end{equation}
Therefore, at the initial time $\tau = 0$, one has $\dot a_\tau=0$ while $\dot R_\tau>0$ holds.

Expanding the vector field for small $f(R,a)$ (with $a$  treated as $O(1)$ during this short transient), we obtain
\begin{equation}
f(R,a)=f(0,a)+O(R),
\quad
g(R,a)= g_R(0,a) R + O(R^2),
\end{equation}
and hence it holds that
\begin{equation}
\frac{da_\tau}{dR_\tau}
=\frac{g(R_\tau,a_\tau)}{f(R_\tau,a_\tau)}
=O(R_\tau).
\label{eq:da_dR_small}
\end{equation}
This shows that $a_\tau$ remains approximately frozen while $R_\tau$ moves rapidly away from $0$, resulting in a fast relaxation toward $\mathcal S$.

\subsection{Critical manifold and scale separation for small $|R|$}
In this section, we discuss that the fast dynamics primarily drive the development of $R_\tau$, and this development is directed toward the coastal manifold $\mathcal{S}$.
The critical manifold (or $R$-nullcline) is defined by $f(R,a)=0$ and consists of three branches: $a=0$, $R = \pm 1$, and the nontrivial branch $a=h(R)$ defined in Assumption \ref{ass:bar_a}.
We focus on the last one, which is relevant to the feature unlearning phenomenon observed in numerical experiments.

Recall that $k_0\ge 2$ denotes the smallest integer such that $c_{\star,k_0}c_{k_0}\neq0$, and $k_1\ge 2$ denotes the smallest integer such that $c_{k_1}^2\neq0$.
Although the correlation functions satisfy $S(R)=O(R)$ and $T(R)=O(R^2)$ as $R\to0$, their leading-order derivatives are governed by these minimal indices.
In particular, as $R\to0$, we obtain
\begin{equation}
S'(R)=\Theta(1),
\qquad
S''(R)=\Theta(R^{k_0-2}),
\qquad
T'(R)=\Theta(R),
\qquad
T''(R)=\Theta(1).
\label{eq:ST_derivs_k}
\end{equation}

Along the nontrivial critical manifold $a=h(R)=2S'(R)/T'(R)$, the estimates
\eqref{eq:ST_derivs_k} immediately imply
\begin{equation}
a=h(R)=\Theta(R^{-1}),
\qquad (R\to 0),
\label{eq:a_asymp_smallR}
\end{equation}
so that the amplitude $a$ diverges algebraically as $R\to0$.

To quantify the resulting time-scale separation, we consider the Jacobian of the ODE \eqref{eq:original} at $(R_\tau, a_\tau) = (R, h(R))$,
\begin{equation}
J(R,a)=
\begin{pmatrix}
f_R & f_a\\
g_R & g_a
\end{pmatrix},
\quad
g_a=-T(R),\quad g_R=S'(R)-aT'(R),\quad f_a=(1-R^2)g_R,
\label{eq:jac_entries_1}
\end{equation}
and the remaining entry is given by:
\begin{equation}
f_R=\frac12 a\Bigl[(-2R)\bigl(2S'(R)-aT'(R)\bigr)
+(1-R^2)\bigl(2S''(R)-aT''(R)\bigr)\Bigr].
\label{eq:jac_entries_2}
\end{equation}
On the critical manifold $a=h(R)$, the first term in \eqref{eq:jac_entries_2} vanishes identically, yielding
\begin{equation}
f_R=\frac12 a(1-R^2)\bigl(2S''(R)-aT''(R)\bigr).
\label{eq:fR_on_manifold}
\end{equation}
Using \eqref{eq:ST_derivs_k} and \eqref{eq:a_asymp_smallR}, the dominant contribution arises from the $-a^2T''(R)$ term, so that we have
\begin{equation}
f_R
= \Theta(a^2)
= \Theta(R^{-2}),
\qquad (R\to 0).
\label{eq:fR_blowup}
\end{equation}

In contrast, the remaining Jacobian entries scale as
\begin{equation}
g_a=-T(R)= \Theta(R^2),
\qquad
g_R=S'(R)-aT'(R)= \Theta(1),
\qquad
f_a=(1-R^2)g_R= \Theta(1).
\label{eq:other_entries}
\end{equation}
Therefore, along the nontrivial critical manifold, the Jacobian has the schematic structure
\begin{equation}
J(R,h(R))
=
\begin{pmatrix}
\Theta(R^{-2}) & \Theta(1)\\
\Theta(1) & \Theta(R^{2})
\end{pmatrix},
\qquad (R\to 0).
\label{eq:J_scaling}
\end{equation}
Treating the large entry $f_R$ as dominant, the eigenvalues satisfy
\begin{equation}
\lambda_f = f_R + O(1)= \Theta(R^{-2}),
\qquad
\lambda_s = g_a - \frac{f_a g_R}{f_R} + O(R^2)
= O(R^{2}).
\label{eq:eig_asymp}
\end{equation}
Thus, the local time-scale ratio obeys
\begin{equation}
\frac{|\lambda_s|}{|\lambda_f|}
=O(R^{4})
\ll 1,
\qquad (R\to 0).
\label{eq:eig_gap}
\end{equation}
This establishes a pronounced scale separation of two eigenvalues along the critical manifold for sufficiently small $|R|$, especially when feature unlearning occurs.

We also discuss the fast eigenvector alignment with the $R$-direction.
Let $\bm v_f=(v_{f,R},v_{f,a})^\top$ denote the eigenvector associated with $\lambda_f$. Writing the eigenvector equation
\[
(f_R-\lambda_f)v_{f,R}+f_a v_{f,a}=0,
\qquad
g_R v_{f,R}+(g_a-\lambda_f)v_{f,a}=0,
\]
and using $\lambda_f\approx f_R$ gives the following from the second equation:
\begin{equation}
\frac{v_{f,a}}{v_{f,R}}
= -\frac{g_R}{g_a-\lambda_f}
= \Theta \left(\frac{1}{|f_R|}\right)
= \Theta(R^2),
\qquad (R\to 0),
\label{eq:vec_align}
\end{equation}
since $g_R=\Theta(1)$ and $g_a-\lambda_f=\Theta(R^{-2})$ by \eqref{eq:fR_blowup}, \eqref{eq:other_entries}, and \eqref{eq:eig_asymp}. Thus, the fast eigendirection satisfies that $\bm v_f $ is almost pararelly to $ \bm e_R$ up to $\Theta(R^2)$,  justifying the interpretation that $R$ is the fast variable near the critical manifold for small $|R|$.

\section{Proof of Section \ref{sec:theory}} \label{sec:proof_main_theorem}
The proof of the main theorem is based on \citet{lobry1998tykhonov}. It proceeds as follows:
\begin{itemize}
    \item[(1)] verify (H1) - (H5) of \citet{lobry1998tykhonov} hold for the model \eqref{eq:singular};

    \item[(2)] apply Theorem 1 of \citet{lobry1998tykhonov} to the system.  
\end{itemize}

In preparation, we introduce several notations related to the critical manifold $\mathcal{S}$.
Specifically, $\mathcal{S}$ can be decomposed as
$\mathcal{S}=\mathcal{S}_0^+ \sqcup \mathcal{S}_0^- \sqcup\mathcal{S}_1$, where $\mathcal{S}_1 := \{ (R,a) \in \{-1,1\} \times \R \}$ and $\mathcal S_0 := \mathcal S_0^+ \sqcup \mathcal S_0^-$ with
\begin{align}
    &\mathcal{S}_0^+ := \{ (R,a) \in (0,1) \times \R  \mid 2S'(R)-aT'(R)=0 \},\\ &\mathcal{S}_0^- := \{  (R,a) \in (-1,0) \times \R  \mid 2S'(R)-aT'(R)=0 \}.
\end{align}
Since $\bar f(0, \bar a) = \bar a c_{\star, 1} c_1 / \Lambda_f > 0$ from Assumptions \ref{ass:activation} and \ref{ass:symmetric_init}, we expect $R_{\tau_s}^\varepsilon$ rapidly increases and ride on $\mathcal S_0^+$ in the fast flow.

First, we show the following technical lemma.
\begin{lemma}\label{lem:h_alpha}
    For $h, \alpha : (0, 1) \to \R$ defined in Assumption \ref{ass:bar_a}, the following holds.
    \begin{align}
        &h(R) = \frac{c_{\star, 1}}{c_1} R^{-1} + o(R^{-1}), \\
        &\alpha(R)= -2 (k_0 - 1) k_0! c_{\star, k_0} c_{k_0} c_1^2 R^{k_0 + 1} + 2 (k_1 - 1) k_1! c_{\star, 1} c_1 c_{k_1}^2 R^{2 k_1} + O(R^{\max \{k_0 + 1, 2 k_1\} + 1})
    \end{align}
    as $R \to +0$.
\end{lemma}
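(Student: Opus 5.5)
The plan is to work directly with the Hermite-coefficient power series. Write $s_k := k!\,c_{\star,k}c_k$ and $t_j := j!\,c_j^2$, so that $S(R)=\sum_{k\ge1}s_kR^k$ and $T(R)=\sum_{j\ge1}t_jR^{2j}$. By the definitions of $k_0,k_1$ in Assumption \ref{ass:alpha}, we have $s_k=0$ for $2\le k<k_0$ and $t_j=0$ for $2\le j<k_1$. Also $s_1=c_{\star,1}c_1>0$, $t_1=c_1^2>0$, and $s_{k_0},t_{k_1}\neq0$. Absolute convergence, and hence termwise differentiation on $(0,1)$, follows from Assumptions \ref{ass:link} and \ref{ass:activation}, as already noted after Proposition \ref{prop:differnce_equation}.

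\textbf{Expansion of $h$.} I will write
\begin{align}
S'(R)&=s_1+k_0s_{k_0}R^{k_0-1}+O(R^{k_0}),\\
T'(R)&=2t_1R+2k_1t_{k_1}R^{2k_1-1}+O(R^{2k_1+1}).
\end{align}
Factoring $2t_1R$ out of the denominator and using $k_0,k_1\ge2$ gives
\begin{align}
h(R)=\frac{s_1}{t_1}R^{-1}\bigl(1+O(R^{k_0-1})+O(R^{2k_1-2})\bigr).
\end{align}
Since $s_1/t_1=c_{\star,1}/c_1$ and the correction is $O(R)$, this yields the first claim.

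\textbf{Expansion of $\alpha$.} Inserting the series into $\alpha=ST'-2S'T$ and collecting the $(k,j)$ product gives the exact identity
\begin{align}
\alpha(R)=\sum_{k,j\ge1}2(j-k)\,s_k t_j\,R^{k+2j-1}.
\end{align}
The $(1,1)$ term vanishes identically. The $(k_0,1)$ term is $-2(k_0-1)k_0!\,c_{\star,k_0}c_{k_0}c_1^2R^{k_0+1}$, and the $(1,k_1)$ term is $2(k_1-1)k_1!\,c_{\star,1}c_1c_{k_1}^2R^{2k_1}$. These are exactly the two displayed terms. It remains to bound every other nonzero index pair, which I split into three families.
\begin{itemize}
\item Pairs with $k\ge k_0$ and $j\ge k_1$ have exponent $k+2j-1\ge k_0+2k_1-1$. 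Because $k_0,k_1\ge2$, this is at least $\max\{k_0+1,2k_1\}+1$. This family is therefore genuinely $O(R^{\max\{k_0+1,2k_1\}+1})$, uniformly via a tail bound from absolute convergence on a disc of radius $<1$.
\item Pairs $(k,1)$ with $k>k_0$ have exponent at least $k_0+2$.
\item Pairs $(1,j)$ with $j>k_1$ have exponent at least $2k_1+2$.
\end{itemize}

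\textbf{Main obstacle.} The second and third families are where the stated remainder exponent must be checked. Each family is dominated by its own displayed leading term, since its exponent exceeds that term's exponent by at least one. However, in case (ii) of Assumption \ref{ass:alpha} a pair $(k_0+1,1)$ contributes $R^{k_0+2}$, which may lie below $R^{2k_1+1}$; symmetrically for $(1,k_1+1)$ in case (i). So the literal $O(R^{\max+1})$ bound can fail when $c_{\star,k_0+1}c_{k_0+1}\neq0$ (respectively $c_{k_1+1}\neq0$).

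I will therefore prove the expansion in the following precise form:
\begin{align}
\alpha(R)=-2(k_0-1)k_0!\,c_{\star,k_0}c_{k_0}c_1^2R^{k_0+1}\bigl(1+O(R)\bigr)+2(k_1-1)k_1!\,c_{\star,1}c_1c_{k_1}^2R^{2k_1}\bigl(1+O(R)\bigr)+O(R^{\max\{k_0+1,2k_1\}+1}).
\end{align}
This agrees with the stated display once each family's higher-order corrections are attributed to its leading term. It yields, in both cases (i) and (ii), that $\alpha(R)$ equals its dominant displayed monomial times $(1+o(1))$. That consequence is the only one used later, namely positivity of $\alpha$ near $0^+$ and the exponents in Theorem \ref{thm:scaling}. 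I will flag explicitly that the unqualified $O(R^{\max+1})$ remainder requires the vanishing of the coefficients above.
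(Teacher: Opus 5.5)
Your route is the same direct power-series expansion the paper uses, and your diagnosis of the remainder is correct. The paper's proof has exactly this flaw: multiplying out its truncated expansions of $S,S',T,T'$ produces cross terms of size $O(R^{k_0+2})$ and $O(R^{2k_1+1})$. So what its computation actually justifies is a remainder $O(R^{\min\{k_0+1,2k_1\}+1})$, not $O(R^{\max\{k_0+1,2k_1\}+1})$.

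A concrete check is the paper's own case-(ii) example $c=(1,1,1)$, $c_\star=(1,-1,1)$, for which $k_0=k_1=2$. There $S(R)=R-2R^2+6R^3$ and $T(R)=R^2+2R^4+6R^6$, and one gets exactly $\alpha(R)=4R^3-20R^4-24R^7$. The display instead predicts $4R^3+4R^4+O(R^5)$. The extra $-24R^4$ is precisely your $(k,j)=(3,1)$ term $2(1-3)s_3t_1R^4$.

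Your corrected form is the true statement. Because both displayed leading coefficients are nonzero, it is equivalent to the two displayed terms plus $O(R^{\min\{k_0+1,2k_1\}+1})$. It is also all that Lemma~\ref{lem:scaling_reduced} and Theorems~\ref{thm:main} and~\ref{thm:scaling} actually use, namely positivity of $\alpha$ near $0^+$ and its leading exponent in each case. Your exact identity $\alpha(R)=\sum_{k,j}2(j-k)s_kt_jR^{k+2j-1}$ is a cleaner device than the paper's truncated products: it shows at once why the $(1,1)$ term cancels and where every correction term comes from.
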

\begin{proof}
    These follow from direct expansion:
    \begin{align}
        h(R) &= \frac{2S'(R)}{T'(R)} \\
        &= \frac{2 c_{\star, 1} c_1 + O(R)}{2 c_1^2 R + O(R^2)} \\
        &= \frac{c_{\star, 1}}{c_1} R^{-1} + o(R^{-1}), \quad R\to +0,
    \end{align}
    and 
    \begin{align}
        \alpha(R) &= S(R) T'(R) - 2 S'(R) T(R) \\
        &= (c_{\star, 1} c_1 R + k_0 ! c_{\star, k_0} c_{k_0} R^{k_0} + O(R^{k_0 + 1}))  (2c_1^2 R + 2 k_1\cdot k_1 ! c_{k_1}^2 R^{2 k_1 - 1} + O(R^{2 k_1})) \\
        &\quad -2 (c_{\star, 1} c_1 + k_0 \cdot k_0 ! c_{\star, k_0} c_{k_0} R^{k_0 -1} + O(R^{k_0}))
        (c_1^2 R^2 + k_1 ! c_{k_1}^2 R^{2k_1} + O(R^{2k_1 + 1})) \\
        &= -2 (k_0 - 1) k_0! c_{\star, k_0} c_{k_0} c_1^2 R^{k_0 + 1} + 2 (k_1 - 1) k_1! c_{\star, 1} c_1 c_{k_1}^2 R^{2 k_1} + O(R^{\max \{k_0 + 1, 2 k_1\} + 1}),
    \end{align}
    as $ R \to +0$.
\end{proof}
Now we state the following lemma, which states that Theorem 1 of \citet{lobry1998tykhonov} can be applied to the system \eqref{eq:singular}. In the following, we introduce $(\hat{R}_{\tau_f})_{\tau_f \geq 0}$ and $(\hat{a}_{\tau_s})_{\tau_s \geq 0}$ as a solution of a differential equation, then study its dynamics.
\begin{lemma}\label{lem:hypotheses}
    Under Assumptions \ref{ass:link}-\ref{ass:bar_a},
    for an open set $D = (-1, 1)\times (0, \infty) \in \mathbb{R}^2$ and any $M > \bar a$, all  the following hypotheses (H1) - (H5) hold. 
    \begin{enumerate}
        \item[(H1)]For any fixed $a\in(0, \infty)$, the fast equation
        \begin{align} \label{eq:fast}
            \frac{d\hat{R}_{\tau_f}}{d\tau_f}
            =\bar{f}(\hat{R}_{\tau_f}, a)\quad 
            \tau_f = \tau_s / \varepsilon
        \end{align}
        has a unique solution $(\hat{R}_{\tau_f})_{\tau_f}$ with prescribed initial conditions.

        \item[(H2)] There exits some $\delta > 0$  such that, for $I_a = [\bar{a} -\delta, M]$, there exits some function $\xi:I_a\to\mathbb{R}$ such that for any $a \in I_a$, $R = \xi(a)$ is an isolated root of an equation $\bar{f}(R, a) = 0$ and $\mathcal L := \{ (\xi(a), a); a \in I_a \} \in D$ holds.

        \item[(H3)]For any $a\in I_a$, $R = \xi(a)$ is an asymptotically stable equilibrium point of the fast equation, and we can take the basin of attraction of $R = \xi(a)$ uniformly over $I_a$.

        \item[(H4)] The slow equation  
        \begin{align} \label{eq:slow}
            \frac{d\hat{a}_{\tau_s}}{d\tau_s} = \bar{g}(\xi(\hat{a}_{\tau_s}), \hat{a}_{\tau_s})
        \end{align}
        defined on $\mathring I_a = \mathrm{int}\ I_a$ has a unique solution $(\hat{a}_{\tau_s})_{\tau_s}$ with prescribed initial conditions.

        \item[(H5)] $\bar{a}\in \mathring I_a$ holds, and the point $R(0) = 0$ is in the basin of attraction of  the equilibrium point $R = \xi(\bar{a})$ in the fast equation \eqref{eq:fast}.
    \end{enumerate}
\end{lemma}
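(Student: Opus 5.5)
The plan is to make the fast vector field explicit and read off all five hypotheses from it. Since $T'(R)=\sum_{k\ge1}2k\,k!\,c_k^2R^{2k-1}>0$ on $(0,1)$, we can factor
\begin{align}
\bar f(R,a)=\frac{1}{2\Lambda_f}\,a(1-R^2)\,T'(R)\bigl(h(R)-a\bigr),\qquad R\in(0,1).
\end{align}
So for fixed $a>0$ the sign of $\bar f$ on $(0,1)$ is the sign of $h(R)-a$. (H1) and the existence and uniqueness part of (H4) should be routine. $\bar f$ and $\bar g$ are analytic in $(R,a)$ on $D$, because the Hermite series converge absolutely by Assumptions~\ref{ass:link} and \ref{ass:activation}, so they are locally Lipschitz and Picard--Lindel\"of applies. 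For (H4) we additionally need $\xi$ to be $C^1$, and this comes out of the construction below.

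First we construct $\xi$ for (H2). By Lemma~\ref{lem:h_alpha}, $h(R)=\frac{c_{\star,1}}{c_1}R^{-1}+o(R^{-1})\to+\infty$ as $R\to+0$, using $c_{\star,1}c_1>0$. By the definition of $R_h$, $h'$ has no zero on $(0,R_h)$, so $h$ is strictly decreasing there. Hence $h:(0,R_h)\to(h_{\inf},\infty)$ is a $C^1$ bijection with $C^1$ inverse, by the inverse function theorem, where $h_{\inf}=\lim_{R\uparrow R_h}h(R)$. Assumption~\ref{ass:bar_a} gives $\bar a=h(R_0)$ with $R_0<R^\star\le R_h$, so $\bar a>h_{\inf}$. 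We choose $\delta>0$ with $\bar a-\delta>h_{\inf}$ and set $\xi:=h^{-1}$ on $I_a=[\bar a-\delta,M]$. Then $\xi(I_a)=[h^{-1}(M),h^{-1}(\bar a-\delta)]=:[r_-,r_+]$ is a compact subinterval of $(0,R_h)$, so $\mathcal L\subset D$. The root is isolated because $h-a$ changes sign strictly at $\xi(a)$ and $h$ is injective on $(0,R_h)$. We also have $\bar f(0,a)=a c_{\star,1}c_1/\Lambda_f>0$, so $R=0$ is not a root.

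The step I expect to be the main obstacle is (H3) with a uniform basin, together with (H5). On $(0,\xi(a))$ we have $h>a$, so $\bar f>0$. On $(\xi(a),R_h)$ we have $h<a$, so $\bar f<0$. Hence $\xi(a)$ is asymptotically stable from both sides in one dimension, and it attracts every initial point of $(0,R_h)$. For uniformity on the left, we use continuity of $2S'(R)-aT'(R)$ on the compact set $[-1/2,0]\times I_a$, where it equals $2c_{\star,1}c_1>0$ at $R=0$. This gives an $\eta>0$ such that $\bar f(R,a)>0$ for all $R\in[-\eta,0]$ and all $a\in I_a$. For uniformity on the right, take $\eta'=(R_h-r_+)/2$. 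Then for all $a\in I_a$ the interval $U=(-\eta,r_++\eta')$ contains $\xi(a)$ and is contained in the basin of $\xi(a)$: $\bar f>0$ on $(-\eta,\xi(a))$, $\bar f<0$ on $(\xi(a),r_++\eta')$, and trajectories in one dimension cannot cross equilibria. Finally, (H5) follows because $\bar a\in\mathring I_a$ by the choice of $\delta$ and $M>\bar a$, and because $0\in U$, so $R(0)=0$ lies in the uniform basin. The only delicate points are handling the case $R_h=1$, where $h_{\inf}$ may be attained at the boundary, and keeping the bounds uniform in $a$ via compactness of $I_a$.
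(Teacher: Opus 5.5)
Your proof is correct. For (H1), (H2), (H4) and (H5) it follows the paper's route:
\begin{itemize}
\item factor $\bar f$ through $h$;
\item invert $h$ on its decreasing branch near $R=0^+$ to get $\xi=h^{-1}$, which is smooth by the inverse function theorem;
\item apply Picard--Lindel\"of for (H1) and (H4);
\item obtain (H5) from $\bar f(\cdot,a)>0$ on $[0,\xi(a))$.
\end{itemize}

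The genuine difference is in (H3). The paper linearizes. It computes
$\partial_R\bar f(\xi(a),a)=\frac{1}{2\Lambda_f}(1-\xi(a)^2)T'(\xi(a))h(\xi(a))h'(\xi(a))<0$.
It then uses compactness of $I_a$ to find a uniform $\theta$ with $\partial_R\bar f<0$ on $(\xi(a)-\theta,\xi(a)+\theta)$, and takes this as the uniform basin. Since $R=0$ need not lie in that small neighbourhood, the paper treats it separately in (H5).

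You instead read stability directly off the sign change of $h(R)-a$ at $\xi(a)$. This is a one-dimensional monotonicity argument that never passes through $\partial_R\bar f$. It yields a single explicit interval $U\ni 0$ that lies in the basin of every $\xi(a)$, $a\in I_a$. So the uniformity in (H3) and the basin condition in (H5) come from the same set. The paper's linearization additionally gives local exponential attraction, which (H3) as stated does not require. You also rightly observe that only $R_h$ is needed here, not $R^\star=\min\{R_h,R_\alpha\}$; $R_\alpha$ matters only for the subsequent slow flow.

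One small repair is needed. $h_{\inf}$ can be negative, so besides $\bar a-\delta>h_{\inf}$ you must also impose $\delta<\bar a$. Several of your steps need $a>0$ on $I_a$:
\begin{itemize}
\item your sign arguments, since for $a<0$ the signs flip and $\xi(a)$ becomes repelling;
\item the inclusion $\mathcal L\subset D$;
\item isolatedness of the root, since $\bar f(\cdot,0)\equiv 0$.
\end{itemize}
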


\begin{proof}

\noindent{\bf (H1)}
For fixed $a$, $\bar{f}(R,a)$ is a polynomial of $R$, and especially, $C^\infty$ in $D$. So, (H1) follows from the Picard-Lindel\"of theorem.

\medskip
\noindent{\bf (H2)}
From Assumption \ref{ass:bar_a} and Lemma \ref{lem:h_alpha}, if we take $\delta>0$ sufficiently small, $I_a := [\bar{a} - \delta, M] \subset h((0, R^\star))$ holds.
Since $1-R^2>0$ and $T'(R) \ne 0$ when $R\ne0$, we have
\[
    \bar{f}(R,a)=0
    \quad\Longleftrightarrow\quad
    2S'(R)  - T'(R) a = 0
    \quad\Longleftrightarrow\quad
    a = h(R)=\frac{2S'(R)}{T'(R)}.
\]
From $I_a \subset h((0, R^\star))$ and Lemma \ref{lem:h_alpha}, there exists some closed interval $I_R \subset (0, R^\star)$ such that $h:I_R \to I_a$ is monotonically decreasing, and therefore, bijective.
Thus, the inverse function $\xi := h^{-1}: I_a \to I_R$ exists. Also, by its definition, $\bar{f}(\xi(a), a) = 0$ holds for $a\in I_a$, and we obtain $\mathcal L = \{ (R, a); \bar{f}(\xi(a), a) = 0,   a \in I_a \} \subset D$ from $I_R \subset (0, R^\star) \subset (-1, 1)$ and $I_a = [\bar a - \delta, M) \subset (0, \infty)$.

\medskip
\noindent{\bf (H3)}
We obtain that, for $R=\xi(a),   a\in I_a$
\begin{align}
    \partial_R \bar{f}(R,a)\left |_{R=\xi(a)} \right.
    &= \frac{1}{2 \Lambda_f} a (1 - R^2) \{ 2 S''(R) - a T''(R) \}   \\
    &= \frac{a (1 - R^2)}{\Lambda_f} \cdot \frac{S''(R) T'(R) - S'(R) T''(R)}{T'(R)} \\
    &= \frac{1}{\Lambda_f}S'(R) (1 - R^2) h'(R) \\
    &= \frac{1}{2\Lambda_f} (1 - R^2) T'(R) h(R) h'(R) \\
    &= \frac{1}{2\Lambda_f} (1 - \xi(a)^2) T'(\xi(a)) h(\xi(a)) h'(\xi(a)).
\end{align}
From $I_a \subset h((0, R^\star))$, we have $\xi(a) \in (0, R^\star)$ for any $a \in I_a$. Then, it follows that
\[
    1-\xi(a)^2 > 0,\quad
    T'(\xi(a)) > 0, \quad
    h(\xi(a))>0,\quad h'(\xi(a))<0.
\]
Therefore, $\partial_R \bar f(R,a) \left |_{R = \xi(a)} \right. < 0$ holds for any $a\in I_a$.
Hence, for any fixed $a\in I_a$,
$R=\xi(a)$ is an asymptotically stable equilibrium point of the fast equation.
Now, from the compactness of $I_a$, there exits some $\theta > 0$, such that, for any $a\in I_a$ and $R \in (\xi(a) - \theta, \xi(a) + \theta)$, $ \partial_R \bar{f}(R, a) < 0$ holds.
This means that we can take the basin of attraction of $R=\xi(a)$ uniformly over $I_a$.
 
\medskip
\noindent{\bf (H4)}
We can prove $\xi:\mathring I_a\to\mathring I_R$ is $C^\infty$ from the inverse function theorem. Then, the function $a \mapsto \bar{g}(\xi(a),a)$ is also $C^\infty$ in $\mathring I_a$. Hence, the slow equation \eqref{eq:slow}
has a unique solution from the Picard-Lindelöf theorem.

\medskip
\noindent{\bf (H5)}
By the definition of $I_a$, $\bar{a}\in \mathring I_a$.
Also, for any $a\in \mathring I_a$,  $\bar{f}(R, a) > 0$ for $0\le R<\xi(a)$, and $\bar{f}(\xi(a), a) = 0$. This means  $R=0$ is in the basin of attraction of $R=\xi(a)$.
\end{proof}
Together with Lemma \ref{lem:hypotheses} and Theorem 1 of \citet{lobry1998tykhonov}, we obtain the following theorem:
\begin{theorem}\label{thm:main2}
    Let each solution of the fast equation \eqref{eq:fast} with an initial value $0$, and the slow equation \eqref{eq:slow} with an initial value $\bar{a}$, be $\hat{R}_{\tau_f}$, $\hat{a}_{\tau_s}$, respectively. Let $T > 0$ be a maximal positive interval of definition of the slow equation \eqref{eq:slow}.
    Then, under Assumptions \ref{ass:link}-\ref{ass:bar_a}, the following holds;
    for any $\eta > 0$, there exists some $\epsilon_\eta > 0$ with the property that,  if $\epsilon < \epsilon_\eta$, the solution of \eqref{eq:singular} is defined for at least $\tau_s \in [0, T]$, and there exits $L > 0$ such that $\varepsilon L < \eta$, $| R_{\varepsilon \tau_f}^\varepsilon - \hat{R}_{\tau_f} | < \eta$ for $0 \le \tau_f \le L$, $|R_{\tau_s}^\varepsilon - \xi(\hat{a}_{\tau_s}) | < \eta$ for $ \varepsilon L \le \tau_s \le T$ and $|a_{\tau_s}^\varepsilon - \hat{a}_{\tau_s}| < \eta$ for $ 0 \le \tau_s \le T$. 
\end{theorem}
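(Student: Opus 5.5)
This is essentially Tikhonov's theorem in the form of Theorem 1 of \citet{lobry1998tykhonov}. The plan is to verify that theorem's hypotheses for the singularly perturbed system \eqref{eq:singular} and then translate its conclusion into the notation used here.

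We work on the open set $D=(-1,1)\times(0,\infty)$ and on the compact slow interval $I_a=[\bar a-\delta,M]$ with slow manifold $\mathcal L=\{(\xi(a),a)\}$. On these, Lemma \ref{lem:hypotheses} supplies (H1)--(H5): existence and uniqueness for the fast equation \eqref{eq:fast}; an isolated root $\xi=h^{-1}$ on the decreasing branch of $h$; uniform asymptotic stability of $\xi(a)$, since $\partial_R\bar f<0$ there; well-posedness of the reduced equation \eqref{eq:slow}; and $R=0$ lying in the basin of $\xi(\bar a)$. The smoothness of $\bar f,\bar g$ in $\varepsilon$ and in the state, which Lobry's theorem also requires, is immediate because $f$ and $g$ are analytic (power series with absolutely convergent coefficients on $[-1,1]$).

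The key steps follow Lobry's argument in order.

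1. \emph{Boundary layer.} In the fast time $\tau_f=\tau_s/\varepsilon$, the $a$-component moves by only $O(\varepsilon\tau_f)$. Continuous dependence on parameters then gives $|R^\varepsilon_{\varepsilon\tau_f}-\hat R_{\tau_f}|<\eta$ on $[0,L]$.
   1. First choose $L$ so that $\hat R_L$ lies within $\eta/2$ of $\xi(\bar a)$, which is possible by (H5).
   2. Then take $\varepsilon$ small so that $\varepsilon L<\eta$.

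2. \emph{Capture near $\mathcal L$.} By the uniform basin of (H3), with radius $\theta$ and $\partial_R\bar f<0$ on a tube around $\mathcal L$, the tube $\{|R-\xi(a)|<\eta\}$ is forward invariant for small $\varepsilon$. On its boundary, the fast drift toward $\mathcal L$ is of order $1/\varepsilon$, which dominates the $O(1)$ motion of the graph $\xi(a(\tau_s))$.

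3. \emph{Slow tracking.} Inside the tube, $da/d\tau_s=\bar g(R,a)=\bar g(\xi(a),a)+O(\eta)$. A Gronwall comparison with \eqref{eq:slow} then yields $|a^\varepsilon_{\tau_s}-\hat a_{\tau_s}|<\eta$ on $[0,T]$ for any compact $T$ strictly inside the maximal interval. This also keeps the solution inside $I_a$, so it exists up to $T$.

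I expect the main obstacle to be the uniformity required in step 2. Two points need care. First, the tube must stay inside $D$ and inside the region where the stability estimate holds; this uses compactness of $I_a$ and $\xi(I_a)\subset(0,R^\star)$. Second, the slow solution must not leave $\mathring I_a$ before time $T$. That is why $T$ is taken within the maximal interval of \eqref{eq:slow} on $\mathring I_a$, with $M$ arbitrary. Since these are exactly the hypotheses packaged in Theorem 1 of \citet{lobry1998tykhonov}, the proof reduces to invoking that theorem once Lemma \ref{lem:hypotheses} is in hand.
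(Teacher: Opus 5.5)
Your proposal is correct and follows the same route as the paper, which proves this theorem by verifying (H1)--(H5) in Lemma \ref{lem:hypotheses} and then invoking Theorem 1 of \citet{lobry1998tykhonov}. Your boundary-layer, capture and tracking sketch only unpacks that theorem's internal argument, and your reading of $T$ as any time strictly inside the maximal interval of the slow solution is the right interpretation of the statement's loosely phrased ``maximal positive interval.''
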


Next, based on the result above, we study the asymptotic behavior of $(\xi(\hat{a}_{\tau_s}), \hat{a}_{\tau_s})$ for $\tau_s \to\infty$ when the slow equation \eqref{eq:slow} is defined on $[\bar a - \delta, \infty)$ with $\delta > 0$ used in (H2) of Lemma \ref{lem:hypotheses}. We show the following lemma.
\begin{lemma}\label{lem:scaling_reduced}
    For the solution $\hat{a}_{\tau_s}$ of the slow equation \eqref{eq:slow} defined on $[\bar a - \delta, \infty)$ with $\delta > 0$ used in (H2) of Lemma \ref{lem:hypotheses}, 
    the following holds:
    \begin{align}\label{eq:reduced}
        \lim_{\tau_s\to\infty}\hat{a}_{\tau_s} = \infty, \quad
        \lim_{\tau_s\to\infty}\xi(\hat{a}_{\tau_s})= 0, \quad
        \lim_{\tau_s\to\infty}\hat{a}_{\tau_s}  \xi(\hat{a}_{\tau_s}) 
        = \frac{c_{\star, 1}}{c_1},
    \end{align}
    where the scaling law changes for each case of Assumption \ref{ass:alpha}:
    \begin{itemize}
        \item[(i)]  $\hat{a}_{\tau_s} = \Theta(\tau_s^{1 / 2k_1}), \quad
        \xi(\hat{a}_{\tau_s}) = \Theta(\tau_s^{-1 / 2k_1})$,
        
        \item[(ii)] $\hat{a}_{\tau_s} = \Theta(\tau_s^{1 / (k_0 + 1)}), \quad
        \xi(\hat{a}_{\tau_s}) = \Theta(\tau_s^{-1 / (k_0 + 1)})$,
    \end{itemize}
    when $\tau_s\to\infty$.
\end{lemma}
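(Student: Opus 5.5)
The plan is to rewrite the slow equation \eqref{eq:slow} as a scalar ODE in $R := \xi(\hat a_{\tau_s})$ and then read off its asymptotics from Lemma \ref{lem:h_alpha}.

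\textbf{Reduction to an ODE in $R$.} On $\mathcal S_0^+$ we have $a = h(R)$. Then
\begin{align}
\Lambda_s \bar g(R,h(R)) = S(R) - h(R)T(R) = \frac{S(R)T'(R) - 2S'(R)T(R)}{T'(R)} = \frac{\alpha(R)}{T'(R)} .
\end{align}
Since $\hat a = h(R)$, we get $\dot R = \dot{\hat a}/h'(R)$, which gives
\begin{align}
\frac{dR}{d\tau_s} = \frac{\alpha(R)}{\Lambda_s\, T'(R)\, h'(R)} .
\end{align}

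\textbf{Signs on $(0,R^\star)$.} On this interval $T'>0$ and $h'<0$; the latter holds because $h'(R)\sim -(c_{\star,1}/c_1)R^{-2}$ near $0$ and $h'$ has no zero before $R_h$. Also $\alpha$ has no zero there. Near $0$, Lemma \ref{lem:h_alpha} shows $\alpha>0$: in case (i) the leading term is $2(k_1-1)k_1!c_{\star,1}c_1c_{k_1}^2R^{2k_1}>0$, and in case (ii) it is $-2(k_0-1)k_0!c_{\star,k_0}c_{k_0}c_1^2R^{k_0+1}>0$. Continuity then gives $\alpha>0$ on all of $(0,R^\star)$. Hence $\dot R<0$ along the whole trajectory starting at $R_0=\xi(\bar a)\in(0,R^\star)$. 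So $R$ decreases monotonically, stays in $(0,R^\star)$, and $\hat a = h(R)$ increases, which keeps the solution inside the domain $[\bar a-\delta,\infty)$; it exists globally, since $R$ stays bounded away from $R^\star$ and the right-hand side is smooth. The limit $R_\infty$ exists and must be a zero of the right-hand side in $[0,R_0)$. The only candidate is $0$, since the vector field is strictly negative on $(0,R_0]$; formally, if $R_\infty>0$ the velocity would be bounded away from zero on $[R_\infty,R_0]$, a contradiction. Therefore $\xi(\hat a_{\tau_s})\to0$, $\hat a_{\tau_s}=h(R)\to\infty$, and $\hat a\,\xi(\hat a) = R\,h(R)\to c_{\star,1}/c_1$ by Lemma \ref{lem:h_alpha}.

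\textbf{Scaling laws.} This is the main work: we need the leading-order expansion of the vector field near $R=0$. We have $T'(R)=2c_1^2R(1+o(1))$ and $h'(R)=-(c_{\star,1}/c_1)R^{-2}(1+o(1))$. The latter requires differentiating the expansion of $h$; this is justified because $S', T'$ are analytic power series, so $h(R) = (c_{\star,1}/c_1)R^{-1} + O(R^{p})$ with an analytic correction term and $p\geq 0$ (the $R^{-1}$ term is exact to leading order). Hence $T'h' \sim -2c_1c_{\star,1}R^{-1}$. Combined with $\alpha(R)\sim C R^{q}$, where $q=2k_1$ in case (i) and $q=k_0+1$ in case (ii) and $C>0$, this gives
\begin{align}
\frac{dR}{d\tau_s} = -\kappa R^{q+1}(1+o(1)), \qquad \kappa>0 .
\end{align}
Integrating $d(R^{-q})/d\tau_s = q\kappa(1+o(1))$ yields $R^{-q}\sim q\kappa\,\tau_s$. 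Thus $R=\Theta(\tau_s^{-1/q})$, and $\hat a = h(R)=\Theta(R^{-1})=\Theta(\tau_s^{1/q})$.

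The step I expect to need the most care is the sign and expansion bookkeeping for $h'$ together with the global argument that $\alpha$ and $h'$ keep their signs on the entire interval $(0,\xi(\bar a)]$. Both are secured by the choice of $R^\star$ in Assumption \ref{ass:bar_a}, but this should be stated explicitly; the $o(1)$ integration step is then routine via a comparison argument.
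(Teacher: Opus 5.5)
Your proof is correct and is essentially the paper's argument written in the coordinate $R=\xi(\hat a_{\tau_s})$ instead of $\hat a_{\tau_s}$. The paper uses the same identity $\bar g(\xi(a),a)=\alpha(\xi(a))/(\Lambda_s T'(\xi(a)))$, the positivity of $\alpha$ on $(0,R^\star)$ coming from Lemma~\ref{lem:h_alpha} in both cases of Assumption~\ref{ass:alpha}, and a comparison argument for $d\hat a_{\tau_s}/d\tau_s\sim K\hat a_{\tau_s}^{1-q}$, which is equivalent to your $dR/d\tau_s\sim-\kappa R^{q+1}$ via $\hat a=h(R)\sim(c_{\star,1}/c_1)R^{-1}$. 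One ordering point: global existence comes from $dR/d\tau_s=O(R^{q+1})$ near $0$, so $R$ cannot reach $0$ (and $\hat a$ cannot blow up) in finite time; staying away from $R^\star$ is not what secures it, but this already follows from your own expansion, and your argument that the limit of $R$ must be $0$ is more explicit than the paper's appeal to ``standard comparison arguments.''
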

\begin{proof}
For $h:h^{-1}([\bar a - \delta, \infty)) \to [\bar a - \delta, \infty)$, from Lemma \ref{lem:h_alpha}, we have $R\to+0 \iff h(R) \to\infty$.
Thus, the inverse function $R = h^{-1}(a) = \xi(a)$ is
\begin{align}\label{eq:inverse}
    \xi(a) = \frac{c_{\star, 1}}{c_1}a^{-1} + o(a^{-1}), \quad a\to\infty.
\end{align}
We now consider the slow equation 
\begin{align}\label{eq:slow_detail}
\frac{d \hat a_{\tau_s}}{d\tau_s}
= \bar g(\xi(\hat a_{\tau_s}), \hat a_{\tau_s})
= \frac{\alpha(\xi(\hat a_{\tau_s}))}{\Lambda_s T'(\xi(\hat a_{\tau_s}))},
\quad  \hat a_{0} = \bar{a} >0.
\end{align}
Combining  Lemma \ref{lem:h_alpha}, \eqref{eq:inverse} and $T'(R) = 2c_1^2 R + O(R^2)$,  
in the case (i) of Assumption \ref{ass:alpha}, we have
\begin{align}
    \frac{d \hat a_{\tau_s}}{d\tau_s} &= \frac{1}{\Lambda_s} \cdot (k_1 - 1) k_1! c_{k_1}^2 \frac{c_{\star, 1}}{c_1} (c_{\star, 1} / c_1)^{2k_1}\hat a_{\tau_s}^{-2k_1 + 1} + o(\hat a_{\tau_s}^{-2k_1 + 1}) \\
    &= K \hat a_{\tau_s}^{-2k_1 + 1} + o(\hat a_{\tau_s}^{-2k_1 + 1}), \quad \hat a_{\tau_s} \to \infty,
\end{align}
for some constant $K>0$. From  \eqref{eq:slow_detail} and Assumption \ref{ass:bar_a}, $\hat a(\tau_s)$ monotonically increases along $\mathcal S$. Then, $\hat{a}_{\tau_s}$ is defined for $\tau_s \ge 0$, since $\hat{a}_{\tau_s} \in [\bar a - \delta, \infty)$ holds for $\tau_s \ge 0$. By standard comparison arguments for scalar ODEs, we have $\hat{a}_{\tau_s}\to\infty$ with $\hat{a}_{\tau_s} = \Theta(\tau_s^{1 / (2k_1)})$ when $\tau_s \to \infty$. From \eqref{eq:inverse}, $\xi(\hat{a}_{\tau_s})\to0$ with $\xi(\hat{a}_{\tau_s}) = \Theta(\tau_s^{-1/2k_1})$ also holds. The last equality of \eqref{eq:reduced} follows from $a   \xi(a) = c_{\star, 1} / c_1 + o(1), \quad a \to \infty$. We can derive another scaling law for (ii) in a similar way.
\end{proof}
Since we can take $T > 0$ arbitrarily large if we set $M > 0$ sufficiently large, we obtain the following statement.
\begin{corollary}\label{cor:epsilon_to_zero}
    $R_{\tau_s}^0,   a_{\tau_s}^0$ can be defined for any $\tau_s \in [0, \infty)$, and it holds that
    \begin{align}
        a_{\tau_s}^0  = \hat{a}_{\tau_s},
        \quad
        R_{
        \tau_s}^0 = \xi(\hat{a}_{\tau_s}),
    \end{align}
    for any $\tau_s > 0$.
\end{corollary}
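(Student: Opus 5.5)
The plan is to obtain Corollary~\ref{cor:epsilon_to_zero} by letting $\eta\to0$ in Theorem~\ref{thm:main2}, for each fixed finite horizon, and then letting the horizon grow through the free parameter $M$ of Lemma~\ref{lem:hypotheses}.

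\textbf{Step 1: the slow solution is global and increasing.} By Lemma~\ref{lem:scaling_reduced}, the slow solution $\hat a_{\tau_s}$ with $\hat a_0=\bar a$ exists for all $\tau_s\ge0$. It is monotonically increasing and tends to $\infty$. For any target horizon $\bar T>0$, set $M:=\hat a_{\bar T}+1$. Then $\hat a_{\tau_s}\in[\bar a,M)\subset\mathring I_a$ for all $\tau_s\in[0,\bar T]$. So the maximal interval of definition $T$ of \eqref{eq:slow} on $\mathring I_a=(\bar a-\delta,M)$ satisfies $T>\bar T$.

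\textbf{Step 2: apply Theorem~\ref{thm:main2}.} Lemma~\ref{lem:hypotheses} holds with this $M$, so Theorem~\ref{thm:main2} applies. Fix $\tau_s\in(0,\bar T]$ and $\eta>0$. For every $\varepsilon<\epsilon_\eta$ we get $|a^\varepsilon_{\tau_s}-\hat a_{\tau_s}|<\eta$. The same estimate gives $|R^\varepsilon_{\tau_s}-\xi(\hat a_{\tau_s})|<\eta$, provided $\tau_s\ge\varepsilon L$.

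\textbf{Step 3: control the boundary layer (the main technical point).} The interval $[0,\varepsilon L]$ depends on $\varepsilon$. However, Theorem~\ref{thm:main2} gives $\varepsilon L<\eta$, so for a fixed $\tau_s>0$ it suffices to take $\eta<\tau_s$; then automatically $\tau_s>\varepsilon L$. Hence for every $\eta\in(0,\tau_s)$ there is $\epsilon_\eta$ such that both estimates hold for all $\varepsilon<\epsilon_\eta$. This is exactly convergence as $\varepsilon\to+0$, so the limits $a^0_{\tau_s}$ and $R^0_{\tau_s}$ exist and
\begin{align}
a^0_{\tau_s}=\hat a_{\tau_s},\qquad R^0_{\tau_s}=\xi(\hat a_{\tau_s}).
\end{align}
This is why the corollary is stated only for $\tau_s>0$. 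At $\tau_s=0$ the definition is trivial, since $a^0_0=\bar a$ and $R^0_0=0$ by the initial conditions.

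\textbf{Step 4: independence from $M$.} Since $\bar T$ is arbitrary, the identities hold for every $\tau_s\in(0,\infty)$. The limits do not depend on the choice of $M$. Each limit is a limit of the same family $R^\varepsilon_{\tau_s}$, $a^\varepsilon_{\tau_s}$. In addition, $\xi$ is the same function on overlapping intervals, namely the local inverse of $h$ on $(0,R^\star)$, so different choices of $M$ yield consistent values.

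The only care needed is in Step~1. We must check that enlarging $M$ keeps the $\delta$ from (H2) fixed, so that $\bar a$ stays in the interior of $I_a$. We must also check that $\xi$ is still defined on $[\bar a-\delta,M]$. Both hold because $h$ is decreasing on $(0,R^\star)$ with $h(R)\to\infty$ as $R\to+0$, by Lemma~\ref{lem:h_alpha}.
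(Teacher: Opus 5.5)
Your proposal is correct and follows essentially the same argument as the paper. You enlarge $M$ so that the maximal time $T$ of the slow equation exceeds the fixed $\tau_s$, apply Theorem~\ref{thm:main2} with $\eta<\tau_s$ so that $\varepsilon L<\eta<\tau_s$ puts $\tau_s$ outside the boundary layer, and let $\eta\to0$. Your explicit choice $M=\hat a_{\bar T}+1$ (using monotonicity of $\hat a_{\tau_s}$), the separate treatment of $\tau_s=0$, and the check that the limits do not depend on $M$ spell out details the paper leaves implicit.
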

\begin{proof}
    Fix $\tau_s > 0$. With sufficiently large $M$, we can set $T > \tau_s$. Then, from Theorem \ref{thm:main2}, for any $0 < \eta < \tau_s$, with sufficiently small $\varepsilon>0$, $|R_{\tau_s}^\varepsilon - \xi(\hat{a}_{\tau_s}) | < \eta$ holds. This directly implies $\lim_{\varepsilon\to+0}|R_{\tau_s}^\varepsilon - \xi(\hat{a}_{\tau_s}) | = 0$. We can prove similarly in the case of $a_{\tau_s}^\varepsilon$.
\end{proof}
Together with these results, we finally prove the main theorems.

\begin{proof}[Proof of Theorem \ref{thm:main}]
    From Lemma \ref{lem:scaling_reduced} and Corollary \ref{cor:epsilon_to_zero}, we obtain
    \begin{align}      \lim_{\tau_s\to\infty} a_{\tau_s}^0 = \lim_{\tau_s\to\infty}\hat{a}_{\tau_s} = \infty, \quad \lim_{\tau_s\to\infty}R_{\tau_s}^0 = \lim_{\tau_s\to\infty}\xi(\hat{a}_{\tau_s}) = 0,
    \end{align}
    and 
    \begin{align}       \lim_{\tau_s\to\infty} R_{\tau_s}^0 a_{\tau_s}^0 
    = \lim_{\tau_s\to\infty} \xi(\hat{a}_{\tau_s})  \hat{a}_{\tau_s}
    = \frac{c_{\star, 1}}{c_1}.
    \end{align}
\end{proof}
Quite similarly, Theorem \ref{thm:scaling} directly follows from the combination of Lemma \ref{lem:scaling_reduced} and Corollary \ref{cor:epsilon_to_zero}.

\section{Additional simulation} \label{sec:additional_simulation}

We perform numerical simulations of the ODE \eqref{eq:original} for multiple
choices of activation and link function coefficients $(c_k,c_{\star,k})$.
In particular, we consider the case with $\bar{k}_\star = \bar{k} = 7$ and fix a part of coefficients as $c_1 = c_2 = c_3 = 1$ and $ c_{\star, 1} = 1$. 
Then, we vary the rest of coefficients as $c_{2,\star}, c_{3,\star} \in \{-5, -1.67, 1.67, 5\}$.

Figure \ref{fig:dynamics_parameters} shows the results.
In all cases satisfying the condition of the initialization in Assumption \ref{ass:symmetric_init} identified in the theoretical analysis, we consistently observe that the dynamics of $(R_\tau, a_\tau)$ follows the result in Section \ref{sec:unlearning}.
Consequently, the result validates the overview of the feature unlearning along attracting branches of the critical manifold.

\begin{figure}[htbp]
  \centering
  \begin{minipage}{0.99\linewidth}
    \centering
    \includegraphics[width=\linewidth]{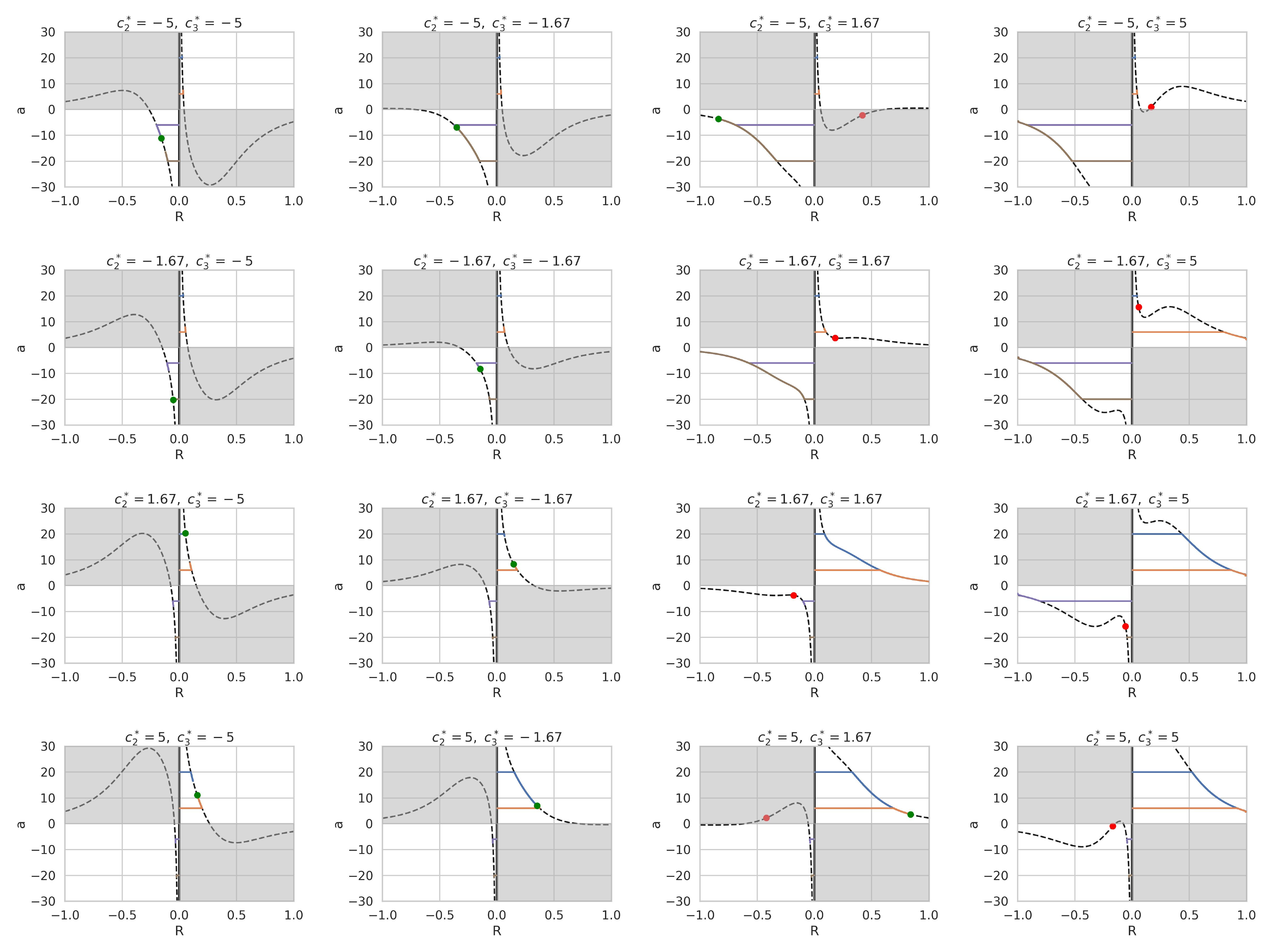}
  \end{minipage} 

  \caption{Dynamics of $(R_\tau, a_\tau)$ by the ODE \eqref{eq:original} with the various coefficients of the link function and the activation function. \label{fig:dynamics_parameters}}
\end{figure}

\bibliography{main}
\bibliographystyle{apecon}

\end{document}